\documentclass{article}

 \usepackage[preprint]{neurips_2026}

\usepackage[utf8]{inputenc} 
\usepackage[T1]{fontenc}    
\usepackage{hyperref}       
\usepackage{url}            
\usepackage{booktabs}       
\usepackage{amsfonts}       
\usepackage{nicefrac}       
\usepackage{microtype}      
\usepackage[table]{xcolor}  

\usepackage{amsmath,amsfonts,bm}

\def\eqref#1{equation~\ref{#1}}

\def\1{\bm{1}}

\DeclareMathAlphabet{\mathsfit}{\encodingdefault}{\sfdefault}{m}{sl}
\SetMathAlphabet{\mathsfit}{bold}{\encodingdefault}{\sfdefault}{bx}{n}

\DeclareMathOperator*{\argmin}{arg\,min}

\usepackage{amsmath,amsthm,amssymb}
\usepackage{thmtools}
\usepackage{enumerate}
\usepackage{enumitem}
\usepackage{bm}
\usepackage{adjustbox}
\usepackage{multicol}
\usepackage{multirow}
\usepackage[capitalize,noabbrev]{cleveref}
\Crefname{appsec}{Appendix}{Appendices}
\crefname{appsec}{Appendix}{Appendices}
\Crefname{appsubsec}{Appendix}{Appendices}
\crefname{appsubsec}{Appendix}{Appendices}
\Crefname{appsubsubsec}{Appendix}{Appendices}
\crefname{appsubsubsec}{Appendix}{Appendices}
\usepackage{algorithm}
\usepackage{algpseudocode}
\usepackage{wrapfig}
\usepackage{minted}
\usepackage{pifont}
\newcommand{\cmark}{\ding{51}}%
\newcommand{\xmark}{\ding{55}}%
\definecolor{RowHighlight}{gray}{0.95}

\newtheorem{theorem}{Theorem}[section]

\newtheorem{lemma}[theorem]{Lemma}
\newtheorem{proposition}[theorem]{Proposition}
\newtheorem{remark}[theorem]{Remark}

\newtheorem{corollary}[theorem]{Corollary}
\newtheorem{definition}{Definition}[section]

\newtheorem{assumption}{Assumption}[section]

\definecolor{GoogleRed}{RGB}{234, 67, 53}
\definecolor{GoogleBlue}{RGB}{66, 133, 244}
\definecolor{GoogleGreen}{RGB}{52, 168, 83}
\definecolor{ourlightblue}{RGB}{232, 240, 254}
\definecolor{KWRed}{RGB}{234, 67, 53}
\definecolor{KWBlue}{RGB}{0, 133, 200}
\definecolor{KWGreen}{RGB}{20, 152, 102}
\definecolor{KWlightblue}{RGB}{232, 240, 254}

\hypersetup{
colorlinks = true,
urlcolor = KWRed,
linkcolor = KWGreen,
citecolor = KWBlue,
}

\title{Deep Barycentric Regression for Optimal Transport Map Estimation and its Statistical Optimality}

\author{%
  Kunwoong Kim \\
  KAIST AI \\
  \texttt{kunwoong.kim@kaist.ac.kr} \\
  \And
  Insung Kong \\
  UNIST \\
  \texttt{kong@unist.ac.kr} \\
  \And
  Yongdai Kim \\
  Seoul National University \\
  \texttt{ydkim0903@gmail.com} \\
}

\begin{document}

\maketitle

\begin{abstract}
    The optimal transport (OT) map provides a geometric transformation for aligning probability distributions and has become a useful tool in machine learning.
    However, existing estimators of the OT map still exhibit a gap between sharp statistical guarantees and practical parametric estimation based on stable training objectives.
    Theoretical estimators achieve minimax optimal convergence rates, but they are typically nonparametric and can incur demanding implementation design or inference costs.
    Practical estimators are parametric and scalable, but their statistical guarantees remain underexplored, and their min-max, adversarial-like training objectives can be sensitive to optimization algorithms.
    We propose BROT (Barycentric Regression for OT), a simple two-step method that first computes the unregularized OT plan and then fits a deep neural network (DNN) to the induced barycentric targets by least-squares regression.
    Under standard regularity conditions, we prove that the DNN estimator of BROT attains the minimax convergence rate, when the ground-truth OT map is Lipschitz.
    Numerical studies on synthetic datasets and an image dataset show that BROT provides accurate map estimates, strong target distribution matching, and competitive transport costs, compared to existing estimation methods.
    Experiments on two downstream tasks, single-cell perturbation prediction and unsupervised domain adaptation, further suggest that the accurate estimation of BROT can translate into stronger task performance.
\end{abstract}

\section{Introduction}\label{sec:intro}

Optimal transport (OT) provides a geometric framework for comparing and aligning probability distributions.
Given two distributions and a cost function, the OT map defines a way to transport masses from one distribution (called the source) to the other (called the target) with the minimum expected cost.
As a result, the OT map not only quantifies a distributional discrepancy but also yields an explicit transportation, and has therefore been
widely used in applications such as domain adaptation, self-supervised learning, generative modeling, algorithmic fairness, covariate balancing, point cloud matching, and image translation, to name just a few \cite{peyre2020computationaloptimaltransport,caron2020unsupervised,10.5555/3305381.3305404,pmlr-v89-redko19a,pmlr-v97-gordaliza19a,lee2025unpaired,blanchet2024automatic,fan2023neural,korotin2023neural,kim2025fairness}.

\paragraph{Related works}

Recent theoretical works on estimation of the OT map have focused on recovering the ground-truth (population) OT map from i.i.d. data sampled from the source and target distributions.
Representative examples of statistically minimax optimal estimators include the wavelet-based estimator of \cite{hutter2020minimaxestimationsmoothoptimal}, which estimates the potential, whose gradient gives the OT map, and the $1$-nearest neighbor (1NN) estimator of \cite{manole2024pluginestimationsmoothoptimal}, which extends the OT plan (the optimal joint mass assignment between source and target data), to unseen data by nearest neighbor interpolation.
More broadly, \cite{Deb2021RatesOE} analyzed plug-in estimators based on smoothed densities, \cite{pooladian2023minimaxestimationdiscontinuousoptimal} obtained a minimax optimal estimator restricted to the semi-discrete setting (continuous source, discrete target), and \cite{10.1214/24-AOS2482} studied OT map estimation over general classes of convex functions without a practical training algorithm.
These works provide meaningful statistical guarantees, but the resulting estimators are mainly designed for theoretical analysis rather than practical use.
In particular, they are highly nonparametric and typically require density smoothing, basis construction, nearest neighbor search, or regularization tuning, making their implementation demanding (often nearly infeasible) and their inference costs high.
\cref{sec:baselines_details} covers more details about these existing theoretical studies.

On the algorithmic side, practical OT methods based on regularizations \cite{NIPS2013_af21d0c9,seguy2018large} and DNN parameterizations \cite{fan2023neural,korotin2023neural,pmlr-v119-makkuva20a,rout2022generative,choi2025overcoming,choi2025improving} have provided several ways to estimate OT maps or plans.
The Sinkhorn algorithm \cite{NIPS2013_af21d0c9} computes an entropically regularized transport plan between two empirical distributions, but it does not by itself provide a parametric transport map.
To obtain a map, \cite{seguy2018large} proposed a two-step procedure that first finds a regularized OT plan by stochastic optimization and then trains a DNN to fit the resulting regularized barycentric targets.
More recent parametric (DNN-based) estimators \cite{fan2023neural,korotin2023neural,pmlr-v119-makkuva20a,rout2022generative,choi2025overcoming,choi2025improving} parameterize transport maps, but they typically rely on min-max, adversarial-like learning objectives, which can be sensitive to optimization algorithms.
However, sharp finite-sample or minimax guarantees for the trained maps remain unknown for these practical estimators.

\paragraph{Our objective and proposed approach}

Bridging the gap between statistical optimality and practical feasibility raises a natural question: \textit{Can we construct a parametric estimator of the OT map that is both statistically optimal and practically feasible?}
In this paper, we aim to resolve this gap by proposing a learning algorithm of the OT map based on DNNs that attains statistical optimality (i.e., the minimax convergence rate) while remaining practically trainable by standard regression.
The proposed algorithm, BROT (\textit{Barycentric Regression for OT}), proceeds in two steps.
First, we solve the unregularized empirical Kantorovich problem by a standard linear program to obtain the OT plan between the source and target empirical distributions.
We then use this plan to form barycentric targets for the source data.
Second, we train a Lipschitz-constrained DNN to fit these targets by a standard least-squares regression.
In other words, the barycentric targets serve as output variables for the source data, thereby reducing OT map estimation to a standard regression problem.
\cref{fig:diagram} depicts the overall framework.

\begin{figure}[h!]
    \centering
    \includegraphics[width=0.99\linewidth]{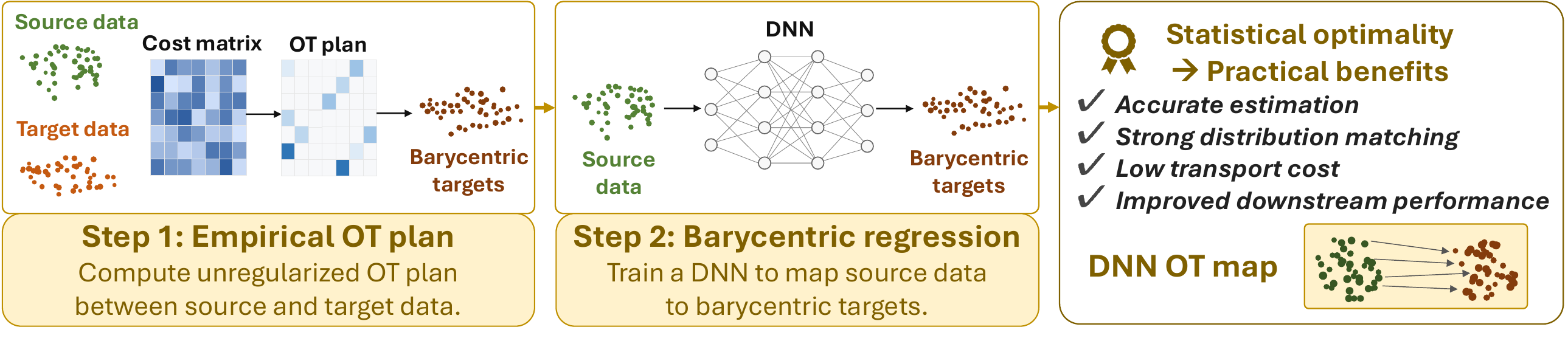}
    \caption{
    \textbf{Outline of our proposed framework BROT.}
    We first compute the unregularized OT plan,
    and
    train a DNN to fit source data to barycentric targets obtained from the OT plan.
    }
    \label{fig:diagram}
\end{figure}

Our main contributions are summarized as follows:
\begin{itemize}[topsep=0pt, leftmargin=1.5em, labelsep=0.5em]
    \item[$\diamond$] \textbf{Statistically optimal and practically feasible OT map estimation:}
    We propose BROT, a barycentric regression algorithm for estimating the OT map using DNNs based on the unregularized OT plan.
    Under standard regularity conditions, we prove that the DNN estimator of BROT attains the minimax convergence rate when the ground-truth OT map is Lipschitz, while it remains trainable by a standard least-squares regression without min-max optimization.
    To our knowledge, the DNN estimator of BROT is the first parametric estimator that is simultaneously minimax optimal and practically feasible.

    \item[$\diamond$] \textbf{Empirical performance in statistical convergence, map quality, and downstream utility:}
    We numerically verify that the estimation error (i.e., the difference between the trained map and the ground-truth OT map) of BROT follows the theoretical minimax convergence rate, and is lower than the estimation errors of the existing baseline methods.
    On synthetic and high-dimensional image datasets, BROT attains the lowest Wasserstein distance between the transported source and target distributions (i.e., produces a valid transport map), while maintaining a comparable transport cost compared with the baselines.
    On two downstream applications, single-cell perturbation prediction and unsupervised domain adaptation, BROT achieves the best or competitive performance compared to the baselines.
\end{itemize}

The remainder of this paper is organized as follows.
\cref{sec:preliminary} introduces preliminaries on the OT map and OT plan, and \cref{sec:DNN_algorithm} presents our proposed algorithm, BROT.
\cref{sec:minimax} states the main theoretical result, showing that under standard regularity conditions, the DNN estimator of BROT attains the minimax convergence rate for Lipschitz ground-truth OT map.
\cref{sec:exp_cdot} reports the results of three sets of experiments showing practical benefits of BROT:
(i) superior finite-sample performance for estimating the OT map,
(ii) validity of the trained map as a transport map (i.e., whether the transported source distribution is close to the target distribution) with low transport cost,
and (iii) improved downstream task performance on two real-world applications.
\cref{sec:discussion} concludes with a summary of our findings and directions for future work.

\section{Preliminaries}\label{sec:preliminary}

\paragraph{Optimal transport map and its relaxation}

Let $P$ and $Q$ denote the source and target probability distributions, respectively, supported on $\Omega \subseteq \mathbb{R}^{d}$.
Suppose we observe data $X_1,\dots,X_n \stackrel{\textup{i.i.d.}}{\sim} P$ and $Y_1,\dots,Y_m \stackrel{\textup{i.i.d.}}{\sim} Q.$
The corresponding empirical source and target distributions are defined as
$ P_n (\cdot) := \frac{1}{n}\sum_{i=1}^n \delta_{X_i} (\cdot) $
and
$ Q_m (\cdot) := \frac{1}{m}\sum_{j=1}^m \delta_{Y_j} (\cdot), $
respectively.
For a (measurable) map $\mathbf{T}:\Omega\to\Omega$, the push-forward measure of $P$ induced by $\mathbf{T}$ is defined by $ \mathbf{T}_{\#}P(\cdot)=P(\mathbf{T}^{-1}(\cdot)). $
In fact, $\mathbf{T}_{\#}P$ is the distribution of $\mathbf{T}(X)$ when $X \sim P.$
We say $\mathbf{T}$ is a \textit{transport map} from $P$ to $Q$ when it satisfies $\mathbf{T}_{\#}P=Q.$
The \textit{optimal transport (OT) map} $\mathbf{T}_0$ is a transport map that minimizes the expected transport cost\footnote{We use the squared Euclidean cost to ensure the existence and uniqueness of the OT maps, as in \cite{peyre2020computationaloptimaltransport,korotin2023neural,hutter2020minimaxestimationsmoothoptimal,manole2024pluginestimationsmoothoptimal,seguy2018large}.} among all transport maps:
\begin{equation}\label{def:otm-eq1}
    \mathbf{T}_{0} := \argmin_{\mathbf{T}:\mathbf{T}_{\#}P=Q} \int_{\Omega} \| x - \mathbf{T}(x) \|^2 dP(x).
\end{equation}
\cref{def:otm-eq1} is the so-called Monge problem \cite{monge1781memoire}.
Under mild regularity conditions, such as absolute continuity of $P$ and finite second moments, Brenier's theorem guarantees the existence and uniqueness of $\mathbf{T}_0$ \cite{hutter2020minimaxestimationsmoothoptimal,manole2024pluginestimationsmoothoptimal,villani2003topics,villani2008optimal,gunsilius_2022}.
See \cref{sec:appen-dual_detail} for further theoretical backgrounds on the OT map.

Kantorovich relaxed the problem by replacing the deterministic map with a \emph{joint distribution} on $\Omega \times \Omega$ whose marginals are $P$ and $Q$ \cite{villani2003topics,villani2008optimal,Kantorovich2006OnTT}.
The Kantorovich problem is formulated as
\begin{equation}\label{eq:kantorovich}
    \inf_{\Gamma \in \Pi(P, Q)} \mathbb{E}_{(X, Y) \sim \Gamma} \| X - Y \|^{2}
    =
    \inf_{\Gamma \in \Pi(P, Q)}
    \int_{\Omega \times \Omega} \|x - y\|^{2}   d\Gamma(x, y),
\end{equation}
where $\Pi(P, Q)$ denotes the set of all joint distributions on $\Omega \times \Omega$ whose marginals are $P$ and $Q$.
Note that the resulting optimal transport cost defines the squared $2$-Wasserstein distance, $W_{2}(P,Q)^{2} := \inf_{\Gamma \in \Pi(P,Q)} \int_{\Omega \times \Omega} \|x-y\|^{2}\,d\Gamma(x,y)$.



\paragraph{Optimal transport plan}
If $P (\cdot) = \delta_{0} (\cdot)$ and $Q (\cdot) = \tfrac{1}{2} \delta_{-1} (\cdot) + \tfrac{1}{2} \delta_{1} (\cdot),$ then the single point mass at $X = 0$ must be split between $Y = -1$ and $Y = 1$, so no deterministic map satisfies $\mathbf{T}_{\#}P = Q,$ and the Monge problem in \cref{def:otm-eq1} is infeasible.
The Kantorovich problem in \cref{eq:kantorovich}, by contrast, has a unique optimal joint distribution that places mass $\frac{1}{2}$ on each of $(0,-1)$ and $(0,1).$

When $P$ and $Q$ are discrete, such as the empirical measures $P_n$ and $Q_m$, the Kantorovich problem reduces to a linear program \cite{peyre2020computationaloptimaltransport,villani2008optimal,flamary2021pot} (see \cref{sec:intro_cdot} for its formulation).
We refer to any element of $\Pi(P_n,Q_m)$ as a \textit{transport plan}, and to the solution of the Kantorovich problem as an \textit{optimal transport (OT) plan}.
Unlike the OT plan, the OT map is generally not well-defined for discrete measures.
Thus, estimating an OT map from empirical measures requires an out-of-sample extension, e.g., an approach for constructing a map from the OT plan and generalizing it to unseen data.
In \cref{sec:DNN_algorithm}, we present our extension from the OT plan to a parametric map.

\section{BROT: barycentric regression for OT}\label{sec:DNN_algorithm}

This section introduces our proposed OT map estimation algorithm, BROT (\textit{Barycentric Regression for OT}).
The idea introduced in \cref{sec:motiv} is to reduce the OT map estimation to a regression problem where outputs are the \emph{barycentric targets}, i.e., the weighted averages of target data induced by the (unregularized) OT plan.
The detailed algorithm is formalized in \cref{sec:intro_cdot}, which consists of two steps:
(i) computing the OT plan between the observed source and target data, and
(ii) training a DNN by predicting the barycentric targets produced by the first step
from the source data.

\subsection{Motivation and idea}\label{sec:motiv}

The OT plan is more than just a joint distribution or coupling between two empirical distributions.
It naturally defines barycentric targets for the source data, which in turn induce the empirical barycentric map
\cite{manole2024pluginestimationsmoothoptimal,Deb2021RatesOE,pooladian2023minimaxestimationdiscontinuousoptimal,seguy2018large}.
Here, the barycentric target of a source data point is the weighted average of the target data with weights determined by the OT plan, and the empirical barycentric map is the map that sends each source data point to its barycentric target (see \cref{sec:intro_cdot} for the definitions).

The central question is how to use these barycentric targets when constructing a parametric estimator of the OT map.
The barycentric targets and the associated empirical barycentric map provide useful information for the OT map estimation.
For example, nearest neighbor interpolation of the targets is known to attain the minimax optimal rate when the ground-truth OT map is Lipschitz \cite{manole2024pluginestimationsmoothoptimal}.
However, this nonparametric extension is inconvenient at inference time, since it must store the entire training dataset to perform a nearest neighbor search for each test point (see \cref{tab:simuls_2} in \cref{sec:appen-simulation-full} for empirical evidence).
\cite{seguy2018large} also considered fitting a DNN to barycentric targets, but their targets are obtained from a regularized OT plan.
Their main contribution is a scalable algorithm for computing the regularized OT plan between two empirical distributions, rather than a sharp statistical analysis of the resulting map estimator.
In particular, it is not known whether the DNN estimator of \cite{seguy2018large} is minimax optimal.
See \cref{rmk:brot_vs_lsot} in \cref{sec:appen-simulation-tuning} for a detailed discussion.

Our goal is to estimate the OT map by a DNN with statistical optimality.
For this purpose, we treat the unregularized barycentric targets as regression outputs for the source data, thereby reducing OT map estimation to a standard regression problem.
We prove that the resulting least-squares estimator, implemented with carefully designed DNNs, attains the statistical minimax optimality (\cref{sec:minimax}).



\subsection{Proposed estimation method}\label{sec:intro_cdot}

\paragraph{Step 1: Compute the OT plan}
Recall that we are given source data $\{X_i\}_{i=1}^n$ drawn from $P$ and target data $\{Y_j\}_{j=1}^m$ drawn from $Q$.
Let $c_{ij}:=\|X_i-Y_j\|^2$ for $i \in [n]$ and $j \in [m]$.
Following \cref{eq:kantorovich}, the OT plan is defined by
\begin{equation}\label{eq:kan_obj}
    \widehat{\Gamma} := [\widehat{\gamma}_{ij}]_{i, j} 
    =
    \argmin_{[\gamma_{ij}]_{i, j} \in \mathcal{M}_{nm}} \sum_{i=1}^{n}\sum_{j=1}^{m} c_{ij}\gamma_{ij},
\end{equation}
where
$ \mathcal{M}_{nm} := \{ [\gamma_{ij}]_{i, j} \in \mathbb{R}_{+}^{n\times m} : \sum_{i=1}^{n}\gamma_{ij}=1/m, \sum_{j=1}^{m}\gamma_{ij}=1/n \} $
is the set of all transport plans whose marginals are $P_n$ and $Q_m$.
This OT plan $\widehat{\Gamma}$ derives the \textit{barycentric targets}
$$
\widetilde{Y}_i := \sum_{j=1}^{m} n\widehat{\gamma}_{ij}Y_j,
\quad i \in [n],
$$
which can be viewed as outputs (response variables) for the source data $\{X_i\}_{i=1}^n$.

\paragraph{Step 2: Fit a DNN by least-squares regression}
Let $\Phi$ be a given function class.
We define the estimator as a solution to the following least-squares regression problem:
\begin{equation}\label{def:real_est}
    \widehat{\mathbf{T}}_{nm}^{\Phi}
    :=
    \argmin_{\mathbf{T}\in\Phi}
    \frac{1}{n}\sum_{i=1}^{n}
    \|\widetilde{Y}_i-\mathbf{T}(X_i)\|^2.
\end{equation}
That is, $\mathbf{T}$ is trained on the source data $X_i$, $i\in[n]$, to fit the outputs $\widetilde{Y}_i$, $i\in[n]$.

We specify $\Phi$ in \cref{def:real_est} as a class of Lipschitz-constrained DNNs, motivated by both theory and practice.
Theoretically, DNNs are known to approximate smooth functions effectively \cite{HORNIK1989359,YAROTSKY2017103,Schmidt-Hieber20201875} and practically have been used for high-dimensional data and modern machine learning pipelines \cite{lecun2015deep,Goodfellow-et-al-2016}, e.g., WGAN with Lipschitz constraint \cite{NIPS2017_892c3b1c}.
Let $\Phi^{\textup{DNN}}(L,W;\sigma,p)$ denote the class of DNNs with depth at most $L$, width at most $W$, activation function $\sigma$, and output dimension $p$ (the full mathematical definition is deferred to \cref{sec:appen-dnn}).
For a constant $\lambda>0$, we consider the Lipschitz-constrained DNN class
$ \Phi^{\textup{DNN}}_{\lambda} := \left\{ f \in \Phi^{\textup{DNN}}(L,W;\sigma,p) : \operatorname{Lip}(f)\le \lambda \right\}, $
where
$ \operatorname{Lip}(f) := \sup_{\substack{x \neq x' \in \Omega}} \frac{\|f(x)-f(x')\|}{\|x-x'\|}. $
We define the DNN estimator of BROT as
$ \widehat{\mathbf{T}}_{nm}^{\textup{DNN}} := \widehat{\mathbf{T}}_{nm}^{\Phi_{\lambda}^{\textup{DNN}}}. $

The Lipschitz constraint is introduced mainly for theoretical purposes, as it allows us to derive the minimax optimal convergence rate for the Lipschitz OT map in \cref{sec:minimax}.
In practice, the Lipschitz constraint can be enforced in several ways:
(i) by adding the Jacobian penalty
$ \frac{1}{n}\sum_{i=1}^{n} \bigl\|\nabla \mathbf{T}(X_i)\bigr\|_{F}^{2} $
to the objective in \cref{def:real_est}, where $\Vert \cdot \Vert_{F}$ denotes the Frobenius norm \cite{NIPS2017_892c3b1c},
(ii) by constraining the spectral norms of the weight matrices \cite{miyato2018spectral},
or
(iii) by using architectures specially designed to be Lipschitz \cite{pmlr-v70-cisse17a}.
In our experiments in \cref{sec:exp_cdot}, we use the Jacobian penalty.

\section{Theoretical study: minimax optimality}\label{sec:minimax}

An important statistical question is whether using BROT achieves the minimax optimal convergence rate for estimating the ground-truth OT map $\mathbf{T}_0$.
As done in existing studies \cite{hutter2020minimaxestimationsmoothoptimal,manole2024pluginestimationsmoothoptimal,10.1214/24-AOS2482},
we assume the following standard regularity conditions on the source and target densities and on the smoothness of the potential.
Let $\mathcal{C}^{2}(\Omega)$ denote a class of functions with controlled derivatives up to order two.
More precise definitions of the H\"older spaces $\mathcal{C}^{\beta}(\Omega), \beta > 0$ are deferred to \cref{sec:appen-notation}.
\begin{assumption}\label{assumption-1-2}
    $P$ and $Q$ are absolutely continuous on $\Omega$ with density $p$ and $q,$ respectively.
    Moreover, there exists $\gamma>0$ such that
    $ \gamma^{-1} \le p(x), q(x) \le \gamma $
    for all $x \in \Omega$.
\end{assumption}
\begin{assumption}\label{assumption-3}
    The potential function $\phi_0$ is convex and belongs to $\mathcal{C}^{2}(\Omega).$
    Furthermore, it satisfies
    $ \lambda^{-1}I_d \preceq \nabla^2 \phi_0(x) \preceq \lambda I_d, \forall x \in \Omega $
    for some $\lambda\ge 1$.
    That is, $\mathbf{T}_0=\nabla\phi_0$ is $\lambda$-Lipschitz.
\end{assumption}
Under Assumptions \ref{assumption-1-2} and \ref{assumption-3}, \cite{hutter2020minimaxestimationsmoothoptimal} proved the minimax lower bound
$ \mathbb{E} \big\|\widetilde{\mathbf{T}}_{nm}-\mathbf{T}_0 \big\|_{L^2(P)}^2 \gtrsim \tilde{n}^{-2/d} \vee \tilde{n}^{-1} $
for any estimator $\widetilde{\mathbf{T}}_{nm}$, where $\tilde{n} := n \wedge m$ and $\Vert f \Vert_{L^{2}(P)}^{2} := \int_{\Omega} \Vert f(x) \Vert^{2} dP(x)$ for $f : \Omega \to \mathbb{R}^{d}$.
Our theoretical goal is to show that our DNN estimator of BROT in \cref{sec:DNN_algorithm} attains this rate, i.e., its risk is upper-bounded by the minimax rate.

For this purpose, we specially design the DNN architecture to depend on the sample sizes $n$ and $m$ in order to control the approximation error, following the standard convention in the literature on DNN-based nonparametric estimation \cite{YAROTSKY2017103,Schmidt-Hieber20201875,suzuki2018adaptivity}.
Define
$ \Phi_{nm}^{\textup{DNN}} := \Phi_{nm,\lambda}^{\textup{DNN}}
:= \left\{ f \in \Phi^{\textup{DNN}}(L, W_{nm}; \sigma_{\textup{mix}}, d) : \operatorname{Lip}(f) \le \lambda + c' \right\}, $
for a fixed $\lambda > 0,$
where $c' > 0$ depends only on $d$, and $\sigma_{\textup{mix}}$ denotes a mixed ReLU/ReQU activation scheme (ReLU and ReQU applied at different nodes, for technical reasons).
The depth is set as $L \asymp 1$, while the width $W_{nm}$ grows polynomially in $\tilde n.$
The explicit formulation is given by \cref{lem:size_schedule_n_rate} in \cref{sec:appen-lemmas}.
Note that this sample size-dependent design as well as the Lipschitz constraint are mainly theoretical tools for proving \cref{thm:dnn_main}.
In the experiments in \cref{sec:exp_cdot}, we instead use a fixed MLP regularized by a Jacobian penalty, and \cref{sec:appen-sensitivity} empirically suggests that BROT performs similarly across various architectures and Jacobian penalty weights.
Denote the resulting estimator by
\begin{equation}\label{def:mlp_est}
    \widehat{\mathbf{T}}_{nm}^{\textup{DNN}}
    :=
    \argmin_{\mathbf{T} \in \Phi_{nm}^{\textup{DNN}}}
    \sum_{i=1}^{n}
    \bigl\| \widetilde{Y}_i - \mathbf{T}(X_i) \bigr\|^2. 
\end{equation}
\cref{thm:dnn_main} below gives the convergence rate of this DNN estimator $\widehat{\mathbf{T}}_{nm}^{\textup{DNN}},$ whose full proof and technical conditions are deferred to \cref{sec:appen-full_proof} and \cref{sec:appen-notation}, respectively.

\begin{theorem}[Minimax convergence rate for the DNN estimator of BROT]\label{thm:dnn_main}
    Under Assumptions~\ref{assumption-1-2}, \ref{assumption-3}, and regularity conditions described in \cref{sec:appen-notation}, 
    we have that
    $$
    \mathbb{E} \big\Vert \widehat{\mathbf{T}}_{nm}^{\textup{DNN}} - \mathbf{T}_{0} \big\Vert_{L^2(P)}^{2}
    \lesssim
    \begin{cases}
        \tilde n^{-1} \log \tilde n, & d=1,\\
        \tilde n^{-1}(\log \tilde n)^2, & d=2,\\
        \tilde n^{-2/d} \log \tilde n, & d\ge 3.
    \end{cases}
    $$
\end{theorem}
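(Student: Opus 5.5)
We will bound the risk by comparing $\widehat{\mathbf{T}}_{nm}^{\textup{DNN}}$ to the empirical barycentric map and using the known rate for the latter. Write $\widetilde{\mathbf{T}}(X_i):=\widetilde Y_i$ for the empirical barycentric map on the sample. Following \cite{manole2024pluginestimationsmoothoptimal}, under Assumptions~\ref{assumption-1-2}--\ref{assumption-3} the in-sample error
\[
\mathbb{E}\,\tfrac1n\sum_{i=1}^n\|\widetilde Y_i-\mathbf{T}_0(X_i)\|^2
\]
is of order $\tilde n^{-1}\log\tilde n$ for $d=1$, $\tilde n^{-1}(\log\tilde n)^2$ for $d=2$, and $\tilde n^{-2/d}\log\tilde n$ for $d\ge3$. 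This follows from the stability bound
\[
\|\widetilde{\mathbf T}-\mathbf T_0\|_{L^2(P_n)}^2\lesssim \lambda\,\bigl(W_2^2(P_n,Q_m)-W_2^2(P_n,(\mathbf T_0)_\# P_n)\bigr)
\]
together with empirical-process control of these Wasserstein terms. This quantity is exactly the rate in the theorem, so the remaining task is to show that the regression step loses at most a constant factor.

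\textbf{Step 1 (empirical error).} Since $\widehat{\mathbf T}:=\widehat{\mathbf T}^{\textup{DNN}}_{nm}$ minimizes the empirical loss, for any $\mathbf T^*\in\Phi^{\textup{DNN}}_{nm}$ we have
\[
\|\widehat{\mathbf T}-\widetilde{\mathbf T}\|_{n}\le \|\mathbf T^*-\widetilde{\mathbf T}\|_{n},
\]
where $\|\cdot\|_n$ is the $L^2(P_n)$ norm. The triangle inequality then gives
\[
\|\widehat{\mathbf T}-\mathbf T_0\|_n\le 2\|\widetilde{\mathbf T}-\mathbf T_0\|_n+\|\mathbf T^*-\mathbf T_0\|_\infty .
\]
This sidesteps the difficulty that the targets $\widetilde Y_i$ are dependent across $i$: no noise-orthogonality argument is needed, only this deterministic oracle inequality.

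\textbf{Step 2 (approximation).} Choose $\mathbf T^*$ to approximate $\mathbf T_0=\nabla\phi_0\in\mathcal C^1$ in sup-norm to accuracy $\epsilon\asymp\tilde n^{-1/d}$ (or $\tilde n^{-1/2}$ when $d\le 2$), while keeping $\operatorname{Lip}(\mathbf T^*)\le\lambda+c'$. This is the role of the ReQU/ReLU mix and the width schedule in \cref{lem:size_schedule_n_rate}: a piecewise-polynomial (spline) approximant of $\phi_0$ with controlled derivative, differentiated exactly by the network, gives a Lipschitz-controlled approximation of $\nabla\phi_0$. The resulting approximation error $\epsilon^2$ is dominated by the target rate.

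\textbf{Step 3 (empirical to population).} To pass from $\|\cdot\|_n$ to $\|\cdot\|_{L^2(P)}$, use that both $\widehat{\mathbf T}$ and $\mathbf T_0$ are $(\lambda+c')$-Lipschitz and bounded on the compact $\Omega$. Uniform deviation for the class $\{\|f-\mathbf T_0\|^2: f\in \mathrm{Lip}_{\lambda+c'}\}$ contributes roughly $\tilde n^{-2/d}$ (up to logs for $d\le2$) via the covering number $\log N(\epsilon)\asymp\epsilon^{-d}$ and a localized chaining bound. Alternatively, use the network-class covering number $\asymp W_{nm}^2L\log(\cdot)$, with $W_{nm}$ chosen so this term is at most the rate times $\log\tilde n$. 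A cleaner route, avoiding localization, is a Lipschitz-function transfer: for $L$-Lipschitz $g$,
\[
\bigl|\,\|g\|_{L^2(P)}^2-\|g\|_n^2\,\bigr|\lesssim W_1(P_n,P),
\]
with $\mathbb E W_1(P_n,P)\asymp n^{-1/d}$. This is too loose by itself, which is why localized chaining is used.

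\textbf{Main obstacle.} The delicate step is Step 3 combined with Step 2: achieving the empirical-to-population transfer at the sharp $\tilde n^{-2/d}$ (not $\tilde n^{-1/d}$) rate requires a localized (Bernstein/peeling) argument over the Lipschitz-constrained class. It also requires the network class to remain Lipschitz-bounded while still approximating well, so that its entropy is governed by the Lipschitz ball rather than the raw parameter count. I would first try the local Rademacher complexity bound for the uniformly bounded Lipschitz class, with fixed point $r^*\asymp\tilde n^{-2/d}$ for $d\ge3$, and then add the $\log$ factors in $d=1,2$ from the Step 1 barycentric-map term.
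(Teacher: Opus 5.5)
\textbf{The gap is in Step 3, which is the crux of the theorem.} Your localized-chaining / local Rademacher route does not give $\tilde n^{-2/d}$. For the Lipschitz ball in $d\ge 3$, its standard fixed point is $r^*\asymp n^{-1/d}$, not $n^{-2/d}$.

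To see this, take bumps of height and width $h=n^{-1/d}$ on a grid of side $h$, with each bump's sign chosen to match the sign of $\sum_{i\in c}\varepsilon_i\psi_c(X_i)$ in its cell. These functions are $1$-Lipschitz, have $L^2(P)$ norm $\asymp n^{-1/d}$, and give $\frac1n\sum_i\varepsilon_i g(X_i)\asymp n^{-1/d}$. Hence
\[
\mathcal R_n(\delta):=\mathbb E\sup\Bigl\{\Bigl|\tfrac1n\textstyle\sum_i\varepsilon_i g(X_i)\Bigr| : \operatorname{Lip}(g)\le 1,\ \|g\|_{L^2(P)}\le\delta\Bigr\}\gtrsim n^{-1/d}
\]
for every $\delta\gtrsim n^{-1/d}$. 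The critical inequality $\mathcal R_n(\delta)\lesssim\delta^2$ therefore forces $\delta^2\gtrsim n^{-1/d}$. Equivalently, Dudley's integral with $\log N(\epsilon)\asymp\epsilon^{-d}$ diverges, and truncating it at $n^{-1/d}$ leaves exactly this term.

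The same bump computation gives a fixed point of $n^{-2/3}$ for $d=1$ and of $n^{-1/2}$ (up to logs) for $d=2$. Both are slower than $\tilde n^{-1}$, so as written the route fails in every dimension. Symmetrization plus contraction discards the fact that this problem is essentially noiseless.

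The network-entropy alternative does not rescue it either. A fixed point of order $S\log n/n$, with $S$ the number of parameters, is below $\tilde n^{-2/d}\log\tilde n$ only if $S\lesssim n^{1-2/d}$. But your Step 2 needs sup-norm accuracy $\tilde n^{-1/d}$ at bounded depth for a map that is only $\mathcal C^1$, which takes roughly $\tilde n$ parameters. So Steps 2 and 3 are incompatible as stated.

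\textbf{The missing idea is the deterministic nearest-neighbour (Voronoi) transfer used in the paper.} Let $V_i$ be the Voronoi cells of $X_1,\dots,X_n$. For any Lipschitz $g$ and $x\in V_i$, you have $\|g(x)\|^2\le 2\|g(X_i)\|^2+2\operatorname{Lip}(g)^2\operatorname{diam}(V_i)^2$. Integrating gives, uniformly over all such $g$,
\[
\|g\|_{L^2(P)}^2\le 2\Bigl(n\max_{i}P(V_i)\Bigr)\|g\|_{n}^2+2\operatorname{Lip}(g)^2\max_i\operatorname{diam}(V_i)^2 .
\]
Apply this to $g=\widehat{\mathbf T}-\mathbf T_0$; this, not entropy control, is what the Lipschitz constraint on the network class is for. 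Use $\mathbb E\max_i\operatorname{diam}(V_i)^2\lesssim(\log n/n)^{2/d}$ and $n\max_iP(V_i)\lesssim\log n$ with high probability (\cref{lem:voronoi_mass_diam}). Everything then reduces to your Step 1 bound plus $\|\widetilde{\mathbf T}-\mathbf T_0\|_n^2\le\sum_{i,j}\widehat\gamma_{ij}\|\mathbf T_0(X_i)-Y_j\|^2$, which follows from Jensen. The expectation of the product $n\max_iP(V_i)\cdot\sum_{i,j}\widehat\gamma_{ij}\|\mathbf T_0(X_i)-Y_j\|^2$ is handled by \cref{lem:lambda_lipschitz_partition}.

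This also corrects your bookkeeping of the logarithms. The barycentric term itself (\cref{lem:prop_14_manole}) is only $\tilde n^{-1}$, $\tilde n^{-1}\log\tilde n$, $\tilde n^{-2/d}$. The extra $\log\tilde n$ in all three regimes comes from the factor $n\max_iP(V_i)$, not from the case $d\le 2$.

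Your Steps 1--2 are otherwise fine. Comparing directly with an approximant of $\mathbf T_0$ is cleaner than the paper's detour through $\widehat{\mathbf T}^{\textup{LS}}$. However, controlling $\operatorname{Lip}(\mathbf T^*)$ with a rate needs either slightly more smoothness of $\phi_0$ than $\mathcal C^2$ or a direct piecewise-linear construction.
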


Note that the upper bound on the risk in \cref{thm:dnn_main} matches the minimax lower bound $\tilde n^{-2/d} \vee \tilde n^{-1}$ for the ground-truth OT map that is Lipschitz, up to logarithmic factors. That is, the DNN estimator of BROT is (nearly) minimax optimal.

Several existing estimators of the OT map are also known to be minimax optimal, including \cite{hutter2020minimaxestimationsmoothoptimal,manole2024pluginestimationsmoothoptimal,10.1214/24-AOS2482}, but most of them are not practically applicable.
Practically feasible learning algorithms for \cite{hutter2020minimaxestimationsmoothoptimal} and \cite{10.1214/24-AOS2482} are not available, and \cite{manole2024pluginestimationsmoothoptimal} requires unnecessarily large memory and computation at inference time, owing to the inherent disadvantage of 1NN search.
Moreover, our experiments in \cref{sec:exp_cdot} empirically show that the DNN estimator of BROT outperforms the estimator of \cite{manole2024pluginestimationsmoothoptimal} even though both are minimax optimal.
This empirical observation suggests that not only statistical optimality but also practical feasibility is important for the OT map estimation, and BROT is such an algorithm.

\section{Experiments}\label{sec:exp_cdot}

In this section, we numerically assess the advantages of BROT from three perspectives.
First, in \cref{sec:rate}, using a synthetic dataset with a known Lipschitz ground-truth OT map,
we examine whether the estimation error of BROT follows the theoretical minimax rate in \cref{thm:dnn_main}, and compare with existing estimation methods.
Second, in \cref{sec:sim}, we evaluate the quality of the trained (estimated) OT map in terms of target distribution matching and transport cost, on two synthetic datasets and a high-dimensional image dataset. 
Third, in \cref{sec:app_cell,sec:app_da}, we investigate whether the improved quality of the trained OT map
translates into better performance on two downstream tasks: single-cell drug-perturbation prediction and unsupervised domain adaptation.

\paragraph{Baselines}
Throughout \cref{sec:rate,sec:sim,sec:app_cell,sec:app_da}, we compare BROT with five representative existing methods for estimating the OT map:
(1) the nearest neighbor estimation 1NN \cite{manole2024pluginestimationsmoothoptimal},
(2) a DNN estimation based on a regularized transport plan, called large-scale optimal transport (LSOT) \cite{seguy2018large},
(3) the input-convex neural network (ICNN) estimator \cite{pmlr-v119-makkuva20a},
and two min-max training-based DNN estimators
(4) OTP \cite{choi2025overcoming},
and
(5) DIOTM \cite{choi2025improving}.
For each method, we tune the hyperparameters by grid search and select the configuration that minimizes the validation metric specified in each experiment.
We also select the best training iteration using the same validation metric.
Detailed descriptions, implementation details, hyperparameter selection, and model architectures of each method are provided in \cref{sec:appen-simulation-tuning,sec:baselines_details}.

\subsection{Statistical convergence to the ground-truth OT map}\label{sec:rate}

\paragraph{Task setup}

We study how the estimation error (roughly, $\mathbb{E} \Vert \widehat{\mathbf{T}}_{nm}^{\textup{DNN}} - \mathbf{T}_{0} \Vert_{L^{2}(P)}^{2}$ on unseen test data) changes as the training size $n$ increases, in a setting where $\mathbf{T}_{0}$ is a known Lipschitz OT map.
We construct a synthetic dataset whose source distribution $P = \mathcal{N}(0, I_2)|_{R}$ is the standard Gaussian truncated to the disk of radius $R=3.5$.
We define
$ \phi(\boldsymbol{x}) = \tfrac{1}{2}(a_1 x_1^2 + a_2 x_2^2) + \tfrac{1}{4}(b_1 x_1^4 + b_2 x_2^4), $
for $\boldsymbol{x} = (x_1, x_2) \in \mathbb{R}^{2}$, so that
$ \mathbf{T}_0(\boldsymbol{x}) := \nabla\phi(\boldsymbol{x}) = (a_1 x_1 + b_1 x_1^3, a_2 x_2 + b_2 x_2^3), $
with $(a_1, a_2) = (1.5, 0.8)$ and $(b_1, b_2) = (0.3, 0.1)$.
By construction, $\mathbf{T}_0$ is the unique OT map from $P$ to $Q := (\mathbf{T}_0)_{\#}P,$
and it is Lipschitz on $\{\| \boldsymbol{x} \|\le R\}$ with Lipschitz constant
$ \max_{s\in\{1,2\}} (a_s + 3 b_s R^2) \approx 12.52 $ (see \cref{sec:appen-convergence_nonlinear} for a simple proof).

\paragraph{Implementation details}
For $n=m\in\{500,1000,2000,3000,5000,8000,10000\}$, we split the source and target data into an $80{:}20$ training/validation split and train all methods on the training data.
For each method, the training iteration and hyperparameters are selected using the $2$-Wasserstein distance between the transported source validation data and the target validation data.
We then evaluate the trained maps on a test dataset of size $2{,}000$.
We repeat the procedure with $10$ different random seeds for each $n,$ and report the mean along with the $10$-th to $90$-th percentile band.


\begin{figure}[h!]
    \vskip -0.1in
    \centering
    \includegraphics[width=0.45\linewidth]{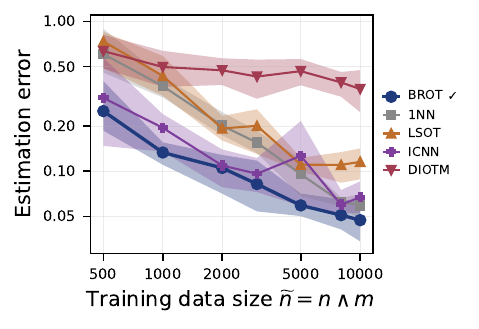}
    \includegraphics[width=0.45\linewidth]{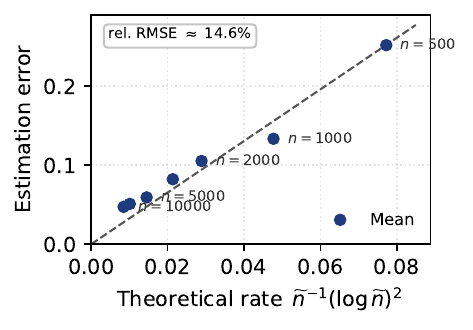}
    \vskip -0.1in
    \caption{
    \textbf{Statistical convergence to the ground-truth OT map.}
    (Left) training data size $\widetilde{n}$ vs. estimation error.
    Points are the means across $10$ random seeds and the shaded bands show the $10$-th to $90$-th percentile.
    (Right) theoretical rate $\widetilde{n}^{-1}(\log \widetilde{n})^{2}$ vs. estimation error for BROT, with a linear regression line and its relative RMSE (rel. RMSE).
    }
    \label{fig:convergence_both}
    \vskip -0.1in
\end{figure}

\paragraph{Results}

\cref{fig:convergence_both} shows the results of this convergence analysis.
The left panel shows that the estimation error of BROT decreases monotonically with $n$, and is lower than all baselines for all values of $n$.
Note that we omit OTP from the panel as its estimation error is markedly larger than the other methods ($\approx 2.6$ for all $n$).
Although BROT and 1NN have the same theoretical rate, BROT achieves smaller errors than 1NN at every value of $n$, suggesting that BROT is empirically more accurate in finite-sample estimation.
The right panel compares the estimation error of BROT with the theoretical minimax rate $\widetilde{n}^{-1}(\log \widetilde{n})^{2}.$
The relative RMSE of the fitted linear regression line is approximately $15\%$, indicating that the estimation error follows the theoretical rate trend over the range of $n$.
We further show in \cref{fig:appen-arch-robustness} of \cref{sec:appen-sensitivity} that the small estimation error of BROT is not significantly sensitive to architectures and Jacobian penalty weights.

\subsection{Quality of maps: distribution matching and transport cost}\label{sec:sim}

Beyond the estimation error for a known Lipschitz ground-truth OT map studied in \cref{sec:rate}, we assess the quality of trained maps in settings where the ground-truth OT map is unknown.
Specifically, we evaluate map quality in terms of (i) target distribution matching and (ii) transport cost: the former measures how well the learned maps match the target distribution (how the learned maps are valid transport maps), while the latter measures the cost-efficiency of trained transport maps.

\paragraph{Task setup}
We consider two $2$D synthetic datasets and one high-dimensional image dataset.
The two synthetic cases are Gaussian $\to$ Uniform rectangle and Gaussian $\to$ Uniform ellipse (visualizations in \cref{fig:appen-simul} in \cref{sec:appen-simulation-full}).
The image dataset is \textsc{AFHQv2} wild $\to$ cat dataset \cite{Choi_2020_CVPR} at $256{\times}256$ resolution.
Rather than in raw image pixel space, where the Euclidean distance can be a poor similarity metric for natural images, we operate in the representation space of a pretrained visual encoder.
We utilize a frozen Stable-Diffusion VAE \cite{rombach2021highresolution} as the representation encoder, which maps each image to a $4{,}096$-dimensional representation.

For the synthetic cases, we draw $n=3{,}000$ source and $m=2{,}000$ target data and split them $8{:}1{:}1$ into train/validation/test, and we use the validation data to select the best model with the lowest validation loss among all epochs.
For \textsc{AFHQv2}, we use $n=3{,}000$ wild and $m=4{,}000$ cat images with separate 500 test source and 500 test target images, all encoded into representations using the VAE encoder.
We run each method for five random trials and report the performance on the test data.

\paragraph{Implementation details}

The performance metrics are:
(i) the $2$-Wasserstein distance ($\texttt{Wass}$) between the transported source distribution and the target distribution, and
(ii) the transport cost ($\texttt{TC}$).
Here, $\texttt{Wass}$ quantifies the validity of the trained map as a transport map (i.e., $\mathbf{T}_{\#} P \approx Q$), whereas $\texttt{TC}$ measures whether it transports source data without unnecessarily long displacements.
Lower values are better for both metrics.
Except for 1NN and ICNN, we use $4$-layer MLPs with $64$ hidden units in the synthetic cases and wider MLPs, with hidden size up to $2{,}048$, on \textsc{AFHQv2} dataset.
ICNN uses the input-convex network architecture of \cite{pmlr-v119-makkuva20a}, tuned separately for each setting.
Full implementation details, including hyperparameter tuning and architectures, are provided in \cref{sec:appen-simulation-tuning}.

\begin{table}[h!]
    \centering
    \small
    \caption{
    \textbf{Comparison of map quality: distribution matching and transport cost.}
    Best values of \texttt{Wass} are \textbf{bolded} and `$\ast$' marks the failed cases due to algorithm instability.
    Results are averaged over five random trials, and standard deviations are reported in \cref{tab:simuls_full} of \cref{sec:appen-simulation-full}.
    }
    \label{tab:distmatch}
    \begin{tabular}{lcccccc}
        \toprule
        & \multicolumn{2}{c}{Gaussian $\to$ Rectangle} & \multicolumn{2}{c}{Gaussian $\to$ Ellipse} & \multicolumn{2}{c}{\textsc{AFHQv2} wild $\to$ cat} \\
        \cmidrule(lr){2-3}\cmidrule(lr){4-5}\cmidrule(lr){6-7}
        Method & \texttt{Wass} & \texttt{TC} & \texttt{Wass} & \texttt{TC} & \texttt{Wass} & \texttt{TC} \\
        \midrule
        \midrule
        1NN \cite{manole2024pluginestimationsmoothoptimal} & 0.126 & 1095.67 & 0.207 & 1122.30 & 54.819 & 3296.54 \\
        LSOT \cite{seguy2018large}                         & 0.111 & 1095.69 & 0.172 & 1124.35 & 53.263 & 2627.29 \\
        ICNN \cite{pmlr-v119-makkuva20a} & {0.211} &{1097.51} & 0.171 & 1122.54 & $\ast$ & $\ast$ \\
        OTP \cite{choi2025overcoming}                       & 0.258 & 1099.17 & 0.596 & 1129.88 & 50.651 & 1408.23 \\
        DIOTM \cite{choi2025improving}                     & 0.123 & 1098.21 & 0.228 & 1123.99 & 51.000 & 1453.99 \\
        \rowcolor{RowHighlight}
        \textbf{BROT}                                 & \textbf{0.110} & 1095.65 & \textbf{0.155} & 1125.08 & \textbf{45.484} & 1993.85 \\
        \bottomrule
    \end{tabular}
    \vskip -0.1in
\end{table}

\paragraph{Results}
\cref{tab:distmatch} compares the methods in terms of map quality on the three datasets, where we focus on \texttt{Wass} as the primary metric because a low \texttt{TC} alone does not imply a valid transport map, e.g., the trivial identity map attains the smallest \texttt{TC} but has $\mathbf{T}_{\#}P \neq Q$.
BROT attains the lowest $\texttt{Wass}$ on every dataset among all methods, while maintaining a comparable $\texttt{TC},$ suggesting that BROT provides a high-quality transport map.
In particular, BROT improves over 1NN in $\texttt{Wass}$ while showing similar $\texttt{TC},$ illustrating the practical benefit of replacing the discontinuous nearest neighbor extension with DNN.
ICNN also performs reasonably well on the two synthetic datasets, but diverges on \textsc{AFHQv2} ($d=4{,}096$).
This is due to optimization instability in the ICNN min-max objective on high-dimensional data, despite extensive hyperparameter tuning (see \cref{sec:appen-simulation-tuning} for our stabilization attempts).
LSOT, OTP, and DIOTM show competitive $\texttt{Wass}$ on synthetic datasets but degrade on \textsc{AFHQv2}, indicating that BROT is a practically favorable DNN-based method.

The runtime comparison in \cref{tab:simuls_2} of \cref{sec:appen-simulation-full} shows that BROT trains faster than the baseline DNN-based algorithms with min-max learning algorithms (ICNN, OTP, DIOTM) while keeping inference cost lower than 1NN.
BROT also yields a smaller $\texttt{Wass}$ than the regularized counterpart (i.e., LSOT) in \cref{tab:distmatch}, and in fact can train faster than LSOT.

\subsection{Application 1: single-cell perturbation prediction}\label{sec:app_cell}

\paragraph{Task setup}
Predicting how an unperturbed cell would respond to a drug perturbation is a canonical task in modern computational biology \cite{bunne2023learning,chen_rivaud_park_tsou_charles_haliburton_pichiorri_thomson_2020,lotfollahi2019scgen}.
Since cell measurements are destructive, we only observe two unpaired data distributions, a control source distribution $\rho_c$ and a perturbed target distribution $\rho_k$, but never observe the same cell in both states.
Following \cite{bunne2023learning}, we formulate this task as estimating the OT map $\mathbf T^{(k)}:\rho_c \to \rho_k$ for each perturbation $k$.
The goal is to match the transported control distribution $(\mathbf T^{(k)})_{\#}\rho_c^{\textup{test}}$ to the treated cell distribution, where $\rho_c^{\textup{test}}$ denotes the test distribution of control cells unseen during training.
We compare BROT with several baselines, including CellOT \cite{bunne2023learning}, a state-of-the-art ICNN-based method for single-cell perturbation prediction.

We use \textsc{4i} benchmark dataset \cite{gut2018multiplexed} with $35$ drug perturbations, following the same setup as CellOT \cite{bunne2023learning}.
For each perturbation, we split the control and treated data into training and test data with an $80{:}20$ ratio.
Within the training split, we further hold out $20\%$ as validation data for hyperparameter selection, train maps on the remaining $80\%,$ and then evaluate them on the test data.

\paragraph{Implementation details}

Following the evaluation protocol of \cite{bunne2023learning}, we compare the predicted (transported) control cell distribution with the ground-truth perturbed cell distribution.
Each cell is represented by a $47$-dimensional protein-marker vector.
We use four performance metrics.
First, $\texttt{MMD}$ is the Gaussian-kernel maximum mean discrepancy (MMD) averaged over $50$ bandwidths, measuring the distributional discrepancy between the transported control and treated distributions.
Second, $L_2$ difference is the Euclidean distance between the feature means of the two distributions, measuring their feature-wise closeness.
Third, $r$-std is the Pearson correlation between the marker standard deviations of the transported control and treated distributions, measuring whether the transported control captures the marker-wise spread of the treated cells.
We additionally calculate the transport cost $\texttt{TC}$ between each unperturbed cell and its prediction.

\begin{table}[h!]
    \centering
    \small
    \caption{
    \textbf{Comparison of single-cell perturbation prediction performance.}
    The results are averaged across the $35$ drug perturbations from \textsc{4i} dataset \cite{gut2018multiplexed}.
    Among the methods that satisfy $\texttt{MMD}\le 0.03$ (the \texttt{MMD} of Identity), the best value is \textbf{bolded} and the second-best is \underline{underlined} for each metric.
    Standard deviations are reported in \cref{tab:appen-single-cell-full} of \cref{sec:appen-cell-full}.
    }
    \label{tab:single_cell_4i__47_d__35_drugs_}
    \begin{tabular}{lcccc}
        \toprule
        Method & \texttt{MMD} $\downarrow$ & {$L_2$ difference $\downarrow$} & {$r$-std $\uparrow$} & {\texttt{TC} $\downarrow$} \\
        \midrule\midrule
        Identity & 0.028 & {1.213} & {0.917} & {0.000} \\
        1NN \citep{manole2024pluginestimationsmoothoptimal} & \textbf{0.002} & {\textbf{0.189}} & {\underline{0.990}} & {6.745} \\
        LSOT \citep{seguy2018large} & 0.012 & {0.287} & {0.982} & {4.381} \\
        ICNN \citep{pmlr-v119-makkuva20a} & \underline{{0.006}} & {{0.258}} & {{0.987}} & {{4.329}} \\
        OTP \citep{choi2025overcoming} & 0.035 & {1.435} & {0.917} & {0.514} \\
        DIOTM \citep{choi2025improving} & \textbf{0.002} & {0.255} & {\underline{0.990}} & {\underline{{3.993}}} \\
        \rowcolor{RowHighlight}
        \textbf{BROT} & {\textbf{{0.002}}} & {\underline{{0.194}}} & {\textbf{{0.993}}} & {\textbf{{3.928}}} \\
        \bottomrule
    \end{tabular}
    \vskip -0.1in
\end{table}

\paragraph{Results}

\cref{tab:single_cell_4i__47_d__35_drugs_} reports the average results across the $35$ drug perturbations.
BROT achieves the best $\texttt{MMD}$, $r$-std, and $\texttt{TC}$, while obtaining the second-best $L_2$ among the methods that satisfactorily match the target distribution ($\texttt{MMD}\le 0.03$).
In contrast, LSOT and OTP fail to reach $\texttt{MMD}\le 0.03$, which is the value of Identity (no transport).
Although DIOTM also attains $\texttt{MMD}\le 0.03$, it is noticeably weaker in terms of $L_2$.
Compared with 1NN, which is also competitive on this task, BROT achieves smaller \texttt{TC} with the same $\texttt{MMD}$ level.

\subsection{Application 2: unsupervised domain adaptation}\label{sec:app_da}

\paragraph{Task setup}
In unsupervised domain adaptation, a classifier trained using labeled source data and unlabeled target data is evaluated on unseen target data, whose distribution differs from that of the source.
Existing conventional OT-based approaches \cite{seguy2018large,7586038} first transport the source data toward the unlabeled target data and then train the classifier on the transported source data.
Although a more accurate OT map does not directly guarantee a higher domain adaptation performance (i.e., target-domain test accuracy), such OT-based approaches have been used as a standard downstream evaluation in the OT literature \cite{seguy2018large,7586038}.
Following \cite{seguy2018large}, we run all methods on a fixed representation space using a pretrained visual encoder, rather than fine-tuning the encoder, since semantically similar images already lie close in such a well-pretrained representation space and transport is more likely to preserve class information than in raw pixel space.

\paragraph{Implementation details}
We evaluate on two benchmark datasets, \textsc{USPS} \cite{291440} $\to$ \textsc{MNIST} \cite{lecun2010mnist} and \textsc{VisDA-17} synthetic $\to$ real \cite{peng2017visdavisualdomainadaptation}, and with two pretrained encoders, CLIP-ViT/B-32 \cite{pmlr-v139-radford21a} and DINOv2-ViT/B-14 \cite{oquab2024dinov}.
After each method trains the OT map on the source and target training data, we train a downstream classifier (see \cref{sec:appen-da-full} for the selection of the classifier network) on the transported source data after holding out $20\%$ as a validation split.
We select the iteration with the highest validation accuracy on this split, and the target test data are used only for the final evaluation.
We report the per-class mean accuracy \citep{liang2020we} averaged over three random seeds.

\begin{table*}[h!]
    \vskip -0.1in
    \centering
    \small
    \caption{
    \textbf{Comparison of unsupervised domain adaptation performance.}
    The performance is evaluated by the per-class mean accuracy (\%) on the target-domain test data.
    Best results are \textbf{bolded} and the second places are \underline{underlined}, and
    the standard deviations are in \cref{tab:appen-da-extended-std} of \cref{sec:appen-da-full}.
    }
    \label{tab:da_extended}
    \vskip 0.1in
    \begin{tabular}{lcccc}
    \toprule
    \multirow{2}{*}{Method} & \multicolumn{2}{c}{\textsc{USPS} $\to$ \textsc{MNIST}} & \multicolumn{2}{c}{\textsc{VisDA-17} (synthetic $\to$ real)} \\
    \cmidrule(lr){2-3}\cmidrule(lr){4-5}
    & CLIP-ViT/B-32 & DINOv2-ViT/B-14 & CLIP-ViT/B-32 & DINOv2-ViT/B-14 \\
    \midrule
    \midrule
    Identity & 52.05 & 58.07 & 64.04 & 60.66 \\
    1NN \cite{manole2024pluginestimationsmoothoptimal} & 80.34 & 75.25 & 74.17 & 74.39 \\
    LSOT \cite{seguy2018large} & 30.12 & 52.20 & \underline{76.87} & \underline{81.57} \\
    ICNN \cite{pmlr-v119-makkuva20a} & 75.76 & 54.70 & 74.12 & 77.77 \\
    OTP \cite{choi2025overcoming} & 49.98 & 57.36 & 76.15 & 64.51 \\
    DIOTM \cite{choi2025improving} & \underline{82.71} & \underline{77.46} & 76.45 & 27.05 \\
    \rowcolor{RowHighlight}
    \textbf{BROT} & \textbf{84.83} & \textbf{77.60} & \textbf{79.34} & \textbf{81.59} \\
    \bottomrule
    \end{tabular}
    \vskip -0.1in
\end{table*}

\paragraph{Results}
\cref{tab:da_extended} reports the per-class mean accuracy on the target domain, showing that BROT is the best method on all four cases.
These results suggest that the accurate estimation of the OT maps using BROT translates into improved domain adaptation performance under various foundation visual encoders and datasets.
Furthermore, we compare BROT with end-to-end OT-based domain adaptation methods that jointly train the encoder, classifier, and transport plan.
The results are reported in \cref{tab:appen-da-end2end} of \cref{sec:appen-da-full}, further showing that BROT with fixed representations is competitive with the end-to-end methods that train the representation encoder.

\section{Conclusion and discussion}\label{sec:discussion}

We proposed BROT, a two-step algorithm for estimating the OT map that first computes the unregularized OT plan between the empirical source and target distributions and then fits a Lipschitz-constrained DNN to the induced barycentric targets by least-squares regression.
On the theoretical side, we show that the DNN estimator of BROT attains the minimax convergence rate when the ground-truth OT map is Lipschitz.
On the empirical side, BROT (i) attains the smallest estimation error, (ii) produces high-quality maps, and (iii) demonstrates improved performance on two downstream applications.

Our theoretical result is limited to the case where the ground-truth OT map is Lipschitz, and extending the analysis to smoother classes, such as H\"older classes, would provide a more general statistical guarantee.
Several theoretically optimal estimators are already known \cite{hutter2020minimaxestimationsmoothoptimal,10.1214/24-AOS2482}, but they are not accompanied by practical training algorithms.
Designing such an algorithm whose DNN estimator is minimax optimal for function classes with general smoothness is therefore a promising direction for future work.
For example, it would be interesting to characterize the statistical optimality of ICNN-based OT estimators, which directly encode the convexity of the potential.

\bibliography{references}
\bibliographystyle{unsrt}

\clearpage
\appendix
\crefalias{section}{appsec}
\crefalias{subsection}{appsubsec}
\crefalias{subsubsection}{appsubsubsec}

\section{Theoretical studies}\label{sec:appen-maths}

This section introduces precise mathematical definitions, assumptions, and theoretical background on OT map to support \cref{sec:preliminary}, and provides proofs for our main theoretical result in \cref{sec:minimax}.

\subsection{Notations, definitions, and assumptions}\label{sec:appen-notation}

\paragraph{Notations}

For $p \in \mathbb{N}$ and a given vector $v$ of form $v = [v_{1}, \ldots, v_{d}]^{\top} \in \mathbb{R}^{d},$ the $p$-norm of $v$ is defined as $\Vert v \Vert_{p} = \left( \vert v_{1} \vert^{p} + \cdots + \vert v_{d} \vert^{p} \right)^{1/p}.$
When $p = 2,$ we abbreviate the subscript $p$ so that we simply write $\Vert \cdot \Vert = \Vert \cdot \Vert_{2}$ as the 2-norm.
Given a set $\Omega \subseteq \mathbb{R}^{d},$ the Lebesgue space $L^p(\Omega)$ is defined by
$ L^p(\Omega):= \{f:\Omega\to\mathbb{R} |  \|f\|_{L^p(\Omega)} := (\int_\Omega |f(x)|^p dx )^{1/p}<\infty \}$ for a given order $p \in [1,\infty).$
We further write $\|f\|_\infty=\sup_{x\in\Omega}|f(x)|$.
For a given probability measure $P$ defined on $\Omega$, we also write $L^p(P) := \{f:\Omega\to\mathbb{R} | \|f\|_{L^p(P)} := (\int_\Omega |f(x)|^p dP(x) )^{1/p}<\infty \}.$

For any $a,b\in\mathbb{R}$, denote $a\vee b=\max\{a,b\}$, $a\wedge b=\min\{a,b\}$, and $a_+=a\vee 0$.
Let $\lfloor a\rfloor$ denote the greatest integer less
than $a$.
Given a set $\Omega \subset \mathbb{R}^{d},$ the diameter of $\Omega$ is defined as $\textup{diam}(\Omega)=\sup\{\|x-y\|: x,y\in\Omega\}$.
For all $x\in\mathbb{R}^d$ and $\varepsilon>0$, let $B(x,\varepsilon)=\{ y\in\mathbb{R}^d:\|x-y\|\le\varepsilon \}$ be the $\epsilon$-ball with center $x.$
For two given sequences $(a_n)_{n\ge1}$ and $(b_n)_{n\ge1}$ depending on $n,$ we write $a_n\lesssim b_n$ if there exists some $C>0$ not depending on $n$ such that $a_n\le Cb_n$ for all $n\ge1.$
We write $a_n\asymp b_n$ if $b_n\lesssim a_n\lesssim b_n$.

\paragraph{H\"older spaces}

Let $\Omega \subset \mathbb{R}^d$ be a support.
For a function $f:\Omega \to \mathbb{R}$ and a subset $A \subset \Omega$, define
$ \|f\|_{\infty} := \sup_{\mathbf{x}\in\Omega} |f(\mathbf{x})| $
and 
$ \|f\|_{\infty,A} := \sup_{\mathbf{x}\in A} |f(\mathbf{x})|. $
We write $\mathbb{N}_0 := \mathbb{N} \cup \{0\}$.
Throughout this study, $\|\cdot\|$ denotes the Euclidean norm on $\mathbb{R}^d$.

\begin{definition}[Partial derivatives]
Let $\bm{\alpha} = [\alpha_1,\ldots,\alpha_d]^\top \in \mathbb{N}_0^d$ with $|\bm{\alpha}| := \sum_{i=1}^d \alpha_i$. 
For $\mathbf{x} = [x_1,\ldots,x_d]^\top \in \Omega$, we define the partial derivative of $f$ of order $\bm{\alpha}$ as
\begin{equation}
    \partial^{\bm{\alpha}} f(\mathbf{x})
    := \frac{\partial^{|\bm{\alpha}|} f(\mathbf{x})}{\partial x_1^{\alpha_1} \cdots \partial x_d^{\alpha_d}}.
\end{equation}
\end{definition}

For $m \in \mathbb{N}_0$, let $\mathcal{C}^m(\Omega)$ denote the space of $m$-times continuously differentiable functions such that the partial derivatives $\partial^{\bm{\alpha}} f$ exist and are continuous for all $|\bm{\alpha}| \le m$.

\begin{definition}[H\"older spaces $\mathcal{C}^{\beta}(\Omega)$ with order $\beta>0$]
    Let $\beta > 0$ be a real number. Define $s := \lfloor \beta \rfloor \in \mathbb{N}_0$ and $r := \beta - \lfloor \beta \rfloor \in (0,1]$.
    For $f:\Omega \to \mathbb{R}$, define
    \begin{equation}
        \|f\|_{\mathcal{C}^{s}(\Omega)} := \max_{|\bm{\alpha}|\le s}\ \|\partial^{\bm{\alpha}} f\|_{\infty}
        \textup{ and }
        [f]_{\mathcal{C}^{s,r}(\Omega)} := \max_{|\bm{\alpha}|=s}\ \sup_{\substack{\mathbf{x},\mathbf{y}\in\Omega\\ \mathbf{x}\neq\mathbf{y}}}
        \frac{|\partial^{\bm{\alpha}} f(\mathbf{x}) - \partial^{\bm{\alpha}} f(\mathbf{y})|}{\|\mathbf{x}-\mathbf{y}\|^{r}}.
    \end{equation}
    and define the H\"older norm of order $\beta$ as
    $ \|f\|_{\mathcal{C}^{\beta}(\Omega)} := \|f\|_{\mathcal{C}^{s}(\Omega)} + [f]_{\mathcal{C}^{s,r}(\Omega)}. $
    The H\"older space of order $\beta$ is then defined as
    \begin{equation}
        \mathcal{C}^{\beta}(\Omega)
        :=
        \bigl\{  f \in \mathcal{C}^{s}(\Omega): \|f\|_{\mathcal{C}^{\beta}(\Omega)} < \infty \bigr\}.
    \end{equation}
    Furthermore, for $H > 0$, we define the (closed) H\"older ball of radius $H$ as
    \begin{equation}
        \mathcal{C}^{\beta}(\Omega, H)
        :=
        \bigl\{ f \in \mathcal{C}^{\beta}(\Omega): \|f\|_{\mathcal{C}^{\beta}(\Omega)} \le H \bigr\}.
    \end{equation}
\end{definition}

\begin{definition}[H\"older spaces of vector-valued maps]
    For a vector-valued map $\mathbf{T} = (T_1,\ldots,T_p)^\top : \Omega \to \mathbb{R}^p$ with $p \in \mathbb{N}$, set
    \begin{equation}
        \|\mathbf{T}\|_{\mathcal{C}^{\beta}(\Omega)} := \max_{1\le \ell\le p}\ \|T_\ell\|_{\mathcal{C}^{\beta}(\Omega)}.
    \end{equation}
    We then write
    \begin{equation}
        \mathbf{T} \in \mathcal{C}^{\beta}(\Omega)
        \ \Longleftrightarrow\
        T_\ell \in \mathcal{C}^{\beta}(\Omega)\ \text{ for all }\ \ell \in [p].
    \end{equation}
\end{definition}

\paragraph{Regularity conditions}

In addition, the standard regularity conditions on the support $\Omega$ are:

\begin{itemize} 
    \item[(S1)]
    $\Omega$ is a compact, convex set with nonempty interior such that $\Omega \subseteq [0, 1]^{d}$.

    \item[(S2)]
    $\Omega$ is a standard set in the sense that there exist $\epsilon_{0}, \delta_{0} > 0$ such that for all $x \in \Omega$ and $\epsilon \in (0, \epsilon_{0})$,
    $
    \mathcal{L}\bigl(B(x, \epsilon) \cap \Omega\bigr)
    \ge \delta_{0}  \mathcal{L}\bigl(B(x, \epsilon)\bigr),
    $
    where $\mathcal{L}$ is the Lebesgue measure on $\mathbb{R}^{d}$ and $B(x,\epsilon)$ is the Euclidean ball of radius $\epsilon$ centered at $x$.
\end{itemize}

\paragraph{Remark on the regularity class of \cref{assumption-3}}
The two-sided Hessian bound $\lambda^{-1} I_d \preceq \nabla^2 \phi_0(x) \preceq \lambda I_d$ in \cref{assumption-3} makes $\phi_0$ both $\lambda$-smooth and $\lambda^{-1}$-strongly convex on $\Omega$, so $\mathbf{T}_0 = \nabla \phi_0$ is in fact $\lambda$-bi-Lipschitz: both $\mathbf{T}_0$ and $\mathbf{T}_0^{-1}$ are $\lambda$-Lipschitz.
This regularity class matches the one analyzed in \cite{hutter2020minimaxestimationsmoothoptimal,manole2024pluginestimationsmoothoptimal,10.1214/24-AOS2482}, and we refer to it loosely as `Lipschitz OT map' in the main text following the convention of those works.

\subsection{Existence, uniqueness, and duality of optimal transport maps}\label{sec:appen-dual_detail}

Based on the Kantorovich relaxation in \cref{eq:kantorovich}, the $2$-Wasserstein distance admits a convenient dual formulation.

\begin{theorem}[Kantorovich duality for quadratic cost]\label{thm:kantorovich_duality}
    Assume that $P$ and $Q$ have finite second moments.
    Then
    \begin{equation}\label{eq:dual_full}
        \begin{split}
            W_2^2(P,Q)
            & = \inf_{\Gamma\in\Pi(P,Q)} \int \|x-y\|^2 d\Gamma(x,y)
            \\
            & = \sup_{\substack{\varphi\in L^1(P), \psi\in L^1(Q) \\ \varphi(x)+\psi(y)\le \|x-y\|^2\ \forall (x,y)\in\Omega\times\Omega}}
            \left\{\int \varphi dP+\int \psi dQ \right\}
            \\
            & = \sup_{u\in L^1(P)} \left\{\int u dP+\int u^{c} dQ \right\},
        \end{split}
    \end{equation}
    where the $c$-transform is $u^c(y):=\inf_{x\in\Omega}\{\|x-y\|^2-u(x)\}$.
    Moreover, writing $\phi$ for a convex potential and $\phi^*$ for its Legendre-Fenchel conjugate, we obtain
    \begin{equation}\label{eq:dual_conjugate}
        \begin{split}
            W_2^2(P,Q)
            = \int \|x\|^2 dP(x)+\int \|y\|^2 dQ(y)
            - 2\inf_{\phi: \textup{convex}}
            \left\{\int \phi dP+\int \phi^* dQ \right\}.
        \end{split}
    \end{equation}
\end{theorem}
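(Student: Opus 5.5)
The plan is to prove \cref{eq:dual_full} and \cref{eq:dual_conjugate} in three stages: existence of an optimal plan with weak duality, strong duality through cyclical monotonicity and Rockafellar's theorem, and finally the reduction of the dual to $c$-transforms and convex conjugates. Throughout I use that, by (S1), $\Omega\subseteq[0,1]^d$ is compact. Hence all second moments are finite, $\|x-y\|^2$ is bounded and continuous on $\Omega\times\Omega$, and $\Pi(P,Q)$ is weakly compact. Consequently an optimal plan $\Gamma^\star$ exists: the map $\Gamma\mapsto\int\|x-y\|^2d\Gamma$ is weakly continuous on the compact set $\Pi(P,Q)$. Weak duality (``$\ge$'' between the first two lines) is immediate. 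Integrating $\varphi(x)+\psi(y)\le\|x-y\|^2$ against any $\Gamma\in\Pi(P,Q)$ gives $\int\varphi\,dP+\int\psi\,dQ\le\int\|x-y\|^2d\Gamma$, using only the marginal constraints.

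For strong duality I would expand $\|x-y\|^2=\|x\|^2+\|y\|^2-2\langle x,y\rangle$. Optimality of $\Gamma^\star$ then implies that $\operatorname{supp}\Gamma^\star$ is cyclically monotone for the inner product. The standard argument is by contradiction: if finitely many support points admitted a cyclic permutation that strictly lowered the cost, one could reroute small mass from neighborhoods of those points and strictly improve the cost, contradicting optimality. Rockafellar's theorem then supplies a proper lower semicontinuous convex $\phi$ on $\mathbb{R}^d$ with $\operatorname{supp}\Gamma^\star\subseteq\partial\phi$. On the support, the Fenchel--Young inequality $\phi(x)+\phi^*(y)\ge\langle x,y\rangle$ is therefore an equality.

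I expect this step to be the main obstacle, for two reasons:
\begin{itemize}
\item \emph{Integrability of $\phi$ and $\phi^*$.} I would replace $\phi$ by its restriction-and-reconvexification $\tilde\phi(x):=\sup_{y\in\Omega}\{\langle x,y\rangle-\phi^*(y)\}$, which is finite and Lipschitz on the compact set $\Omega$. I would then check that $\tilde\phi^*$ is also bounded on $\Omega$ and that the equality on $\operatorname{supp}\Gamma^\star$ is preserved.
\item \emph{The cyclical-monotonicity perturbation.} Making the rerouting argument rigorous requires a measure-theoretic construction, which I would take from \cite{villani2003topics,villani2008optimal} rather than redo.
\end{itemize}
Granting this, define $\varphi(x)=\|x\|^2-2\tilde\phi(x)$ and $\psi(y)=\|y\|^2-2\tilde\phi^*(y)$. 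Fenchel--Young shows that this pair is feasible, and equality on $\operatorname{supp}\Gamma^\star$ gives $\int\varphi\,dP+\int\psi\,dQ=\int\|x-y\|^2d\Gamma^\star$. This closes the gap and proves the second line of \cref{eq:dual_full}.

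For the third line, take any feasible pair $(\varphi,\psi)$. Feasibility gives $\psi(y)\le\inf_x\{\|x-y\|^2-\varphi(x)\}=\varphi^c(y)$, so the pair $(\varphi,\varphi^c)$ is feasible and has a larger objective. Moreover, $\varphi^c$ is bounded on the compact $\Omega$ whenever $\varphi\in L^1(P)$ is bounded above on a set of positive measure. For general $\varphi$, one may first replace $\varphi$ by $\varphi^{cc}\ge\varphi$, which is bounded. Hence the supremum may be restricted to pairs of the form $(u,u^c)$, giving equality of the last two expressions. Finally, for \cref{eq:dual_conjugate}, substitute $u(x)=\|x\|^2-2\phi(x)$ and compute
\begin{equation*}
u^c(y)=\|y\|^2-2\sup_{x\in\Omega}\{\langle x,y\rangle-\phi(x)\}=\|y\|^2-2\phi^*(y),
\end{equation*}
where $\phi$ is extended by $+\infty$ off $\Omega$. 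This turns $\sup_u$ into $-2\inf_\phi\{\int\phi\,dP+\int\phi^*\,dQ\}$ plus the two second moments. Restricting the infimum to convex $\phi$ loses nothing, because replacing $\phi$ by $\phi^{**}\le\phi$ keeps $\phi^*$ unchanged and lowers $\int\phi\,dP$.
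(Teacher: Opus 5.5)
The paper gives no proof of its own for this background result; it defers to \cite{villani2003topics,villani2008optimal,peyre2020computationaloptimaltransport}. Your chain (optimal plan, cyclically monotone support, Rockafellar potential, then the $c$-transform and Legendre reductions) is the standard argument found there, and outsourcing the rerouting step is consistent with what the paper does.

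\textbf{Main gap: scope.} You take compactness of $\Omega$ from (S1), which makes the theorem's only hypothesis, finite second moments, vacuous. That hypothesis (repeated in \cref{thm:brenier}) signals the general version for possibly unbounded $\Omega$, which is what the cited references prove. The steps where you use boundedness are exactly where that version has content. To cover it:
\begin{itemize}
\item Obtain $\Gamma^\star$ from tightness of $\Pi(P,Q)$ and lower semicontinuity of $\Gamma\mapsto\int\|x-y\|^2d\Gamma$; continuity is not available.
\item For integrability of the Rockafellar potential, fix $(x_0,y_0)\in\operatorname{supp}\Gamma^\star$. The affine minorants $\phi(x)\ge\phi(x_0)+\langle y_0,x-x_0\rangle$ and $\phi^*(y)\ge\langle x_0,y\rangle-\phi(x_0)$ make the negative parts integrable. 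The identity $\phi(x)+\phi^*(y)=\langle x,y\rangle$ $\Gamma^\star$-a.s., together with $|\langle x,y\rangle|\le(\|x\|^2+\|y\|^2)/2$, then makes the positive parts integrable.
\item In the $c$-transform step, use $\psi(y)\le\varphi^c(y)\le\|x_0-y\|^2-\varphi(x_0)$. This gives $\varphi^c\in L^1(Q)$ from second moments rather than from boundedness.
\end{itemize}

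\textbf{Side claims that are false as written (compact case).} None is load-bearing, and each is easy to repair.
\begin{itemize}
\item $\tilde\phi^*$ need not be bounded on $\Omega$. For $Q=\delta_{y_0}$, Rockafellar's construction gives the affine $\phi(x)=\langle y_0,x-x_0\rangle$, and then $\tilde\phi^*=+\infty$ on $\Omega\setminus\{y_0\}$. You only need $\tilde\phi^*\in L^1(Q)$. This holds because $\tilde\phi^*$ is bounded below on $\Omega$ and bounded on the projection of $\operatorname{supp}\Gamma^\star$ onto the second factor, a set of full $Q$-measure. Alternatively, replace $\psi$ by the bounded function $\varphi^c$.
\item On compact $\Omega$, $\varphi^c$ is bounded below if and only if $\sup_\Omega\varphi<\infty$. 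Boundedness above on a set of positive measure does not suffice: take $\Omega=[0,1]$, $P$ uniform, and $\varphi(x)=x^{-1/2}$ for $x>0$ (any finite value at $0$). Then $\varphi\in L^1(P)$ but $\varphi^c\equiv-\infty$. For a feasible pair the needed bound is automatic, since $\varphi(x)\le\operatorname{diam}(\Omega)^2-\psi(y_0)$ for any $y_0$ with $\psi(y_0)$ finite.
\item Passing to $\varphi^{cc}$ cannot repair a $u$ that is unbounded above, because then $u^c\equiv-\infty$ and $u^{cc}\equiv+\infty$. Such $u$ contribute $-\infty$ to the last supremum and can simply be discarded.
\end{itemize}
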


Thus, computing $W_2^2(P,Q)$ reduces to solving $$ \sup_{\phi: \textup{convex}} \Bigl\{\int \phi dP + \int \phi^* dQ \Bigr\}. $$
More detailed derivations and proofs are given in previous literature, e.g., \cite{peyre2020computationaloptimaltransport,villani2003topics,villani2008optimal}.

Along with this duality result, we recall the fundamental existence and uniqueness theorem for the OT map.
Let $\mathcal{P}_{\textup{ac}}(\Omega)$ denote the set of absolutely continuous probability distributions on $\Omega$.

\begin{theorem}[Brenier's Theorem]\label{thm:brenier}
    Suppose that $P\in\mathcal{P}_{\textup{ac}}(\Omega)$ and that $P,Q$ have finite second moments.
    Then there exists a convex function $\phi_0$ such that the unique ($P$-almost surely) $W_2$-optimal transport map is given by
    $ \mathbf{T}_0 = \nabla\phi_0. $ 
    Moreover, an optimal dual pair for \cref{eq:dual_full} is $ \varphi_{0}(x)=\|x\|^2-2\phi_0(x), \psi_{0}(y)=\|y\|^2-2\phi_0^*(y), $
    and $(\phi_0,\phi_0^\ast)$ is the pair (potential function and its conjugate) generating the map $\mathbf{T}_0=\nabla\phi_0$.
\end{theorem}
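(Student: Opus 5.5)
We prove Brenier's theorem by combining Kantorovich duality (\cref{thm:kantorovich_duality}) with a complementary-slackness argument and the a.e.\ differentiability of convex functions. The steps are: existence of an optimal plan, existence of an optimal convex potential, localisation of the plan on the graph of $\partial\phi_0$, and then uniqueness and the dual formula.

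\textbf{Optimal plan.} Since $\Pi(P,Q)$ is tight, Prokhorov's theorem makes it weakly compact. The cost $(x,y)\mapsto\|x-y\|^2$ is continuous and bounded below, so $\Gamma\mapsto\int\|x-y\|^2\,d\Gamma$ is weakly lower semicontinuous. Hence a minimiser $\Gamma_0$ of \cref{eq:kantorovich} exists.

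\textbf{Optimal potential.} Write $u(x)=\|x\|^2-2\phi(x)$ and expand $\|x-y\|^2=\|x\|^2-2\langle x,y\rangle+\|y\|^2$. Then the $c$-transform becomes
\[
u^c(y)=\|y\|^2-2\phi^*(y).
\]
Replacing $u$ by $u^{cc}$ can only increase the dual objective. This shows the supremum in \cref{eq:dual_full} may be restricted to $c$-concave $u$, i.e.\ $\phi=\tfrac12(\|\cdot\|^2-u)$ convex and lower semicontinuous. This is exactly \cref{eq:dual_conjugate}.

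To show the supremum is attained I use the compactness of $\Omega$ from (S1). $c$-concave functions on $\Omega$ are uniformly Lipschitz, with constant at most $2\,\textup{diam}(\Omega)$. Normalising $u(x_0)=0$ (adding a constant to $u$ and subtracting it from $u^c$ changes nothing) gives uniform boundedness. Arzel\`a--Ascoli then yields a uniformly convergent maximising subsequence, and the limit is a maximiser $\phi_0$.

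I expect this attainment step to be the main obstacle in full generality, i.e.\ without compact support and assuming only finite second moments. In that case I would instead argue as follows. The support of $\Gamma_0$ is cyclically monotone, since otherwise a local rearrangement would lower the cost. Rockafellar's theorem then gives a convex $\phi_0$ whose subdifferential graph contains $\operatorname{supp}\Gamma_0$.

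\textbf{Localisation on $\partial\phi_0$.} By duality,
\[
\int\bigl(\phi_0(x)+\phi_0^*(y)-\langle x,y\rangle\bigr)\,d\Gamma_0=0 .
\]
By the Fenchel--Young inequality the integrand is nonnegative. It therefore vanishes $\Gamma_0$-a.s., i.e.\ $y\in\partial\phi_0(x)$ for $\Gamma_0$-a.e.\ $(x,y)$.

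A convex function is differentiable Lebesgue-a.e. Since $P\in\mathcal{P}_{\textup{ac}}(\Omega)$, $\phi_0$ is differentiable $P$-a.e., and there $\partial\phi_0(x)=\{\nabla\phi_0(x)\}$. Hence $\Gamma_0=(\mathrm{id},\nabla\phi_0)_{\#}P$, so $(\nabla\phi_0)_{\#}P=Q$.

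Every transport map $\mathbf{T}$ induces the plan $(\mathrm{id},\mathbf{T})_{\#}P$, so the Monge value is at least the Kantorovich value. Since $\nabla\phi_0$ attains the Kantorovich value, $\mathbf{T}_0=\nabla\phi_0$ solves \cref{def:otm-eq1}.

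\textbf{Uniqueness.} Let $\mathbf{T}$ be any optimal map. Then $(\mathrm{id},\mathbf{T})_{\#}P$ is an optimal plan. Applying the slackness argument with the same dual optimiser $\phi_0$ gives $\mathbf{T}(x)\in\partial\phi_0(x)=\{\nabla\phi_0(x)\}$ for $P$-a.e.\ $x$. Hence $\mathbf{T}=\nabla\phi_0$ $P$-a.s.

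\textbf{Dual pair.} Substituting $\phi_0$ into the change of variables above gives $\varphi_0(x)=\|x\|^2-2\phi_0(x)$ and $\psi_0(y)=\|y\|^2-2\phi_0^*(y)$. This pair is feasible by Fenchel--Young and optimal by the equality already established.
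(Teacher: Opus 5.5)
Your argument is correct, and it is the classical proof (Kantorovich duality with a $c$-concave maximiser, Fenchel--Young complementary slackness, and a.e.\ differentiability of convex functions) found in \cite{villani2003topics,villani2008optimal}; the paper gives no proof of its own and simply recalls Brenier's theorem from these references, so there is nothing to contrast. The one place to tighten is your non-compact fallback: there $\phi_0$ comes from Rockafellar's theorem rather than from duality, so localisation is immediate from $\operatorname{supp}\Gamma_0$ lying in the graph of $\partial\phi_0$, but before calling $(\varphi_0,\psi_0)$ an optimal dual pair and reusing $\phi_0$ for uniqueness, you must check $\phi_0\in L^1(P)$ and $\phi_0^*\in L^1(Q)$; this follows from their affine lower bounds, the identity $\phi_0(x)+\phi_0^*(y)=\langle x,y\rangle$ on $\operatorname{supp}\Gamma_0$, and the finite second moments.
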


\begin{corollary}[Semi-dual maximizer]\label{cor:semi_dual_potential}
    Assume that $P\in\mathcal{P}_{\textup{ac}}(\Omega)$ and $Q$ have finite second moments.
    Let $u^\star$ be a maximizer of the semi-dual problem in \cref{eq:dual_full}, and define $ \phi_0(x) := \tfrac{1}{2}\big(\|x\|^2-u^\star(x)\big), x \in \Omega. $
    Then $\phi_0$ is convex, $(\phi_0,\phi_0^\ast)$ attains \cref{eq:dual_conjugate}, and the unique optimal transport map satisfies
    $$
    \mathbf{T}_0(x) = \nabla\phi_0(x), \quad P\textup{-a.s.}
    $$
    Conversely, if $\phi_0$ is any convex function such that $\mathbf{T}_0=\nabla\phi_0$ is $W_2$-optimal, then 
    $$ u^\star(x) = \|x\|^2-2\phi_0(x) $$
    is a maximizer of the semi-dual problem in \cref{eq:dual_full}.
\end{corollary}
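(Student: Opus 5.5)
We would prove both directions by translating the semi-dual objective of \cref{thm:kantorovich_duality} into the convex-conjugate form \cref{eq:dual_conjugate}, and then using equality in the Fenchel--Young inequality. Write $M_2 := \int \|x\|^2 dP + \int \|y\|^2 dQ < \infty$.

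\textbf{Step 1 (change of variables).} For any $u$, set $\phi := \tfrac12(\|\cdot\|^2 - u)$. Expanding $\|x-y\|^2$ gives
$$u^c(y) = \|y\|^2 - 2\sup_{x}\{\langle x,y\rangle - \phi(x)\} = \|y\|^2 - 2\phi^*(y).$$
Hence
$$\int u\, dP + \int u^c\, dQ = M_2 - 2\Bigl(\int \phi\, dP + \int \phi^*\, dQ\Bigr).$$
Applying the same computation once more gives $u^{cc} = \|\cdot\|^2 - 2\phi^{**}$. So maximizing the semi-dual over $u$ is the same as minimizing $J(\phi) := \int\phi\, dP + \int\phi^*\, dQ$. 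By \cref{thm:kantorovich_duality}, the optimal value of $J$ is $(M_2 - W_2^2(P,Q))/2$.

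\textbf{Step 2 (convexity).} Let $u^\star$ be a maximizer and $\phi_0 = \tfrac12(\|\cdot\|^2 - u^\star)$. We have $\phi_0^{**} \le \phi_0$ and $(\phi_0^{**})^* = \phi_0^*$, so $J(\phi_0^{**}) \le J(\phi_0)$, with strict inequality unless $\phi_0 = \phi_0^{**}$ $P$-a.e. Optimality of $u^\star$ therefore forces $\phi_0 = \phi_0^{**}$ $P$-a.e. Thus $\phi_0$ agrees $P$-a.e. with a convex lower semicontinuous function, and we identify $\phi_0$ with this version; equivalently, we replace $u^\star$ by its $c$-concave version $u^{\star cc}$, which is again a maximizer. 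Then $(\phi_0, \phi_0^*)$ attains the infimum in \cref{eq:dual_conjugate}.

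\textbf{Step 3 (the map).} Let $\Gamma$ be an optimal plan. Then
$$\int \bigl(\phi_0(x) + \phi_0^*(y) - \langle x,y\rangle\bigr)\, d\Gamma = J(\phi_0) - \int \langle x,y\rangle\, d\Gamma = \tfrac12(M_2 - W_2^2) - \tfrac12(M_2 - W_2^2) = 0.$$
The integrand is nonnegative by Fenchel--Young, so it vanishes $\Gamma$-a.s., which means $y \in \partial\phi_0(x)$ for $\Gamma$-a.e. $(x,y)$. A convex function is differentiable Lebesgue-a.e. on the interior of its domain, and $P \in \mathcal{P}_{\textup{ac}}(\Omega)$, so $y = \nabla\phi_0(x)$ for $\Gamma$-a.e. $(x,y)$. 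Hence $\Gamma = (\mathrm{id}, \nabla\phi_0)_\# P$, and the uniqueness part of \cref{thm:brenier} gives $\mathbf{T}_0 = \nabla\phi_0$ $P$-a.s.

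\textbf{Step 4 (converse).} Let $\phi_0$ be convex with $\nabla\phi_0 = \mathbf{T}_0$ optimal, and put $u^\star = \|\cdot\|^2 - 2\phi_0$. Equality in Fenchel--Young, $\phi_0(x) + \phi_0^*(\nabla\phi_0(x)) = \langle x, \nabla\phi_0(x)\rangle$, holds for every differentiability point $x$. Integrating against $P$ and using $(\nabla\phi_0)_\# P = Q$ gives
$$J(\phi_0) = \int \langle x, \mathbf{T}_0(x)\rangle\, dP = \tfrac12(M_2 - W_2^2).$$
By Step 1 the semi-dual value of $u^\star$ is then exactly $W_2^2$, which is the supremum, so $u^\star$ is a maximizer.

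\textbf{Main obstacle.} We expect the main difficulty to be the measure-theoretic bookkeeping rather than any single inequality. One needs $u^\star \in L^1(P)$ and $\phi_0^* \in L^1(Q)$, which should follow from the finite second moments and from the finiteness of the optimal value. One must also check that $\phi_0$ is finite on a neighbourhood of $P$-almost every point, so that the a.e.\ differentiability of convex functions applies. Finally, "convex" in the statement must be read for the $c$-concave (lower semicontinuous) version of $u^\star$, since the maximizer is only determined $P$-a.e.
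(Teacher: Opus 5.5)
The paper does not prove this corollary. It is stated after \cref{thm:kantorovich_duality} and \cref{thm:brenier}, with proofs deferred to the standard references, so there is no in-paper argument to compare against. Your proposal is the standard Knott--Smith/Brenier argument, and it is correct. The substitution $\phi=\tfrac12(\|\cdot\|^2-u)$ turns $u^c$ into $\|\cdot\|^2-2\phi^*$ and turns $c$-concavity into convexity. Fenchel--Young equality on an optimal plan then gives the gradient structure.

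Your Steps 3 and 4 are genuinely needed rather than re-derivations. \cref{thm:brenier}, as stated, only produces one particular potential. On its own it says nothing about an arbitrary maximizer $u^\star$, or about an arbitrary convex $\phi_0$ with $\nabla\phi_0=\mathbf{T}_0$. A small simplification: in Step 3 you can take $\Gamma=(\mathrm{id},\mathbf{T}_0)_\#P$ directly. This gives $\mathbf{T}_0(x)\in\partial\phi_0(x)$ $P$-a.s. without invoking existence of an optimal plan or the uniqueness clause.

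Your caveat about convexity is real, and it shows the statement as written needs amending. Lowering $u^\star$ on a $P$-null set leaves $\int u^\star\,dP$ unchanged and can only increase $u^{\star c}$, so it produces another maximizer. (Only downward modifications are safe, but that suffices.) The corresponding $\phi_0$ is then raised at, say, a single interior point of $\Omega$, and is no longer convex. So ``convex'' holds only for the representative $u^{\star cc}$, which is exactly your reading.

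The bookkeeping you flag closes quickly:
\begin{itemize}
\item \emph{Admissibility in Step 2.} $u^{\star c}\in L^1(Q)$, because it is bounded above by $\|x_0-\cdot\|^2-u^\star(x_0)$ and its integral is finite at the optimum. Also $u^\star\le u^{\star cc}\le\|\cdot-y_0\|^2-u^{\star c}(y_0)$ for any $y_0$ with $u^{\star c}(y_0)$ finite. Hence $u^{\star cc}\in L^1(P)$ is an admissible competitor.
\item \emph{Differentiability in Step 3.} $\operatorname{dom}\phi_0^{**}$ is a convex set containing $\{\phi_0<\infty\}$, which has full $P$-measure. The boundary of a convex set is Lebesgue-null and $P\ll\mathcal{L}$, so $P$-a.e.\ point lies in the interior of the domain.
\item \emph{Integrability in Step 4.} Finiteness of an optimal value is not available here, since admissibility of $u^\star$ is what is being shown. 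Use instead that $\phi_0$ and $\phi_0^*$ have affine minorants, taken from any point where the other is finite. Their negative parts are therefore integrable under finite second moments. The sum $\phi_0(x)+\phi_0^*(\nabla\phi_0(x))=\langle x,\mathbf{T}_0(x)\rangle$ is $P$-integrable. Together these force $\phi_0\in L^1(P)$ and $\phi_0^*\in L^1(Q)$.
\end{itemize}
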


The dual problem selects the highest pair of ``roofs'' $(\varphi,\psi)$ that lie below the cost surface $c(x,y)=\|x-y\|^2$ everywhere.
At optimality, mass is transported only along the contact set where the roofs touch the cost.
For the quadratic cost, these roofs are generated by a single convex potential $\phi$ via conjugacy, and the optimal transport is given by the gradient field $x\mapsto\nabla\phi(x)$.

\subsection{Deep neural network function}\label{sec:appen-dnn}

\begin{definition}[DNN function]\label{def:mlp_ftn}
    A deep neural network (DNN) with $(L, \mathbf{d})$ architecture consists of
    \begin{itemize}[topsep=0pt, leftmargin=1.5em, labelsep=0.5em]
        \item $L \in \mathbb{N}$: the number of hidden layers;
        \item $\mathbf{d} = [ d_{0}, \ldots, d_{L+1} ]^{\top} \in \mathbb{N}^{L+2}$: the layer widths, where $d_{0} = d$ and $d_{L+1}$ are the input and output dimensions, respectively;
    \end{itemize}
    is mathematically defined as
    \begin{equation}\label{eq:mlp}
        f^{\textup{DNN}}_{\theta, \sigma}(x)
        := A_{L+1} \circ \sigma \circ A_{L} \circ \cdots \circ \sigma \circ A_{1}(x),
    \end{equation}
    where
    \begin{itemize}[topsep=0pt, leftmargin=1.5em, labelsep=0.5em]
        \item $A_{l}(x) := \mathbf{W}_{l} x + b_{l}$ with $\mathbf{W}_{l} \in \mathbb{R}^{d_{l} \times d_{l-1}}$ and $b_{l} \in \mathbb{R}^{d_{l}}$ for $l \in [L+1]$;
        \item $\sigma$ is an element-wise activation function (e.g., ReLU: $\sigma (t) = \max \{t, 0\}$ or ReQU: $\sigma (t) = \max \{t, 0\}^{2}$);
        \item $\theta := [ \textup{vec}(\mathbf{W}_{1})^{\top}, \ldots, \textup{vec}(\mathbf{W}_{L+1})^{\top}, b_{1}^{\top}, \ldots, b_{L+1}^{\top} ]^{\top}$ collects all parameters.
    \end{itemize}
\end{definition}

Given $f^{\textup{DNN}}_{\theta, \sigma}$ with $(L,\mathbf{d})$ in \cref{eq:mlp}, let $d_{\max} := \max \{ d_{1}, \ldots, d_{L} \}$ be the width of $f^{\textup{DNN}}_{\theta, \sigma}$.
For a vector $x = [x_{1}, \ldots, x_{p}]^{\top} \in \mathbb{R}^{p}$, define
$
\| x \|_{\infty} := \max_{i \in [p]} |x_i|.
$

\begin{definition}[Class of DNN functions]\label{def:mlp_class}
    Given an activation function $\sigma$, the class of DNN functions with at most $L$ layers (depth), at most $W$ neurons per layer (width), and output dimension $p \in \mathbb{N}$ is denoted as
    \begin{equation}
        \begin{split}
            \Phi^{\textup{DNN}}(L, W; \sigma, p)
            :=
            \big\{
            f^{\textup{DNN}}_{\theta, \sigma} \text{ with $(L, \mathbf{d})$ as in \cref{eq:mlp}}:
            d_{\max} \le W,
            \Vert \theta \Vert_{\infty} \le 1
            \big\}.
        \end{split}
    \end{equation}
    We abbreviate $\Phi^{\textup{DNN}}(L, W; \sigma, p)$ by $\Phi^{\textup{DNN}}(L, W; \sigma)$ when $p$ is clear from context.
\end{definition}

As we assume the true OT map is Lipschitz continuous, we also enforce the Lipschitz continuity to the DNN functions.
That is, for a given Lipschitz constant $\lambda > 0$, we define
$$
\Phi^{\textup{DNN}}_{\lambda}
:= \big\{ f \in \Phi^{\textup{DNN}}(L, W; \sigma) : \operatorname{Lip}(f) \le \lambda \big\},
$$
where $ \operatorname{Lip}(f) := \sup_{x\neq y} \frac{\|f(x)-f(y)\|}{\|x-y\|} $ denotes the Lipschitz constant of $f$.

\subsection{Proofs}\label{sec:appen-proofs-brot}

In this section, we present the full proof of our main theoretical result (\cref{thm:dnn_main}), with technical lemmas used to prove \cref{thm:dnn_main}.
Recall the assumptions used in the main body.

\begin{itemize}[topsep=0pt, leftmargin=1.5em, labelsep=0.5em]
    \item
    (S1)
    $\Omega$ is a compact, convex set with nonempty interior such that $\Omega \subseteq [0, 1]^{d}.$

    \item
    (S2)
    $\Omega$ is a standard set in the sense that
    there exist $\epsilon_{0}, \delta_{0} > 0$ s.t. for all $x \in \Omega$ and $\epsilon \in (0, \epsilon_{0}),$
    we have $\mathcal{L}(B(x, \epsilon) \cap \Omega) \ge \delta_{0} \mathcal{L}(B(x, \epsilon)),$
    where $\mathcal{L}$ is the Lebesgue measure on $\mathbb{R}^{d}.$

    \item
    \cref{assumption-1-2}: $P$ and $Q$ are absolutely continuous on $\Omega$ with densities $p$ and $q$ satisfying $\gamma^{-1} \le p(x), q(x) \le \gamma$ for all $x \in \Omega$ and some $\gamma > 0$.

    \item
    \cref{assumption-3}: the potential $\phi_{0}$ is convex, twice continuously differentiable on $\Omega$, and satisfies $\lambda^{-1} I_{d} \preceq \nabla^{2} \phi_{0}(x) \preceq \lambda I_{d}$ for all $x \in \Omega$, so that $\mathbf{T}_{0} = \nabla \phi_{0}$ is $\lambda$-Lipschitz.
\end{itemize}


\subsubsection{Technical lemmas}\label{sec:appen-lemmas}

\begin{lemma}[Proposition 14 of \cite{manole2024pluginestimationsmoothoptimal}]\label{lem:prop_14_manole}
    Let $\hat\Gamma=[\hat\gamma_{ij}]$ be OT plan between $P_n$ and $Q_m$ with respect to the quadratic cost.
    Then, under \cref{assumption-1-2} and \cref{assumption-3}, we have that
    \begin{equation}
        \mathbb{E} \left( \sum_{i=1}^{n} \sum_{j=1}^{m} \widehat{\gamma}_{ij} \Vert \mathbf{T}_{0}(X_{i}) - Y_{j} \Vert^{2} \right)
        \lesssim
        \begin{cases}
            \tilde{n}^{-1}, & d=1,\\
            \tilde{n}^{-1} \log \tilde{n}, & d=2,\\
            \tilde{n}^{-2/d}, & d\ge3.
        \end{cases}
    \end{equation}
\end{lemma}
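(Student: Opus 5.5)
The plan is to avoid any empirical-process or chaining argument. I would reduce the quantity to the expected squared $2$-Wasserstein distance between an empirical measure and its population, using only the curvature of $\phi_0$ and the optimality of $\widehat\Gamma$.

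Write $\widehat\pi:=\sum_{i,j}\widehat\gamma_{ij}\delta_{(X_i,Y_j)}$ and $D(x,y):=\phi_0(x)+\phi_0^*(y)-\langle x,y\rangle\ge 0$. By \cref{assumption-3}, $\phi_0^*$ has Hessian between $\lambda^{-1}I_d$ and $\lambda I_d$ on $\mathbf{T}_0(\Omega)$, which is the support of $Q$. Since $\nabla\phi_0^*(\mathbf{T}_0x)=x$, we have $D(x,y)=\phi_0^*(y)-\phi_0^*(\mathbf{T}_0x)-\langle\nabla\phi_0^*(\mathbf{T}_0x),y-\mathbf{T}_0x\rangle$. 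Strong convexity and smoothness of $\phi_0^*$ then give the two-sided comparison
\[
\tfrac{1}{2\lambda}\|\mathbf{T}_0(x)-y\|^2\le D(x,y)\le \tfrac{\lambda}{2}\|\mathbf{T}_0(x)-y\|^2 .
\]
If $\mathbf{T}_0(\Omega)$ is not convex, so that this Taylor argument cannot be run on a segment, I would instead use an extension of $\phi_0$ that preserves the Hessian bounds. Applying the lower bound under $\widehat\pi$ gives $\sum_{ij}\widehat\gamma_{ij}\|\mathbf{T}_0(X_i)-Y_j\|^2\le 2\lambda\int D\,d\widehat\pi$.

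Next, for any coupling $\pi\in\Pi(P_n,Q_m)$ we have $\int D\,d\pi=\tfrac12\int\|x-y\|^2d\pi-\tfrac12\Delta_{nm}$, where
\[
\Delta_{nm}:=\int(\|x\|^2-2\phi_0)\,dP_n+\int(\|y\|^2-2\phi_0^*)\,dQ_m .
\]
Hence $\widehat\pi$, which minimizes the quadratic cost, also minimizes $\int D\,d\pi$, and $\int D\,d\widehat\pi=\tfrac12\bigl(W_2^2(P_n,Q_m)-\Delta_{nm}\bigr)$. I will not need $\mathbb{E}\Delta_{nm}=W_2^2(P,Q)$ (\cref{thm:brenier}); instead I bound $\min_\pi\int D\,d\pi$ directly. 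By the upper comparison,
\[
\int D\,d\widehat\pi\le\tfrac{\lambda}{2}\min_{\pi\in\Pi(P_n,Q_m)}\int\|\mathbf{T}_0(x)-y\|^2d\pi=\tfrac{\lambda}{2}W_2^2\bigl((\mathbf{T}_0)_\#P_n,Q_m\bigr).
\]
By the triangle inequality this is at most $\lambda\bigl(W_2^2((\mathbf{T}_0)_\#P_n,Q)+W_2^2(Q,Q_m)\bigr)$.

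The key observation is that $(\mathbf{T}_0)_\#P_n=\frac1n\sum_i\delta_{\mathbf{T}_0(X_i)}$ is an empirical measure of $n$ i.i.d.\ draws from $Q$. Therefore both terms are of the form $\mathbb{E}W_2^2(Q_k,Q)$ with $k\in\{n,m\}\ge\tilde n$. It remains to invoke the known rates for empirical measures with density bounded above and below (\cref{assumption-1-2}) on a compact, standard (regular) domain, cf.\ (S1)--(S2). These rates are $k^{-1}$ for $d=1$ (quantile representation on an interval), $k^{-1}\log k$ for $d=2$ (Ajtai--Koml\'os--Tusn\'ady type bounds), and $k^{-2/d}$ for $d\ge3$ (dyadic-partition bounds in the style of Fournier--Guillin or Dereich et al., with the lower density bound removing the $d=4$ log factor). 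The support of $Q$ is $\mathbf{T}_0(\Omega)$, which is a bi-Lipschitz image of $\Omega$. It therefore inherits the standard-set property with modified constants, so these results apply.

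The main obstacle is this last step. It requires checking that the cited empirical-$W_2$ rates, especially the $d=1,2$ cases without spurious log factors and the $d\ge3$ case at the sharp $k^{-2/d}$, hold for densities bounded away from zero on the bi-Lipschitz image $\mathbf{T}_0(\Omega)$. A secondary point is ensuring the Hessian bounds on $\phi_0^*$ are usable along the segments needed in the first display. The comparison and duality steps themselves are routine.
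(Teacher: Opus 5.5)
Your proposal is correct and follows essentially the paper's route. Like the paper, you use the two-sided Bregman comparison and then the identity $2\int D\,d\widehat\pi=W_2^2(P_n,Q_m)-\Delta_{nm}$, whose expectation is exactly the excess cost $\mathbb{E}W_2^2(P_n,Q_m)-W_2^2(P,Q)$ the paper invokes; your pathwise bound $2\int D\,d\widehat\pi\le\lambda W_2^2((\mathbf{T}_0)_\#P_n,Q_m)$ makes explicit the rate step the paper delegates to \cite{manole2024pluginestimationsmoothoptimal}.

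Two small corrections. First, under \cref{assumption-1-2} the support of $Q$ is $\Omega$ itself, so $\mathbf{T}_0(\Omega)=\Omega$ is convex, your segment concern disappears, and no extension of $\phi_0$ is needed. Second, for $d=3,4$ the Fournier--Guillin dyadic bounds give only $k^{-1/2}$ and $k^{-1/2}\log k$, and those of Dereich et al.\ need $d\ge5$, so the lower density bound does more than remove a logarithm at $d=4$: the $k^{-2/d}$ rate for $d=3,4$ must come from an argument that genuinely exploits it, e.g.\ a negative Sobolev $\dot H^{-1}$ comparison.
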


\begin{proof}[Proof of \cref{lem:prop_14_manole}]
    Under \cref{assumption-3}, we have
    \begin{equation}
        \frac{1}{2\lambda} \Vert x - y \Vert^{2} \le \phi_{0}(y) - \phi_{0}(x) - \langle \mathbf{T}_{0}(x), y - x \rangle \le \frac{\lambda}{2} \Vert x - y \Vert^{2},
        \quad
        \forall x, y \in \Omega.
    \end{equation}
    Using these inequalities and the fact that
    \begin{equation}
        \mathbb{E} \left( \sum_{i=1}^{n} \sum_{j=1}^{m} \widehat{\gamma}_{ij} \Vert \mathbf{T}_{0}(X_{i}) - Y_{j} \Vert^{2} \right)
        \asymp
        \mathbb{E} \left( W_{2}^{2}(P_{n}, Q_{m}) - W_{2}^{2}(P, Q) \right)
    \end{equation}
    concludes the proof.
    See \cite{manole2024pluginestimationsmoothoptimal} for the detailed proof.
\end{proof}

\begin{lemma}[Lemma 42 of \cite{manole2024pluginestimationsmoothoptimal}]\label{lem:voronoi_mass_diam}
    Suppose that $\Omega$ satisfies conditions \textup{(S1)}-\textup{(S2)} and assume \cref{assumption-1-2}.
    Let $V_{i} = \{ x \in \Omega : \Vert x - X_i \Vert \le \Vert x - X_j \Vert, \forall j \neq i \}$ for $i \in [n].$
    Then, there exist constants $C_1$ and $C_2>0$, depending only on $d,\gamma,\epsilon_0,\delta_0$, such that
    
    \begin{enumerate}
        \item For all $\delta\in(0,1)$,
        \begin{equation}
            \mathbb P\left(
            \max_{1\le i\le n} P(V_i) \;\ge\; \frac{C_1}{n}\Big[d\log n+\log(1/\delta)\Big]
            \right) \;\le\; \delta.
        \end{equation}

        \item
        \begin{equation}
            \mathbb E\left[ \max_{1\le i\le n} \textup{diam}(V_i)^2 \right]
            \le C_2\left(\frac{\log n}{n}\right)^{2/d}.
        \end{equation}
\end{enumerate}
\end{lemma}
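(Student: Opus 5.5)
We prove the two parts separately. Part 1 uses a cone-partition argument and needs only the density bound; Part 2 uses a covering argument and needs the density bound together with \textup{(S1)}--\textup{(S2)}.

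\emph{Part 1.} Fix a finite family of closed convex cones $C_1,\dots,C_K$ with apex at the origin, each of angular radius less than $\pi/6$, covering $\mathbb{R}^d$. Here $K=K(d)$ depends only on $d$; for instance, take cones around a $\pi/12$-net of the sphere, so $\log K\lesssim d$.

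The geometric fact is elementary. Let $x\in\mathbb{R}^d$, and let $z,w\in x+C_k$ with $\|w-x\|\le\|z-x\|$. Since the angle at $x$ between $z-x$ and $w-x$ is less than $\pi/3$, the law of cosines gives $\|z-w\|<\|z-x\|$ whenever $w\neq x$.

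Condition on $X_i=x$. For each $k$, let $R_k$ be the distance from $x$ to the nearest other sample point lying in $x+C_k$, with $R_k=\infty$ if there is none. By the geometric fact, every $z\in V_i\cap(x+C_k)$ satisfies $\|z-x\|\le R_k$, and hence
\[
V_i\subseteq\bigcup_{k=1}^K \bigl(B(x,R_k)\cap(x+C_k)\bigr).
\]
Fix $t\in(0,1)$ and let $r_k(t)$ be the smallest radius with $P\bigl(B(x,r)\cap(x+C_k)\bigr)\ge t$. This is well defined because $r\mapsto P(B(x,r)\cap(x+C_k))$ is continuous and nondecreasing, since $P$ has a density. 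Then
\[
\bigl\{P\bigl(B(x,R_k)\cap(x+C_k)\bigr)>t\bigr\}\subseteq\bigl\{\text{no other sample lies in } B(x,r_k(t))\cap(x+C_k)\bigr\},
\]
and the latter event has probability at most $(1-t)^{n-1}\le e^{-(n-1)t}$.

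A union bound over $k\in[K]$ and $i\in[n]$ shows that, with probability at least $1-nKe^{-(n-1)t}$, every $i$ satisfies $P(V_i)\le Kt$. Choosing $t=\frac{2}{n}\bigl[\log n+\log K+\log(1/\delta)\bigr]$ makes the failure probability at most $\delta$. Since $\log K\lesssim d$, this yields the bound with some $C_1=C_1(d)$.

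\emph{Part 2.} Let $\rho:=\sup_{y\in\Omega}\min_j\|y-X_j\|$. For $y\in V_i$ we have $\|y-X_i\|=\min_j\|y-X_j\|\le\rho$, so $\operatorname{diam}(V_i)\le 2\rho$ for every $i$. It therefore suffices to show $\mathbb{E}\rho^2\lesssim(\log n/n)^{2/d}$.

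Take an $\varepsilon$-net $\{z_l\}_{l=1}^N$ of $\Omega\subseteq[0,1]^d$ with $N\lesssim\varepsilon^{-d}$. If every $B(z_l,\varepsilon)\cap\Omega$ contains a sample point, then $\rho\le2\varepsilon$. By \textup{(S2)} and \cref{assumption-1-2}, for $\varepsilon<\epsilon_0$,
\[
P\bigl(B(z_l,\varepsilon)\cap\Omega\bigr)\ge\gamma^{-1}\delta_0\,\mathcal{L}\bigl(B(0,1)\bigr)\,\varepsilon^d=:c\,\varepsilon^d .
\]
Hence the probability that some ball in the net is empty is at most $N e^{-nc\varepsilon^d}$.

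Choose $\varepsilon^d=C\log n/n$ with $C$ large enough that this probability is $\lesssim n^{-2}$. On the bad event we use the trivial bound $\rho\le\operatorname{diam}(\Omega)\le\sqrt d$ from \textup{(S1)}. This gives
\[
\mathbb{E}\rho^2\le 4\varepsilon^2+d\,n^{-2}\lesssim(\log n/n)^{2/d},
\]
which proves the claim with $C_2$ depending on $d,\gamma,\epsilon_0,\delta_0$. Small $n$, for which $\varepsilon\ge\epsilon_0$, are absorbed into the constant.

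\emph{Main obstacle.} The delicate step is Part 1. Voronoi cells are not drawn from a fixed VC class, so a uniform deviation bound cannot be applied to them directly. The cone-partition trick handles this by reducing each cell to finitely many "empty ball-in-cone" events, each of which has an explicit exponential tail. The points to verify are the angle constraint $<\pi/3$, which makes the geometric fact work, and the continuity of $r\mapsto P(B(x,r)\cap(x+C_k))$.
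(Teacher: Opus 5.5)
Your proof is correct, but it takes a different route from the paper. The paper, quoting Manole et al., gets both parts from one uniform, VC-type fact about balls: with probability at least $1-\delta$, every ball $B$ with $P(B)\ge\frac{C}{n}(d\log n+\log(1/\delta))$ contains a sample point. For $x\in V_i$, the open ball centred at $x$ of radius $\|x-X_i\|$ contains no sample. So \textup{(S2)} and $p\ge\gamma^{-1}$ force $\|x-X_i\|\lesssim\bigl((d\log n+\log(1/\delta))/n\bigr)^{1/d}$, simultaneously for all $i$ and all $x\in V_i$. Part 1 then follows from $p\le\gamma$, and part 2 from integrating the tail.

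You instead treat the two parts separately. Part 1 uses a Stone-type cone covering, which reduces each cell to $K(d)$ fixed ``empty ball-in-cone'' events with explicit tails. Part 2 replaces the uniform bound over all balls by a union bound over a finite $\varepsilon$-net.

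Your route buys two things. It is self-contained, since it needs no relative-deviation or VC inequality. It also proves part 1 under much weaker hypotheses: it uses only that $P$ has a density, not \textup{(S1)}--\textup{(S2)} or the bounds on $p$. So your $C_1$ depends on $d$ alone, through the factor $K$, which is exponential in $d$. The paper's route buys economy: a single lemma gives a high-probability bound on $\max_i\operatorname{diam}(V_i)$, and the mass bound is inherited from it rather than proved separately.

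Two minor points need tightening. First, $r_k(t)$ is undefined when $P(x+C_k)<t$; in that case the bad event is empty, so nothing is lost. Second, absorbing $\log n+\log K$ into $d\log n$, and upgrading your strict inequality to the stated ``$\ge$'', needs $n\ge 2$ and a slightly larger $C_1$. That restriction is forced by the statement itself: for $n=1$ one has $P(V_1)=1$ and $\operatorname{diam}(V_1)>0$, so neither bound can hold as literally written.
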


\begin{proof}[Proof of \cref{lem:voronoi_mass_diam}]
    This is a direct corollary of the following lemma, as shown in \cite{manole2024pluginestimationsmoothoptimal}:
    There exists a universal constant $C > 0$ such that 
    $P(B) \ge \frac{C}{n} \left( d \log n + \log (1/\delta) \right)$ implies $P_{n}(B) > 0$
    with probability $1 - \delta$ at least for any $\delta \in (0, 1).$
\end{proof}

\begin{lemma}[Lemma 44 of \cite{manole2024pluginestimationsmoothoptimal}]\label{lem:lambda_lipschitz_partition}
    Under \cref{assumption-1-2}, \cref{assumption-3}, (S1), and (S2), for any $\lambda$-Lipschitz map $f:\Omega\to\Omega$, we have
    \begin{equation}
        \mathbb E
        \left( \sum_{i=1}^n \int_{V_i} \|f(X_i)-f(x)\|^2 dP(x) \right)
        \lesssim \left( \frac{\log n}{n} \right)^{2/d},
    \end{equation}
    and
    \begin{equation}
        \mathbb E
        \left(
        n \sum_{i=1}^{n} \sum_{j=1}^{m} \widehat{\gamma}_{ij} \Vert \mathbf{T}_{0}(X_{i}) - Y_{j} \Vert^{2}
        \left( \max_{1\le i\le n} P(V_i) \right)
        \right)
        \lesssim
        \begin{cases}
            \tilde{n}^{-1} \log \tilde{n}, & d=1,\\
            \tilde{n}^{-1} (\log \tilde{n})^{2}, & d=2,\\
            \tilde{n}^{-2/d} \log \tilde{n}, & d\ge3.
        \end{cases}
    \end{equation}
\end{lemma}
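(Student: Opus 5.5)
The plan is to prove the two displays separately. The first follows from the Lipschitz property and the Voronoi diameter bound. The second comes from splitting on the high-probability event of \cref{lem:voronoi_mass_diam}(1) and then invoking \cref{lem:prop_14_manole}.

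\emph{First display.} Since $f$ is $\lambda$-Lipschitz, $\|f(X_i)-f(x)\|^2\le \lambda^2\|X_i-x\|^2$. For $x\in V_i$ we also have $X_i\in V_i$, so $\|X_i-x\|\le \textup{diam}(V_i)$. The cells $V_i$ cover $\Omega$ and overlap only on Lebesgue-null boundaries, which are $P$-null by \cref{assumption-1-2}, so $\sum_i P(V_i)=1$. Hence
\begin{equation*}
\sum_{i=1}^n\int_{V_i}\|f(X_i)-f(x)\|^2\,dP(x)\le \lambda^2\max_{i}\textup{diam}(V_i)^2\sum_{i=1}^n P(V_i)=\lambda^2\max_i \textup{diam}(V_i)^2 .
\end{equation*}
Taking expectations and applying \cref{lem:voronoi_mass_diam}(2) gives the bound $(\log n/n)^{2/d}$.

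\emph{Second display.} Write $S:=\sum_{i,j}\widehat\gamma_{ij}\|\mathbf{T}_0(X_i)-Y_j\|^2$ and $M:=\max_i P(V_i)$. For $\delta=n^{-2}$, let
\begin{equation*}
E:=\Bigl\{M<\tfrac{C_1}{n}\bigl(d\log n+2\log n\bigr)\Bigr\},
\end{equation*}
so that $\mathbb P(E^c)\le n^{-2}$ by \cref{lem:voronoi_mass_diam}(1).

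On $E$, we have $nSM\le C_1(d+2)\log n\cdot S$. Taking expectations and using \cref{lem:prop_14_manole} for $\mathbb{E}S$ yields $\log n$ times $\tilde n^{-1}$, $\tilde n^{-1}\log\tilde n$, or $\tilde n^{-2/d}$, according as $d=1$, $d=2$, or $d\ge3$.

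On $E^c$, use deterministic bounds. By (S1), $\Omega\subseteq[0,1]^d$. Moreover $\mathbf{T}_0(X_i)\in\Omega$ (it lies in the support of $Q$) and $Y_j\in\Omega$, so each squared distance is at most $d$. Since $\sum_{ij}\widehat\gamma_{ij}=1$, this gives $S\le d$, and trivially $M\le1$. The contribution of $E^c$ is therefore at most $nd\,\mathbb P(E^c)\le d/n\le d/\tilde n\lesssim \tilde n^{-2/d}\vee\tilde n^{-1}$, which is negligible. Adding the two pieces gives the claimed rates, provided the $\log n$ factor can be replaced by $\log \tilde n$.

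\emph{Main obstacle.} The difficulty is exactly this last replacement. \cref{lem:voronoi_mass_diam} produces $\log n$, while the target rate is stated in terms of $\tilde n=n\wedge m$. When $n\le m$ the two coincide. When $n>m$, I would first try to argue under the implicit regime $n\lesssim \textup{poly}(m)$, so that $\log n\lesssim\log\tilde n$; this is the convention under which \cite{manole2024pluginestimationsmoothoptimal} states the result. Alternatively, I would redo the splitting with $\delta=\tilde n^{-2}$ so that only $\log(1/\delta)\asymp\log\tilde n$ enters. The residual $d\log n$ term in $E$ would still need that comparability assumption, and I would state it explicitly if it cannot be avoided.
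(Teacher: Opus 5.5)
Your argument is correct and is the natural one. The paper gives no proof of its own here, only the citation to Lemma 44 of \cite{manole2024pluginestimationsmoothoptimal}. The two lemmas it records alongside, \cref{lem:voronoi_mass_diam} and \cref{lem:prop_14_manole}, are exactly the inputs you combine: Lipschitz continuity plus the cell-diameter bound for the first display, and truncation on the cell-mass event plus the plan-cost bound for the second. Your first bound, $\lambda^2\max_i\textup{diam}(V_i)^2$, holds pathwise and uniformly over all $\lambda$-Lipschitz $f$, and it never uses $f(\Omega)\subseteq\Omega$. That uniformity is what justifies how the proof of \cref{thm:dnn_main} applies the lemma to the data-dependent $\widehat{\mathbf T}$ with constant $\lambda+c'$; a fixed-$f$ bound in expectation would not license that step.

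The obstacle you isolate is genuine and cannot be removed. In your notation, you have proved $\mathbb{E}(nSM)\lesssim\kappa_{\tilde n}\log n$, where $\kappa_{\tilde n}$ is the rate of \cref{lem:prop_14_manole}, and this is the correct form. It agrees with the displayed $\log\tilde n$ version only under a comparability condition such as $n\le\tilde n^{k}$ for a fixed $k$; without one, the displayed bound fails.

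To see this for $d\ge3$, first bound $nM$ from below. With probability tending to one, some $X_i$ has no other sample point within distance $(a\log n/n)^{1/d}$; count isolated points by a first and second moment argument. By (S2) and \cref{assumption-1-2}, its cell then has $P$-mass $\gtrsim\log n/n$, so $nM\gtrsim\log n$.

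Next bound $S$ from below. We have $S\ge W_2^2((\mathbf T_0)_{\#}P_n,Q_m)$. Every measure with at most $m$ atoms is at $W_2^2$-distance $\gtrsim m^{-2/d}$ from $Q$, while $W_2^2(Q,(\mathbf T_0)_{\#}P_n)\lesssim n^{-2/d}$ in expectation. Hence $S\gtrsim m^{-2/d}$ once $n/m$ is large.

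Both lower bounds hold on a single event depending only on the $X$'s, whose probability tends to one. Therefore $\mathbb{E}(nSM)\gtrsim m^{-2/d}\log n$. Taking $n=\lceil e^{m}\rceil$, this exceeds $m^{-2/d}\log m$ by the unbounded factor $m/\log m$.

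Your fallback with $\delta=\tilde n^{-2}$ does not rescue the statement either. The $E^c$ contribution becomes $nd\,\tilde n^{-2}$, which is not small when $n\gg\tilde n^{2}$, and the $d\log n$ term you already noticed remains. So the right resolution is the one you propose: state the comparability assumption explicitly, or keep $\log n$ in the bound. That weaker form is also all that \cref{thm:dnn_main} can legitimately inherit.
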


\begin{lemma}\label{lem:requ_simul_correct}
    Fix $\beta>2$ and $p,d\in\mathbb N$.
    For any $H>0,$ an integer $K \ge 2,$ and any $f:[0,1]^d\to\mathbb R^p$ with $f \in \mathcal{C}^\beta ([0,1]^d,H)$, there exists a DNN function $\mathbf{T}_f:[0,1]^d\to\mathbb R^p$ in $\Phi^{\textup{DNN}}(L, W; \sigma)$ where
    $
    L \gtrsim \log d+\lfloor\beta\rfloor+\log\log H,
    W \gtrsim (p\vee d) (K+\lfloor\beta\rfloor)^d
    $
    and $\sigma(t) = (t)_{+}^{2}$ (i.e., the ReQU activation),
    such that for all $\ell\in\{0,1,\dots,\lfloor\beta\rfloor\}$,
    \begin{equation}
        \sup_{\mathbf{x} \in [0, 1]^{d}} \| \nabla^\ell f(\mathbf{x}) - \nabla^\ell \mathbf{T}_f(\mathbf{x}) \|
        = \| \nabla^\ell f - \nabla^\ell \mathbf{T}_f \|_{\infty, [0,1]^d}
        \le C(\beta, d, H) \frac{1}{K^{\beta - \ell}}
    \end{equation}
    where $C(\beta, d, H) > 0$ is a constant depending on $\beta, d,$ and $H.$
    In particular, for $p = 1$,
    $$
    \|\nabla f - \nabla \mathbf{T}_f \|_{\infty, [0,1]^d} \lesssim K^{-(\beta-1)},
    \quad
    \|\nabla^2 f - \nabla^2 \mathbf{T}_f \|_{\infty, [0,1]^d} \lesssim K^{-(\beta-2)}.
    $$
\end{lemma}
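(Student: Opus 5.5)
The plan is to use a classical spline approximation result for $f$ and then show that a ReQU network can represent the approximating spline \emph{exactly}. The derivative bounds then come from spline theory, and the network only has to reproduce a piecewise polynomial. It suffices to treat each output coordinate $f_k$, $k\in[p]$, separately and stack the $p$ resulting subnetworks in parallel. This stacking is where the factor $p$ in the width comes from.

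\textbf{Step 1 (spline approximation).} Set $q:=\lfloor\beta\rfloor\ge 2$. Take the uniform knot sequence of step $1/K$ on $[0,1]$, extended by $q$ knots on each side. Let $\{B_{\mathbf j}\}$ be the tensor-product B-splines of coordinate degree $q$; there are $(K+q)^d$ of them. Let $Q_K f_k=\sum_{\mathbf j} c_{\mathbf j}B_{\mathbf j}$ be a standard quasi-interpolant built from local linear functionals. By the Schumaker / de Boor simultaneous-approximation theorem, for $f_k\in\mathcal C^\beta([0,1]^d,H)$ and every $\ell\le q$,
\[
\|\nabla^\ell f_k-\nabla^\ell Q_Kf_k\|_{\infty,[0,1]^d}\le C(\beta,d)\,H\,K^{-(\beta-\ell)}.
\]
This holds because the local error is controlled by the Taylor remainder of order $\beta$ on cells of size $1/K$, and $\ell$ derivatives cost $K^{\ell}$. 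The coefficients also satisfy $|c_{\mathbf j}|\lesssim H$.

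\textbf{Step 2 (exact ReQU representation).} Each univariate B-spline of degree $q$ is a finite combination of truncated powers $(t-t_i)_+^{q}$, with coefficients given by divided differences of size $\lesssim K^{q}$. ReQU realizes the following operations exactly:
\[
t^2=\sigma(t)+\sigma(-t),\qquad st=\tfrac14\bigl[(s+t)^2-(s-t)^2\bigr],\qquad \sigma(t)=t_+^2.
\]
From these, $t_+^{q}$ is obtained as a product of $t_+^2$ (or $t_+$) factors, using $O(\log q)$ layers. The identity $t=\tfrac12[(t+1)^2-t^2-1]$ lets values be carried forward through layers. The $d$-fold tensor product $B_{\mathbf j}(\mathbf x)=\prod_{r} B_{j_r}(x_r)$ is then formed by a binary multiplication tree of depth $O(\log d)$. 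This yields depth $\lesssim \log d+q$ and width $\lesssim d(K+q)^d$ per coordinate. The final linear layer applies the coefficients $c_{\mathbf j}$.

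\textbf{Step 3 (the constraint $\|\theta\|_\infty\le 1$).} This is the main obstacle. The coefficients $c_{\mathbf j}\sim H$ and the divided-difference factors $\sim K^{q}$ exceed $1$, so they cannot be placed directly as weights. My first attempt would be to rescale inputs so that every intermediate quantity lies in $[-1,1]$. I would then regenerate large constants inside the network by repeated squaring: a constant $c$ is produced as $2^{2^k}$ using $k\approx\log\log c$ ReQU layers with unit weights, and the product gate multiplies it in. Since $\log\log H$ layers suffice for $H$, this accounts for the $\log\log H$ term in $L$. The $K^{q}$ factors can either be absorbed by distributing the $1/K$ knot spacing multiplicatively across the $q$ factors of each truncated power, or be generated in the same way. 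Because every operation in Steps 2–3 is exact, $\mathbf T_f=(Q_Kf_1,\dots,Q_Kf_p)$ on $[0,1]^d$. The claimed derivative bounds, with constant $C(\beta,d,H)$, then follow from Step 1. The specialization to $p=1$, $\ell=1,2$ is immediate.
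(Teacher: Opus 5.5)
\textbf{Verdict: the overall route is right, but Step~3 does not deliver the depth the statement requires.} Your construction (tensor-product B-spline quasi-interpolant with simultaneous derivative bounds, exact ReQU realization of the spline, repeated squaring to build large constants under $\|\theta\|_\infty\le 1$) is essentially the one behind Theorem~1 of \cite{BELOMESTNY2023242}. The paper cites that theorem as its entire proof.

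\textbf{The gap.} The statement requires depth $L\lesssim\log d+\lfloor\beta\rfloor+\log\log H$, independent of $K$. The paper relies on this: $L\asymp 1$ while only $W_{nm}$ grows with $\tilde n$. Suppose the factors $K^{q}$ (equivalently, the map $x_r\mapsto Kx_r$ and knot offsets up to $K+q$) are ``generated in the same way'' by repeated squaring. That costs $\lceil\log_2\log_2 K^{q}\rceil\approx\log_2 q+\log_2\log_2 K$ extra layers. Your network then has depth of order $\log d+q+\log\log H+\log\log K$, which leaves the stated class as $K\to\infty$.

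Your alternative does not help. Writing $K^q(t-t_i)_+^q=(K(t-t_i))_+^q$ still requires forming $K(t-t_i)$ from $t\in[0,1]$ with parameters bounded by $1$. Rescaling is no remedy either. The basis functions have derivatives of order $K$, so large internal scales are unavoidable. Also, $\|\theta\|_\infty\le 1$ constrains parameters, not activations, so keeping intermediate quantities in $[-1,1]$ is neither needed nor achievable.

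\textbf{The fix.} Pay for the $K$-scale with width, which the statement grants ($W\gtrsim (K+q)^d\ge K$), rather than with depth.
\begin{itemize}
\item Put $K$ parallel copies of the four neurons $\sigma(\pm(x_r+1))$, $\sigma(\pm x_r)$ in the first layer.
\item With weights $\pm\tfrac12$, each copy contributes $\tfrac12[(x_r+1)^2-x_r^2]=x_r+\tfrac12$. The $K$ copies sum to the pre-activation $Kx_r+K/2$.
\item Add $O(K+q)$ constant-one neurons ($\sigma(1)=1$) with weights $\pm\tfrac12$. This shifts the pre-activation to $Kx_r-i$ for every knot index $i$.
\item All of this uses $O(1)$ layers and $O(d(K+q))$ width.
\end{itemize}
The univariate factors then become cardinal B-splines $\tfrac{1}{q!}\sum_{i=0}^{q+1}(-1)^i\binom{q+1}{i}(Kx_r-j-i)_+^q$. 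Their coefficients depend only on $q$. So the only constant that must be grown by squaring is the one of size $\asymp H$, which is where the $\log\log H$ term comes from.

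\textbf{Two smaller fixes.}
\begin{itemize}
\item $t_+$ is not exactly representable, since ReQU networks are $C^1$. Use $t_+^{q}=t_+^2\,t^{q-2}$, which is valid because $q\ge 2$.
\item Stacking $p$ separate subnetworks gives width of order $p\,d\,(K+q)^d$, not $(p\vee d)(K+q)^d$. Share the B-spline layer across outputs, and duplicate only the final linear combination and the product with the $H$-constant.
\end{itemize}
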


\begin{proof}[Proof of \cref{lem:requ_simul_correct}]
    This is a direct consequence of Theorem 1 in \cite{BELOMESTNY2023242}.
\end{proof}

\begin{lemma}\label{lem:repu-partial-ours}
    Let $f\in \Phi^{\textup{DNN}} (L,W; \textup{ReQU}, 1)$ be any given ReQU DNN on $[0,1]^d$ such that there exist $B,B'>0$ satisfying
    $$ \|f\|_{\infty, [0,1]^d}\le B,
    \textup{ and }
    \max_{1\le j\le d} \|\partial_{x_j}f\|_{\infty, [0,1]^d}\le B'. $$
    Then, the gradient
    $ \nabla f=(\partial_{x_1}f,\dots,\partial_{x_d}f):[0,1]^d\to\mathbb R^d $
    is exactly implemented by a DNN with mixed activations (ReLU and ReQU), that is,
    \begin{equation}
        \nabla f \in \Phi^{\textup{DNN}} \left(3L+3, 6W; \textup{Mix of ReLU and ReQU}, d \right).
    \end{equation}
\end{lemma}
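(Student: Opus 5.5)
We plan to prove Lemma~\ref{lem:repu-partial-ours} by differentiating the network layer by layer with the chain rule, and then implementing each layer's derivative exactly with a small block of ReLU/ReQU units. Write $f = A_{L+1}\circ\sigma\circ A_L\circ\cdots\circ\sigma\circ A_1$ with $\sigma(t)=(t)_+^2$. Set $z_l = A_l(h_{l-1})$ and $h_l=\sigma(z_l)$, with $h_0=x$. The chain rule gives, for each coordinate $j$,
\begin{equation*}
\partial_{x_j} f(x) = \mathbf{W}_{L+1}\,\partial_{x_j} h_L,
\qquad
\partial_{x_j} h_l = \sigma'(z_l)\odot \mathbf{W}_l\,\partial_{x_j} h_{l-1},
\qquad
\partial_{x_j} h_0 = e_j .
\end{equation*}
Here $\sigma'(t)=2(t)_+$ is a ReLU, so it is exactly representable. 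The only nonlinear operations needed are therefore (i) propagating $z_l$ forward, which uses ReQU units, (ii) evaluating $\sigma'(z_l)$, which uses ReLU units, and (iii) the elementwise product $\sigma'(z_l)\odot v_l$ with $v_l := \mathbf{W}_l\partial_{x_j}h_{l-1}$.

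The key device for (iii) is the polarization identity $ab=\tfrac14\big((a+b)^2-(a-b)^2\big)$. Each square is exact with ReQU, since $t^2=(t)_+^2+(-t)_+^2$. A product therefore costs four ReQU units plus a linear combination. We will carry the state $(h_l,\partial_{x_j}h_l)$ through the network as a parallel bundle. One original layer then becomes a block of about three layers:
\begin{enumerate}
\item apply the affine maps to obtain $z_l$ and $v_l$;
\item pass these values through identity gadgets and form $u_l = 2(z_l)_+$ with ReLU units, so that ReLU and ReQU nodes sit side by side at one depth;
\item form $(u_l\pm v_l)^2$ with ReQU units and recombine linearly to get $\partial_{x_j}h_l$.
\end{enumerate}
The identity gadgets are $t=(t)_+-(-t)_+$ for ReLU. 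For ReQU, $t$ can be recovered linearly from $(t+1)_+^2-(t)_+^2 = 2t+1$ on bounded ranges; alternatively we use the mixed ReLU identity, which is exact. This yields depth $3L+3$: three sublayers per original layer, plus a constant number for the input and the final readout $\mathbf{W}_{L+1}$.

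For the width bound, note that each original neuron spawns at most six units: two for carrying $h_l$ or $z_l$, one for $(z_l)_+$, two for the polarization squares, and one spare. A gadget that packs both squares and the identity can keep this at six. To get all $d$ partial derivatives simultaneously, we intend to reuse $z_l$ across $j$ and carry only the $d$ tangent vectors. If this exceeds $6W$, we fall back on the convention that the lemma is applied coordinatewise for each output $\partial_{x_j}f$ and the pieces are stacked. I would state and check this width accounting explicitly, since it is the fiddliest part.

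The main obstacle is the parameter constraint $\|\theta\|_\infty\le 1$ in Definition~\ref{def:mlp_class}. The polarization coefficient $1/4$ is fine, but the factor $2$ in $\sigma'$ is not. In addition, intermediate values such as $\partial_{x_j}h_l$ may be large, and the readout weights must stay in $[-1,1]$. We plan to absorb constants by rescaling: compute $\tfrac12\sigma'$ and push the factor $2$ into later layers by splitting it across duplicated units. Boundedness of the hidden quantities on $[0,1]^d$ follows from compactness together with the assumed bounds $B$ and $B'$, which are used to guarantee that the identity gadgets remain exact on the relevant bounded ranges. Exactness of the implementation holds everywhere on $[0,1]^d$, because every step is an algebraic identity and not an approximation.
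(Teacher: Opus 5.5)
Your construction is the one the paper relies on. Its proof is a sketch deferring to Theorem~1 of \cite{shen2024differentiableneuralnetworksrepu}, built from exactly your ingredients: the chain rule with $\sigma_2'(t)=2\sigma_1(t)$ acting as a ReLU gate, elementwise products through the exact squares $t^2=\sigma_2(t)+\sigma_2(-t)$, and the identity $t=\sigma_1(t)-\sigma_1(-t)$. The count $(3L+3,6W)$ is imported from that reference.

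The point you leave open, however, is a real gap relative to the statement, and your fallback does not close it. Carrying all $d$ tangent vectors $\partial_{x_j}h_l$ while sharing $z_l$ costs on the order of $4d$ units per neuron of $f$. (Each product needs four ReQU units, not two, because $a\pm b$ has no fixed sign.) Stacking $d$ coordinatewise networks in parallel keeps depth $3L+3$ but gives width $6dW$. For $d\ge 2$ neither fits in $6W$, so what you would actually prove is $\nabla f\in\Phi^{\textup{DNN}}(3L+3,6dW;\textup{Mix of ReLU and ReQU},d)$.

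The paper's argument does not do better. The $(3L+3,6W)$ count in the cited result is, as far as I recall, stated for a single $\partial_{x_j}f$, and the per-neuron accounting (four product units plus carries) matches a single coordinate. The sketch also writes the reverse-mode product $\mathbf W_1^\top D^1\cdots\mathbf W_L^\top D^L\mathbf W_{L+1}^\top$. Laid out as a feedforward network, this must keep every gate $\sigma_1(z^\ell)$ alive until the backward sweep reaches layer $\ell$, which costs width of order $LW$. Evaluating it left to right is forward mode again, of order $dW$.

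The clean fix is to state the lemma with width $6dW$. This is harmless downstream: the proof of Theorem~\ref{thm:dnn_main} only needs the approximant $\bar{\mathbf T}$ to belong to $\Phi_{nm}^{\textup{DNN}}$ and never uses the size of that class.

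Two smaller corrections.
\begin{enumerate}
\item The identity $(t+1)_+^2-(t)_+^2=2t+1$ holds only for $t\ge 0$; on $[-1,0)$ the left side is $(t+1)^2$. Use the ReLU identity instead, which is exact everywhere. With it, the bounds $B,B'$ play no role in exactness.
\item The factor $2$ in $\sigma_2'$ needs no splitting if you run the polarization step at every layer, including the first, where $v_1=(\mathbf W_1)_{:,j}$ enters as a bias. There the $2$ merges with the weight $\tfrac14$ into $\tfrac12$, so $\|\theta\|_\infty\le 1$ is respected.
\end{enumerate}
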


\begin{proof}[Proof of \cref{lem:repu-partial-ours}]
    This is a consequence of Theorem 1 in \cite{shen2024differentiableneuralnetworksrepu}.
    We here provide a proof sketch, and more details can be found in \cite{shen2024differentiableneuralnetworksrepu}.

    The main idea is the following exact representations of polynomials, that are realizable by affine maps with ReLU or/and ReQU activations.
    Let $\sigma_{1}(t) = t_{+}$ and $\sigma_{2}(t) = (t_{+})^{2}$ be the ReLU and ReQU activation functions, respectively.
    Then, we have
    $ z=\sigma_1(z)-\sigma_1(-z)$ and $z^2=\sigma_2(z)+\sigma_2(-z). $
    Let the activation of layer $\ell$ be
    $z^\ell:=A_\ell(h^{\ell-1})=W_\ell h^{\ell-1}+b_\ell$ and $h^\ell:=\sigma_2(z^\ell)$, with $h^0:=x$.
    By the chain rule, for any $u\in\mathbb R^p$, we have
    $
    \nabla_x \langle u,f(x)\rangle
    = \big(W_1^\top D^1(x) W_2^\top D^2(x)\cdots W_L^\top D^L(x) W_{L+1}^\top\big) u,
    $
    where $D^\ell(x):=\textup{diag}(\sigma_2'(z^\ell(x)))$.
    Since $ \sigma_2'(t)=2 \sigma_1(t), $
    we have $D^\ell(x)=2 \textup{diag}(\sigma_1(z^\ell(x)))$, i.e., a diagonal ReLU-gating matrix.
    Thus the differentiation circuit consists only of linear maps, ReLU, and element-wise products. 
    Hence, a single backpropagation step is realized exactly by a combination of a linear map, ReLU/ReQU, and a linear map block.
    The construction of the remaining layers for the DNN with mixed activations follows \cite{shen2024differentiableneuralnetworksrepu}, which yields the parameters $(3L+3, 6W).$
\end{proof}


Let $\mathcal{J}_{\lambda}$ denote the class of twice continuously differentiable convex potentials $\phi$ on $[0,1]^d$ such that $\lambda^{-1}I_d \preceq \nabla^2\phi(x)\preceq \lambda I_d$ for all $x\in[0,1]^d$ and $\nabla\phi([0,1]^d)\subseteq[0,1]^d$.
Define the empirical OT-weighted least-squares comparator
$
\hat\phi^{\textup{LS}}
=
\argmin_{\phi\in\mathcal{J}_{\lambda}}
\sum_{i=1}^{n}\sum_{j=1}^{m}
\hat\gamma_{ij}\|Y_j-\nabla\phi(X_i)\|^2
$
and set $ \widehat{\mathbf T}^{\textup{LS}} := \nabla \hat\phi^{\textup{LS}}. $

Assume further, as a technical regularity condition for the comparator approximation step, that $\hat\phi^{\textup{LS}}\in\mathcal{C}^\beta([0,1]^d,H)$ for some $\beta>2$ and $H>0.$
For $f:[0,1]^d\to\mathbb R^d$, define $$\Vert f \Vert_{\infty, [0, 1]^{d}} := \sup_{\mathbf{x} \in [0, 1]^{d}} \max_{1 \le j \le d} \vert f(\mathbf{x})_{j} \vert$$ and write
$ \operatorname{Lip}(f) := \sup_{x\in[0,1]^d}\|\nabla f(x)\|_{\textup{op}}. $
For a real-valued function $f$ and an integer $l \ge 0,$ set $ \Vert \nabla^{l} f \Vert_{\infty, [0, 1]^{d}} := \sup_{\mathbf{x} \in [0, 1]^{d}} \max_{\vert \bm{\alpha} \vert = l} \vert \partial^{\bm{\alpha}} f(\mathbf{x}) \vert. $

\begin{lemma}\label{lem:lipaware_requ_mixedrepu_exact}
    Let $K\ge2$. Assume that there exists a ReQU DNN $\mathbf{T}_{\widehat{\phi}^{\textup{LS}}} \in \Phi^{\textup{DNN}}(L, W; \textup{ReQU})$ such that, for $\ell=1,2$,
    \begin{equation}\label{eq:pre-approx}
        \|\nabla^\ell \mathbf{T}_{\widehat{\phi}^{\textup{LS}}} - \nabla^\ell \hat\phi^{\textup{LS}}\|_{L^\infty([0,1]^d)} \le C_0 K^{-(\beta-\ell)},
    \end{equation}
    where $C_0>0$ depends only on $(\beta,H,d)$ and universal constants from \cref{lem:requ_simul_correct}.
    Define $\nabla \mathbf{T}_{\widehat{\phi}^{\textup{LS}}} : [0,1]^d\to\mathbb R^d$. Then there exist constants $C_1,C_2,c>0$, depending only on $(\beta,H,d)$ and universal constants, such that
    \begin{equation}\label{eq:g-grad-bounds}
        \begin{split}
            & \|\nabla \mathbf{T}_{\widehat{\phi}^{\textup{LS}}} - \widehat{\mathbf T}^{\textup{LS}}\|_{L^\infty([0,1]^d)} \le C_1 K^{-(\beta-1)},
            \\
            & \|\nabla^{2} \mathbf{T}_{\widehat{\phi}^{\textup{LS}}} - \nabla \widehat{\mathbf T}^{\textup{LS}}\|_{L^\infty([0,1]^d)} \le C_2 K^{-(\beta-2)},
        \end{split}
    \end{equation}
    and hence
    \begin{equation}\label{eq:lip-final}
        \operatorname{Lip}(\nabla \mathbf{T}_{\widehat{\phi}^{\textup{LS}}}) \le \lambda + c K^{-(\beta-2)}.
    \end{equation}
    Moreover, there exists a DNN with mixed activations (ReLU and ReQU), $\tilde{\mathbf{T}}_{\widehat{\phi}^{\textup{LS}}}:[0,1]^d\to\mathbb R^d,$ such that
    \begin{equation}\label{eq:exact-mixed-repu}
        \tilde{\mathbf{T}}_{\widehat{\phi}^{\textup{LS}}} 
        = \nabla \mathbf{T}_{\widehat{\phi}^{\textup{LS}}} \text{ on }[0,1]^d,
    \end{equation}
    with architecture
    $
    L' \le 3L+3,
    W' \le 6W
    $
    and $\|\tilde{\mathbf{T}}_{\widehat{\phi}^{\textup{LS}}}\|_{L^\infty([0,1]^d)} < \infty.$
    In particular, we have
    \begin{equation}\label{eq:repu-comparator-exact}
        \|\tilde{\mathbf{T}}_{\widehat{\phi}^{\textup{LS}}}-\widehat{\mathbf T}^{\textup{LS}}\|_\infty \lesssim K^{-(\beta-1)},
        \quad
        \operatorname{Lip}(\tilde{\mathbf{T}}_{\widehat{\phi}^{\textup{LS}}}) \le \lambda + c K^{-(\beta-2)} .
    \end{equation}
\end{lemma}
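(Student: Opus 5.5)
We prove the three displays in turn: the gradient bounds in \eqref{eq:g-grad-bounds}, then the Lipschitz bound \eqref{eq:lip-final}, then the exact mixed-activation representation. Throughout we regard the scalar ReQU network $\mathbf{T}_{\widehat{\phi}^{\textup{LS}}}$ as an approximation of the potential $\hat\phi^{\textup{LS}}$ itself. Its gradient $\nabla \mathbf{T}_{\widehat{\phi}^{\textup{LS}}}$ is then the candidate approximation of $\widehat{\mathbf T}^{\textup{LS}}=\nabla\hat\phi^{\textup{LS}}$.

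\textbf{Step 1 (gradient and Hessian bounds).} By definition $\widehat{\mathbf T}^{\textup{LS}}=\nabla\hat\phi^{\textup{LS}}$ and $\nabla\widehat{\mathbf T}^{\textup{LS}}=\nabla^2\hat\phi^{\textup{LS}}$. Hence \eqref{eq:g-grad-bounds} is just \eqref{eq:pre-approx} with $\ell=1,2$ rewritten. The only adjustment is converting the entrywise sup norm (max over $|\bm\alpha|=\ell$) into the norms used in \eqref{eq:g-grad-bounds}. This costs at most a factor $\sqrt d$ or $d$, which we absorb into $C_1,C_2$.

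\textbf{Step 2 (Lipschitz bound).} We use $\operatorname{Lip}(g)=\sup_x\|\nabla g(x)\|_{\textup{op}}$, as defined in the paper. For every $x\in[0,1]^d$, the triangle inequality gives
\[
\|\nabla^2 \mathbf{T}_{\widehat{\phi}^{\textup{LS}}}(x)\|_{\textup{op}}\le \|\nabla^2\hat\phi^{\textup{LS}}(x)\|_{\textup{op}}+\|\nabla^2 \mathbf{T}_{\widehat{\phi}^{\textup{LS}}}(x)-\nabla^2\hat\phi^{\textup{LS}}(x)\|_{\textup{op}}.
\]
Since $\hat\phi^{\textup{LS}}\in\mathcal J_\lambda$, the Hessian is symmetric with $\lambda^{-1}I\preceq\nabla^2\hat\phi^{\textup{LS}}\preceq\lambda I$, so the first term is at most $\lambda$. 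The operator norm of the difference is bounded by $d$ times its maximal entry, which by Step 1 is $\lesssim K^{-(\beta-2)}$. Taking the supremum over $x$ yields \eqref{eq:lip-final} with $c=dC_2$. Convexity of $[0,1]^d$ is needed here, so that the sup of the Jacobian norm really controls the Lipschitz constant along segments.

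\textbf{Step 3 (exact representation).} We apply \cref{lem:repu-partial-ours} to $f=\mathbf{T}_{\widehat{\phi}^{\textup{LS}}}$. Its hypotheses require finite $B,B'$. A ReQU network is a polynomial on the compact cube, so both $\|f\|_\infty$ and the partial-derivative sup norms are finite. Quantitatively, Step 1 gives $\|\partial_j f\|_\infty\le \|\nabla\hat\phi^{\textup{LS}}\|_\infty+C_1K^{-(\beta-1)}\le 1+C_1$, because $\nabla\hat\phi^{\textup{LS}}$ maps into $[0,1]^d$. The lemma then produces $\tilde{\mathbf{T}}_{\widehat{\phi}^{\textup{LS}}}\in\Phi^{\textup{DNN}}(3L+3,6W;\textup{Mix},d)$ that equals $\nabla f$ exactly, which is \eqref{eq:exact-mixed-repu}. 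Boundedness follows from the bound on $\partial_j f$ just derived. Substituting this exact identity into Steps 1--2 gives \eqref{eq:repu-comparator-exact}.

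\textbf{Main obstacle.} The delicate point is Step 3. \cref{lem:repu-partial-ours} is only sketched from \cite{shen2024differentiableneuralnetworksrepu}, and the class $\Phi^{\textup{DNN}}$ requires $\|\theta\|_\infty\le1$. Element-wise products in the backpropagation circuit, and the factor $2$ in $\sigma_2'=2\sigma_1$, can produce weights exceeding $1$. To handle this we would rescale: split large weights across duplicated neurons or extra layers, which is absorbed in the constants $3L+3$ and $6W$. We would defer to the cited construction for this bookkeeping and only verify that the resulting network's output coincides with $\nabla f$ on $[0,1]^d$.
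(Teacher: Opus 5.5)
Your proposal is correct and follows the paper's proof essentially step for step. The gradient and Hessian bounds are read off directly from the assumed approximation bounds. The paper simply takes $C_1=C_2=C_0$, because with its entrywise sup norms the two displays coincide, so no conversion factor is actually needed. The Lipschitz bound comes from $\|\cdot\|_{\textup{op}}\le d\,\|\cdot\|_\infty$ applied to the Hessian difference, combined with $\nabla^2\hat\phi^{\textup{LS}}\preceq\lambda I_d$, which gives $c=dC_2$. The exact mixed-activation representation is \cref{lem:repu-partial-ours} applied to $\mathbf{T}_{\widehat{\phi}^{\textup{LS}}}$.

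One small slip: a ReQU network is a piecewise polynomial, $C^1$ but in general not $C^2$, rather than a polynomial. Continuity of the network and its gradient on the compact cube still gives the finite $B,B'$ you need.
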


\begin{proof}[Proof of \cref{lem:lipaware_requ_mixedrepu_exact}]
    From \cref{eq:pre-approx} with $\ell=1$ and the definitions $\nabla \mathbf{T}_{\widehat{\phi}^{\textup{LS}}}$ and $\widehat{\mathbf T}^{\textup{LS}}=\nabla\hat\phi^{\textup{LS}}$,
    \begin{equation}
    \|\nabla \mathbf{T}_{\widehat{\phi}^{\textup{LS}}} - \widehat{\mathbf T}^{\textup{LS}}\|_\infty
    = \|\nabla h_\phi - \nabla \hat\phi^{\textup{LS}}\|_\infty
    \le C_0 K^{-(\beta-1)} .
    \end{equation}
    Similarly, using \cref{eq:pre-approx} with $\ell=2$,
    \begin{equation}
    \|\nabla^{2} \mathbf{T}_{\widehat{\phi}^{\textup{LS}}} - \nabla \widehat{\mathbf T}^{\textup{LS}}\|_\infty
    = \|\nabla^2 h_\phi - \nabla^2 \hat\phi^{\textup{LS}}\|_\infty
    \le C_0 K^{-(\beta-2)} .
    \end{equation}
    Thus \cref{eq:g-grad-bounds} holds with $C_1=C_2=C_0$. For Lipschitz constants,
    \begin{equation}
        \begin{split}
            \big|\operatorname{Lip}(\nabla \mathbf{T}_{\widehat{\phi}^{\textup{LS}}})-\operatorname{Lip}(\widehat{\mathbf T}^{\textup{LS}})\big|
            & \le \sup_{x}\|\nabla^{2} \mathbf{T}_{\widehat{\phi}^{\textup{LS}}}(x)-\nabla \widehat{\mathbf T}^{\textup{LS}}(x)\|_{\textup{op}}
            \\
            & \le d \|\nabla^{2} \mathbf{T}_{\widehat{\phi}^{\textup{LS}}}-\nabla \widehat{\mathbf T}^{\textup{LS}}\|_\infty
            \le d C_2 K^{-(\beta-2)}.
        \end{split}
    \end{equation}
    Since $\hat\phi^{\textup{LS}}\in\mathcal{J}_{\lambda}$, its gradient $\widehat{\mathbf T}^{\textup{LS}}$ is $\lambda$-Lipschitz. Combining this with the Hessian approximation bound above gives $ \operatorname{Lip}(\tilde{\mathbf{T}}_{\widehat{\phi}^{\textup{LS}}}) \le \lambda + c K^{-(\beta-2)} $ with $c:=d C_2$.
    
    Finally, (ii) of \cref{lem:repu-partial-ours} implies that there exists a DNN in which each hidden unit uses either the ReQU or a ReLU activation, $\tilde{\mathbf{T}}_{\widehat{\phi}^{\textup{LS}}}:[0,1]^d\to\mathbb R^d,$ such that $ \tilde{\mathbf{T}}_{\widehat{\phi}^{\textup{LS}}} = \nabla \mathbf{T}_{\widehat{\phi}^{\textup{LS}}} \quad \text{on }[0,1]^d, $
    with architecture
    $ L' \le 3L+3, W' \le 6W$
    and $\|\tilde{\mathbf{T}}_{\widehat{\phi}^{\textup{LS}}}\|_{L^\infty([0,1]^d)}=\|\nabla \mathbf{T}_{\widehat{\phi}^{\textup{LS}}}\|_{L^\infty([0,1]^d)} < \infty.$
    This concludes the proof.
\end{proof}

\begin{lemma}\label{lem:size_schedule_n_rate}
    Fix $\beta>2$.
    Let $\bar{\mathbf{T}} = \tilde{\mathbf{T}}_{\widehat{\phi}^{\textup{LS}}}$ in \cref{lem:lipaware_requ_mixedrepu_exact} with $K = K_{\tilde{n}}.$
    Choose the DNN parameters as
    $
    L \gtrsim \log d + \lfloor \beta \rfloor + \log \log H,
    W = W_{nm} \gtrsim (1 \vee d) (K_{\tilde{n}} + \lfloor \beta \rfloor)^{d}.
    $
    If $K_{\tilde{n}} \asymp \tilde{n}^{\frac{1}{d(\beta - 1)}}$, then we have
    $ \|\bar{\mathbf T}-\widehat{\mathbf T}^{\textup{LS}}\|_\infty^2 \lesssim \tilde{n}^{-2/d} $
    and
    $ \operatorname{Lip}(\bar{\mathbf T}) \le \lambda + c K_{\tilde{n}}^{-(\beta-2)}.
    $
\end{lemma}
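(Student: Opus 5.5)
The plan is to apply \cref{lem:lipaware_requ_mixedrepu_exact} with the choice $K=K_{\tilde n}$ and then simply substitute this schedule into its bounds. The hypothesis of that lemma, namely \cref{eq:pre-approx}, has to be supplied first. Since $\hat\phi^{\textup{LS}}\in\mathcal{C}^\beta([0,1]^d,H)$ with $\beta>2$ by the standing regularity assumption, \cref{lem:requ_simul_correct} applies with $p=1$ and the stated $L$ and $W=W_{nm}\gtrsim (1\vee d)(K_{\tilde n}+\lfloor\beta\rfloor)^d$. It yields a ReQU network $\mathbf T_{\hat\phi^{\textup{LS}}}$ with $\|\nabla^\ell \mathbf T_{\hat\phi^{\textup{LS}}}-\nabla^\ell\hat\phi^{\textup{LS}}\|_\infty\le C(\beta,d,H)K_{\tilde n}^{-(\beta-\ell)}$ for $\ell=1,2$. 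The constant depends only on $(\beta,d,H)$, uniformly over the data-dependent $\hat\phi^{\textup{LS}}$, because only the H\"older-ball radius enters.

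Next, \cref{lem:lipaware_requ_mixedrepu_exact} gives the mixed ReLU/ReQU network $\bar{\mathbf T}=\tilde{\mathbf T}_{\hat\phi^{\textup{LS}}}$, which equals $\nabla\mathbf T_{\hat\phi^{\textup{LS}}}$ exactly. Its architecture is at most $(3L+3,6W)$, which is still of the order prescribed in the lemma. The same lemma gives \cref{eq:repu-comparator-exact}:
\[
\|\bar{\mathbf T}-\widehat{\mathbf T}^{\textup{LS}}\|_\infty\lesssim K_{\tilde n}^{-(\beta-1)},\qquad \operatorname{Lip}(\bar{\mathbf T})\le\lambda+cK_{\tilde n}^{-(\beta-2)}.
\]
The Lipschitz claim is therefore immediate. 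Squaring the sup-norm bound and inserting $K_{\tilde n}\asymp\tilde n^{1/(d(\beta-1))}$ gives
\[
\|\bar{\mathbf T}-\widehat{\mathbf T}^{\textup{LS}}\|_\infty^2\lesssim K_{\tilde n}^{-2(\beta-1)}\asymp \tilde n^{-2(\beta-1)/(d(\beta-1))}=\tilde n^{-2/d}.
\]
This is the exponent bookkeeping that fixes the choice of $K_{\tilde n}$. Since $\beta>2$, the Lipschitz excess $cK_{\tilde n}^{-(\beta-2)}\le cK_2^{-(\beta-2)}$ is bounded by a constant and tends to zero. This is consistent with the class constraint $\operatorname{Lip}\le\lambda+c'$ once $c'$ is taken to be at least $c\,2^{-(\beta-2)}$.

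I expect the main obstacle to be the uniformity and parameter-norm bookkeeping rather than the rate itself. Two points need checking. First, the network produced by \cref{lem:requ_simul_correct} must satisfy the boundedness hypotheses of \cref{lem:repu-partial-ours}, namely bounded $f$ and bounded partial derivatives. These follow from the approximation bounds together with $\|\hat\phi^{\textup{LS}}\|_{\mathcal C^\beta}\le H$. Second, the constraint $\|\theta\|_\infty\le1$ must be compatible with the constructions; I would defer that to the cited constructions and absorb any rescaling into the width and depth constants. I would also note that the $\ell=1$ and $\ell=2$ bounds hold simultaneously for the same network, which is essential for getting both conclusions from a single $\bar{\mathbf T}$.
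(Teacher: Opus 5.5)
Your proposal is correct and follows the paper's proof. Like the paper, you build the ReQU approximant of $\hat\phi^{\textup{LS}}$ via \cref{lem:requ_simul_correct} at the stated depth and width, invoke \cref{lem:lipaware_requ_mixedrepu_exact} with $K=K_{\tilde n}$, and substitute $K_{\tilde n}\asymp\tilde n^{1/(d(\beta-1))}$ into $K_{\tilde n}^{-2(\beta-1)}$ to get $\tilde n^{-2/d}$, carrying the Lipschitz bound over unchanged. Your extra remarks (uniformity of the constant over the data-dependent $\hat\phi^{\textup{LS}}$, the $(3L+3,6W)$ size of the mixed network, and the choice of $c'$) are bookkeeping the paper leaves implicit, not a different route.
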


\begin{proof}[Proof of \cref{lem:size_schedule_n_rate}]
    By \cref{lem:lipaware_requ_mixedrepu_exact} with $K=K_{\tilde{n}}$, we have
    $ \|\bar{\mathbf T}-\widehat{\mathbf T}^{\textup{LS}}\|_\infty \lesssim K_{\tilde{n}}^{-(\beta-1)} . $
    Choosing the parameter of the ReQU DNN $h_\phi$ in \cref{lem:requ_simul_correct} as:
    \begin{equation}\label{eq:network_sizes_append}
        \begin{split}
            & L \gtrsim \log d+\lfloor\beta\rfloor+\log\log H,
            \\
            & W_{nm} \gtrsim (p\vee d) \big(K_{\tilde{n}}+\lfloor\beta\rfloor\big)^{d}.
        \end{split}
    \end{equation}
    Let $(K_{\tilde{n}})^d \asymp \tilde{n}^{1/(\beta-1)}.$
    Then, we have
    \begin{equation}
        \|\bar{\mathbf T}-\widehat{\mathbf T}^{\textup{LS}}\|_\infty^2 \lesssim K_{\tilde{n}}^{-2(\beta-1)}
        = \Big(\tilde{n}^{\frac{1}{d(\beta-1)}}\Big)^{-2(\beta-1)} = \tilde{n}^{-2/d}
    \end{equation}
    and
    \begin{equation}\label{eq:lip_budget_schedule}
    \operatorname{Lip}(\bar{\mathbf T}) \le \lambda + c K_{\tilde{n}}^{-(\beta-2)}.
    \end{equation}
\end{proof}

\subsubsection{\texorpdfstring{Proof of \cref{thm:dnn_main}}{Proof of the main theorem}}\label{sec:appen-full_proof}

\begin{proof}[Proof of \cref{thm:dnn_main}]
    Write $\widehat{\mathbf T}=\widehat{\mathbf T}_{nm}^{\textup{DNN}}$ for brevity, and take expectation with respect to the data and to OT plan $\hat\Gamma=[\hat\gamma_{ij}]$.
    We also note that the OT plan naturally induces the weighted regression objective
    $ \sum_{i,j}\widehat{\gamma}_{ij}\|Y_j-\mathbf{T}(X_i)\|^2, $
    which is equivalent to the standard objective in \cref{def:real_est} up to a constant independent of $\mathbf{T}.$
    Thus, in this proof, we work with the weighted objective $ \sum_{i,j}\widehat{\gamma}_{ij}\|Y_j-\mathbf{T}(X_i)\|^2. $
    
    First, we decompose the error using the Voronoi cells.
    Let $V_i=\{x\in\Omega:\|x-X_i\|\le \|x-X_k\|\ \forall k\neq i\}, i \in [n]$ be the Voronoi cells induced by $X_1,\dots,X_n$.
    Then, we can decompose the error as
    \begin{equation}
        \|\widehat{\mathbf T}-\mathbf T_0\|_{L^2(P)}^2
        =\sum_{i=1}^n\int_{V_i}\|\widehat{\mathbf T}(x)-\mathbf T_0(x)\|^2  dP(x).
    \end{equation}
    For every $x\in V_i$, by $(a+b+c)^2\le 3(a^2+b^2+c^2)$ and the triangle inequality, we have
    $ \|\widehat{\mathbf T}(x)-\mathbf T_0(x)\|^2 
    \le 2\|\widehat{\mathbf T}(x)-\widehat{\mathbf T}(X_i)\|^2 
    +2\|\widehat{\mathbf T}(X_i)-\mathbf T_0(X_i)\|^2
    +2\|\mathbf T_0(X_i)-\mathbf T_0(x)\|^2. $
    Integrating over $V_i$ and summing over $i$ yields
    \begin{equation}\label{eq:basic-split}
        \begin{split}
            \|\widehat{\mathbf T}-\mathbf T_0\|_{L^2(P)}^2
            & \le 2\sum_{i=1}^n\int_{V_i}\|\widehat{\mathbf T}(x)-\widehat{\mathbf T}(X_i)\|^2  dP(x)
            +2\sum_{i=1}^n\int_{V_i}\|\mathbf T_0(x)-\mathbf T_0(X_i)\|^2  dP(x)
            \\
            & +2\sum_{i=1}^n P(V_i)\|\widehat{\mathbf T}(X_i)-\mathbf T_0(X_i)\|^2.
        \end{split}
    \end{equation}
    
    Second, we derive upper bounds for the first and second terms of the right-hand-side in \cref{eq:basic-split}.
    By construction $\widehat{\mathbf T}\in \Phi_{nm}^{\textup{DNN}}$ so that $\operatorname{Lip}(\widehat{\mathbf T})\le \lambda+c'$, and by \cref{assumption-3}, $\operatorname{Lip}(\mathbf T_0)\le \lambda$.
    Applying \cref{lem:lambda_lipschitz_partition} with $f=\widehat{\mathbf T}$ and with $f=\mathbf T_0$, and then taking expectations in \cref{eq:basic-split}, we obtain
    \begin{equation}\label{eq:pop-to-emp}
        \mathbb E\|\widehat{\mathbf T}-\mathbf T_0\|_{L^2(P)}^2
        \lesssim \Big(\frac{\log n}{n}\Big)^{2/d}
        +\mathbb E \left(
        n\Big(\max_{1\le i\le n}P(V_i)\Big)\cdot \|\widehat{\mathbf T}-\mathbf T_0\|_{L^2(P_n)}^2
        \right),
    \end{equation}
    where $ \|\widehat{\mathbf T}-\mathbf T_0\|_{L^2(P_n)}^2 := \frac1n\sum_{i=1}^n\| \widehat{\mathbf T}(X_i) - \mathbf{T}_{0}(X_i) \|^2$.
    
    Third, we derive the bound for the third term.
    For any vectors $a,b,c$, it holds that $\|a-b\|^2 \le 2\|a-c\|^2+2\|c-b\|^2$.
    Hence, for all $i,j$,
    \begin{equation}
        \|\widehat{\mathbf T}(X_i)-\mathbf T_0(X_i)\|^2
        \le 2\|\widehat{\mathbf T}(X_i)-Y_j\|^2 + 2\|Y_j-\mathbf T_0(X_i)\|^2.
    \end{equation}
    Since the total mass of $\hat\Gamma$ is $1$, summing against $\hat\gamma_{ij}$ gives
    \begin{equation}\label{eq:emp-split}
        \begin{split}
            \|\widehat{\mathbf T}-\mathbf T_0\|_{L^2(P_n)}^2
            & =\sum_{i,j}\hat\gamma_{ij}\|\widehat{\mathbf T}(X_i)-\mathbf T_0(X_i)\|^2
            \\
            & \le 2\sum_{i,j}\hat\gamma_{ij}\|\widehat{\mathbf T}(X_i)-Y_j\|^2
            +2\sum_{i,j}\hat\gamma_{ij}\|Y_j-\mathbf T_0(X_i)\|^2.
        \end{split}
    \end{equation}
    Let $\bar{\mathbf T}$ be the DNN approximator from \cref{lem:size_schedule_n_rate}. Then $\operatorname{Lip}(\bar{\mathbf T})\le \lambda + cK_{\tilde{n}}^{-(\beta-2)}$ with $(K_{\tilde{n}})^d\asymp \tilde{n}^{1/(\beta-1)}$.
    Choosing the constant $c'$ in the definition of $\Phi_{nm}^{\textup{DNN}}$ large enough ensures that $\bar{\mathbf T}\in\Phi_{nm}^{\textup{DNN}}$ for all sufficiently large $\tilde n.$
    Hence, by optimality of $\widehat{\mathbf T}$ over $\Phi_{nm}^{\textup{DNN}}$,
    $ 
    \sum_{i,j}\hat\gamma_{ij}\|\widehat{\mathbf T}(X_i)-Y_j\|^2
    \le \sum_{i,j}\hat\gamma_{ij}\|\bar{\mathbf T}(X_i)-Y_j\|^2.
    $
    Moreover, $\widehat{\mathbf T}^{\textup{LS}}=\nabla\hat\phi^{\textup{LS}}$ minimizes the same objective over $\mathcal{J}_{\lambda}$, and \cref{assumption-3} implies $\mathbf T_0=\nabla\phi_0\in\mathcal{J}_{\lambda}$, hence, we get
    \begin{equation}
        \sum_{i,j}\hat\gamma_{ij}\|\widehat{\mathbf T}^{\textup{LS}}(X_i)-Y_j\|^2
        \le \sum_{i,j}\hat\gamma_{ij}\|\mathbf T_0(X_i)-Y_j\|^2.
    \end{equation}
    Expanding $\bar{\mathbf T}$ around $\widehat{\mathbf T}^{\textup{LS}}$ gives, for each $(i,j)$,
    $ \|\bar{\mathbf T}(X_i)-Y_j\|^2 \le 2\|\bar{\mathbf T}(X_i)-\widehat{\mathbf T}^{\textup{LS}}(X_i)\|^2 +2\|\widehat{\mathbf T}^{\textup{LS}}(X_i)-Y_j\|^2, $
    and summing over $\hat\gamma_{ij}$ and using the fact that the total mass is $1$, we get
    \begin{equation}\label{eq:emp-bound}
    \sum_{i,j}\hat\gamma_{ij}\|\bar{\mathbf T}(X_i)-Y_j\|^2
    \le 2\|\bar{\mathbf T}-\widehat{\mathbf T}^{\textup{LS}}\|_\infty^2
    +2\sum_{i,j}\hat\gamma_{ij}\|\widehat{\mathbf T}^{\textup{LS}}(X_i)-Y_j\|^2.
    \end{equation}
    Combining \cref{eq:emp-split}-\cref{eq:emp-bound}, we obtain
    \begin{equation}\label{eq:emp-final}
        \|\widehat{\mathbf T}-\mathbf T_0\|_{L^2(P_n)}^2
        \lesssim
        \|\bar{\mathbf T}-\widehat{\mathbf T}^{\textup{LS}}\|_\infty^2
        +\sum_{i,j}\hat\gamma_{ij}\|\mathbf T_0(X_i)-Y_j\|^2.
    \end{equation}
    
    Then, we aggregate the bounds as follows.
    Plugging \cref{eq:emp-final} into \cref{eq:pop-to-emp} and taking expectations, we get
    \begin{equation}
        \begin{split}
            \mathbb E\|\widehat{\mathbf T}-\mathbf T_0\|_{L^2(P)}^2
            & \lesssim
            \Big(\frac{\log n}{n}\Big)^{2/d}+\|\bar{\mathbf T}-\widehat{\mathbf T}^{\textup{LS}}\|_\infty^2
            \\
            & +\mathbb E \left(
            n\Big(\max_{1\le i\le n}P(V_i)\Big)\cdot
            \sum_{i,j}\hat\gamma_{ij}\|\mathbf T_0(X_i)-Y_j\|^2
            \right).
        \end{split}
    \end{equation}
    By \cref{lem:lambda_lipschitz_partition} and \cref{lem:prop_14_manole},
    \begin{equation}
        \mathbb E \left(
        n\Big(\max_{1\le i\le n}P(V_i)\Big)\cdot
        \sum_{i,j}\hat\gamma_{ij}\|\mathbf T_0(X_i)-Y_j\|^2
        \right)
        \lesssim
        \begin{cases}
        \tilde{n}^{-1}\log \tilde{n}, & d=1,\\
        \tilde{n}^{-1}(\log \tilde{n})^2, & d=2,\\
        \tilde{n}^{-2/d}\log \tilde{n}, & d\ge 3.
        \end{cases}
    \end{equation}
    
    Finally, we construct the DNN architecture to approximate $\widehat{\mathbf{T}}^{\textup{LS}}.$
    By \cref{lem:size_schedule_n_rate}, with $(K_{\tilde{n}})^d\asymp \tilde{n}^{1/(\beta-1)}$ we have
    $ \|\bar{\mathbf T}-\widehat{\mathbf T}^{\textup{LS}}\|_\infty^2\lesssim {\tilde{n}}^{-2/d} $
    and
    $ \operatorname{Lip}(\bar{\mathbf T})\le \lambda + cK_{\tilde{n}}^{-(\beta-2)}, $
    so that $\bar{\mathbf T}$ lies in the admissible Lipschitz class used to define $\widehat{\mathbf{T}}.$
    Collecting the bounds, we conclude
    \begin{equation}
        \mathbb E\|\widehat{\mathbf T}-\mathbf T_0\|_{L^2(P)}^2
        \lesssim
        \Big(\frac{\log n}{n}\Big)^{2/d}
        + \tilde{n}^{-2/d}
        +
        \begin{cases}
        \tilde n^{-1} \log \tilde n, & d=1,\\
        \tilde n^{-1}(\log \tilde n)^2, & d=2,\\
        \tilde n^{-2/d} \log \tilde n, & d\ge 3,
        \end{cases}
    \end{equation}
    achieving the minimax convergence rate up to logarithmic factors.
    This completes the proof.
\end{proof}

\clearpage
\section{Experiments}\label{sec:appen-exps}

\subsection{Implementation details}\label{sec:appen-simulation-tuning}

In this section, we describe the implementation details such as hyperparameter selection and model architectures used in our experiments.
All experiments are run on several NVIDIA RTX 3090 GPUs.

\paragraph{Overall protocol}
(i) Hyperparameter search spaces are documented below for each method and are chosen by combining each paper's reported settings and a two to three times expansion.
(ii) Each method is trained with up to $10{,}000$ (outer) gradient steps, when applicable, to provide sufficient time for convergence.
(iii) Data are scaled to have zero mean and unit variance, when applicable.
All the methods are trained on the normalized inputs, and the transported inputs are denormalized back to the original target scale when we calculate the performance metrics.
Note that this normalization particularly stabilizes the min-max learning methods such as ICNN.

\paragraph{BROT}

We use the \texttt{emd} module of the POT library \cite{flamary2021pot} to solve the unregularized empirical Kantorovich problem.
Among the training iterations, we pick the hyperparameter with the lowest validation \texttt{Wass} (or task-appropriate validation \texttt{MMD} for the single-cell experiments).
For the domain adaptation experiments, BROT is fitted with the chunked-pair barycentric procedure (source chunk size $2{,}048$) on the full source and target training data, and the hyperparameters are selected from the search space described below.
The downstream classifier is trained for up to $80$ epochs on the transported source data with an $80{:}20$ training/validation split, and we select the model at the iteration with the highest validation accuracy.
We then evaluate the model on the test data.

The search spaces are:
the activation in $\{\textup{ReLU}, \textup{ReQU}, \textup{GELU}\}$,
the hidden widths among a few MLP architectures from $4$ to $6$ layers wide between $64$ and $2{,}048$ units,
the optimizer (SGD with momentum or Adam),
the learning rate in $\{10^{-3}, 5\cdot 10^{-3}, 10^{-2}, 5\cdot 10^{-2}\}$,
the batch size in $\{256, 512, 1{,}024, 2{,}048\}$,
the number of epochs in $\{100, 500, 1{,}000, 2{,}000\}$,
and the gradient-penalty weight $\{0, 10^{-4}, 10^{-3}, 10^{-2}, 10^{-1}\}.$

\begin{algorithm}[h!]
    \caption{BROT (Barycentric Regression for OT)}
    \label{alg:mlp-ot}
    \begin{algorithmic}[1]
    \State \textbf{Input:}
    Source data $\{X_i\}_{i=1}^n$,
    Target data $\{Y_j\}_{j=1}^m$,
    Lipschitz DNN class $\Phi$,
    Training hyperparameters
    (optimizer $\textup{Opt}$, step size $\eta$, epochs $T$, mini-batch size $B$).
    \State \textbf{(Step 1: Compute the empirical transport plan)}
    \State Build the cost matrix $\mathbf{C} = [c_{ij}]_{i \in [n], j \in [m]} \in \mathbb{R}^{n\times m}$ where $c_{ij} = \| X_i - Y_j \|^2$.
    \State Compute
    $ \displaystyle \widehat{\Gamma} = \argmin_{\Gamma \in \mathcal{M}_{nm}} \sum_{i=1}^{n}\sum_{j=1}^{m} c_{ij}\gamma_{ij}. $
    \State Build the barycentric targets
    $ \widetilde{Y}_i=\sum_{j=1}^{m} n\widehat{\gamma}_{ij} Y_j, i\in[n]. $
    \State \textbf{(Step 2: Fit a DNN by least-squares regression)}
    \State Initialize a model $\mathbf{T}_\theta \in \Phi$ and set $t=0$.
    \While{$t<T$ and not converged}
        \State Sample a mini-batch $\mathcal{B}\subset [n]$.
        \State Compute
        $ \mathcal{L}(\theta) = \frac{1}{|\mathcal{B}|}\sum_{i' \in \mathcal{B}} \|\widetilde{Y}_{i'} - \mathbf{T}_\theta(X_{i'})\|^2 $
        and update $\theta \leftarrow \textup{Opt} \bigl( \theta, \nabla_\theta \mathcal{L}(\theta), \eta \bigr)$.
        \State $t \leftarrow t+1$.
    \EndWhile
    \State \textbf{Output:} trained transport map $\mathbf{T}_\theta$.
    \end{algorithmic}
\end{algorithm}

\paragraph{1NN \cite{manole2024pluginestimationsmoothoptimal}}
1NN solves the unregularized empirical Kantorovich problem and uses the resulting plan to assign each training data point $X_i$ to its barycentric target.
At test time, it extends the plan to unseen source data by the nearest neighbor interpolation using the training data.

\paragraph{LSOT \cite{seguy2018large}}
LSOT solves a regularized dual Kantorovich problem (with either an $L_{2}$ or entropic regularizer) by stochastic optimization, and then trains a DNN by least-squares regression to fit the regularized barycentric targets.
We reimplement the dual SGD procedure, Algorithm 1 of \cite{seguy2018large}, and reuse the same MLP architecture as BROT on every dataset.
For each experiment, we pick the hyperparameter with the lowest validation \texttt{Wass} (or task-appropriate validation \texttt{MMD} for the single-cell experiments) over a grid search.
Because LSOT is a regularized estimator that does not target the unregularized OT map in the limit of vanishing regularization, we deliberately search the regularization strength downward.
The grid below starts at $0.001$, and we kept the lowest value for which the optimization of LSOT remains numerically stable.

The search spaces are as follows:
the regularizer type (either $L_{2}$ or entropic) and its strength in $\{0.001, 0.0025, 0.005, 0.01, 0.02, 0.05, 0.1\}$,
the potential learning rate in $\{5\cdot 10^{-4}, 10^{-3}, 5\cdot 10^{-3}, 10^{-2}\}$, 
the potential optimizer (SGD or Adam), 
the potential batch size in $\{50, 100, 200\}$, 
the potential iteration number in $\{2{,}000, 5{,}000, 10{,}000\}$, 
the map learning rate in $\{10^{-4}, 5\cdot 10^{-4}, 10^{-3}, 5\cdot 10^{-3}\}$,
the map batch size in $\{50, 100, 200\}$,
and the map iteration number in $\{2{,}000, 5{,}000, 10{,}000\}.$

\begin{remark}[BROT vs. LSOT]\label{rmk:brot_vs_lsot}
    We use the unregularized OT plan in BROT despite the computational benefits often associated with regularized plans such as LSOT, for the following reasons.
    First, our minimax optimal analysis in \cref{thm:dnn_main} applies to the unregularized OT estimator, whereas a corresponding minimax analysis for regularized DNN estimators is left for future work.
    Second, regularization introduces bias relative to the unregularized ground-truth OT map, which is empirically shown in \cref{tab:distmatch}: BROT achieves more favorable $\texttt{Wass}$ and \texttt{TC} than LSOT.
    Third, the runtime efficiency of using the regularized OT plan is not substantial in our experiments.
    \cref{tab:simuls_2} shows that LSOT trains comparably to or sometimes slower than BROT in our experiments.
    This is because LSOT sometimes requires many iterations to learn the dual potentials by stochastic ascent until convergence, while BROT solves a single linear program for the first step.
    The training cost and inference cost of the second step are almost identical for the two methods, when they use the same architectures.
    Hence, the unregularized objective does not lead to a substantial runtime disadvantage.
\end{remark}

\paragraph{ICNN \cite{pmlr-v119-makkuva20a}}
ICNN parameterizes the potential $\phi$ as an input-convex neural network (ICNN) and trains it together with its conjugate by the min-max dual objective.
At inference time, we use the gradient of the trained potential as the estimate of the OT map.
We follow the parameterization of the CellOT codebase \cite{bunne2023learning} due to stability compared to the official codebase, with a linear output head and the uniform initialization on $[0, 0.1]$ for the convex weights.
The min-max objective is optimized by alternating: 1 map update per 10 potential updates with a soft convexity penalty of weight $\lambda_{\textup{cvx}}$ on the parameters.
This penalty is the squared Frobenius norm of the negative parts of the convex weights summed over all layers and scaled by $\lambda_{\textup{cvx}}$, which discourages negative weights and thus encourages the network to remain input-convex without requiring hard projection at every step.

For each experiment, we select the hyperparameter setting that minimizes the validation \texttt{Wass} (or the task-appropriate validation loss for the single-cell and domain adaptation experiments) over a random search.
The search space follows the CellOT codebase \cite{bunne2023learning} with the architecture (a $4$-layer or $3$-layer ICNN with widths in $\{64, 256, 512\}$), the leaky-ReLU activation, the convexity penalty weight $\lambda_{\textup{cvx}} \in \{0.5, 1.0, 2.0, 5.0\}$,
the map learning rate in $\{5\cdot 10^{-5}, 10^{-4}, 10^{-3}\}$,
the potential learning rate in $\{5\cdot 10^{-5}, 10^{-4}, 5\cdot 10^{-4}, 10^{-3}\}$,
the potential updates per map update in $\{5, 10, 16\}$,
the iteration budget in $\{2{,}000, 3{,}000, 5{,}000, 10{,}000\}$,
the batch size in $\{256, 512\}$, the weight decay in $\{0, 10^{-5}, 10^{-4}\}$,
and the convex-weight initialization in $\{\textup{uniform-small}, \textup{trunc-inv-sqrt}\}$.
On \textsc{AFHQv2}, the training diverges across every configuration in this search space, marked $\ast$ in \cref{tab:distmatch}.

\paragraph{DIOTM \cite{choi2025improving}}
DIOTM augments the semi-dual neural OT (SNOT) problem $\inf_{\mathbf{T}} \sup_{\psi} ( \mathbb{E}_{P}[\|\mathbf{x}-\mathbf{T}(\mathbf{x})\|^{2} - \psi(\mathbf{T}(\mathbf{x}))] + \mathbb{E}_{Q}[\psi(\mathbf{y})] )$ of \cite{fan2023neural,pmlr-v119-makkuva20a,rout2022generative} with a divergence-based regularizer on the dual function $\psi$, controlled by a divergence type, a temperature $\tau$, and an additional gradient or $L_{2}$ regularization weight $\lambda_{\textup{reg}}.$
For each experiment we select the hyperparameter setting that minimizes the validation \texttt{Wass} (or task-appropriate validation \texttt{MMD} for the single-cell experiments).

The search spaces are:
the divergence type in $\{\textup{linear}, \textup{KL}, \chi^{2}, \textup{softplus}\}$,
the temperature $\tau \in \{0.01, 0.1, 1.0\}$,
the regularization weight $\lambda_{\textup{reg}} \in \{0, 10^{-3}, 10^{-2}, 10^{-1}, 1.0\}$,
the dual learning rate in $\{10^{-5}, 10^{-4}, 5\cdot 10^{-4}\}$,
the map learning rate in $\{10^{-5}, 10^{-4}, 5\cdot 10^{-4}, 10^{-3}\}$,
the number of dual inner steps in $\{3, 5, 8, 10\}$ with map inner steps in $\{1, 2, 4\}$,
the iteration number in $\{2{,}000, 5{,}000, 10{,}000, 20{,}000\}$,
the hidden width in $\{64, 128, 256, 512\}$,
and the batch size in $\{128, 200, 256, 512, 1{,}024\}$.

\paragraph{OTP \cite{choi2025overcoming}}
OTP adds a Gaussian-smoothing schedule on top of the same SNOT objective described above for DIOTM \cite{fan2023neural,pmlr-v119-makkuva20a,rout2022generative}.
At each training step, each source data point is perturbed by Gaussian noise whose scale is annealed from $\sigma_{\max}$ to $\sigma_{\min}$ along a linear or cosine schedule, either additively or in a variance-preserving form.
In the additive form the perturbed source is $\widetilde{X}_{i, t} = X_{i} + \sigma_{t} \boldsymbol{\epsilon}$, $\boldsymbol{\epsilon} \sim \mathcal{N}(\boldsymbol{0}, I)$.
In the variance-preserving form it is $\widetilde{X}_{i, t} = \sqrt{1-\sigma_{t}^{2}} X_{i} + \sigma_{t} \boldsymbol{\epsilon}$, which keeps the marginal variance of $\widetilde{X}_{i, t}$ equal to that of $X_{i}$.
The noise scale $\sigma_{t}$ decays from $\sigma_{\max}$ at the first iteration to $\sigma_{\min}$ at the last iteration along the chosen schedule.
For each experiment we select the hyperparameter setting that minimizes the validation \texttt{Wass} (or the task-appropriate validation \texttt{MMD} for the single-cell experiments) over a grid search.

We implement OTP by reusing our DIOTM codebase as the underlying SNOT solver, which lets the SNOT objective be recovered as a special case (e.g., $\lambda_{\textup{reg}} = 0$).
The search space therefore inherits the DIOTM hyperparameters provided above.
We additionally do the grid search: the noise mode in $\{\textup{additive}, \textup{variance-preserving}\}$, the noise schedule in $\{\textup{linear}, \textup{cosine}\}$, the maximum noise scale $\sigma_{\max} \in \{0.1, 0.3, 0.5, 1.0\}$, and the minimum noise scale $\sigma_{\min} \in \{0, 0.01\}$.

\subsection{\texorpdfstring{Statistical convergence to the ground-truth OT map (\cref{sec:rate})}{Statistical convergence to the ground-truth OT map}}\label{sec:appen-convergence_nonlinear}

This section gives the detailed construction of the synthetic dataset used in \cref{sec:rate}.
Fix $R > 0$ and coefficients $a_{s} > 0$, $b_{s} \ge 0$ for $s=1,\dots,d$.  Define the potential and candidate OT map for $\boldsymbol{x} = [x_{1}, \ldots, x_{d}]^{\top} \in \mathbb{R}^{d}$ as:
\begin{equation}\label{eq:appen-nonlinear-potential}
    \phi(\boldsymbol{x}) := \frac{1}{2}\sum_{s=1}^{d} a_{s} x_{s}^2 + \frac{1}{4}\sum_{s=1}^{d} b_{s} x_{s}^4,
    \quad
    \mathbf{T}_0(\boldsymbol{x}) := \nabla \phi(\boldsymbol{x}) = \bigl(a_{s} x_{s} + b_{s} x_{s}^3\bigr)_{s=1}^{d},
\end{equation}
and let $P := \mathcal{N}(0, I_d)|_{R}$ be the standard Gaussian truncated to the ball of radius $R.$
The target distribution is $Q := (\mathbf{T}_0)_{\#} P$.
In our experiments we use $d=2, (a_1, a_2)=(1.5, 0.8), (b_1, b_2)=(0.3, 0.1)$, and $R=3.5$.
\cref{fig:appen-PQ_viz} visualizes empirical distributions of $P$ and $Q$ used in this analysis, and \cref{prop:appen-nonlinear-ot} shows that the construction in \cref{eq:appen-nonlinear-potential} yields a Lipschitz OT map.

\begin{figure}[h!]
    \vskip -0.1in
    \centering
    \includegraphics[width=0.55\linewidth]{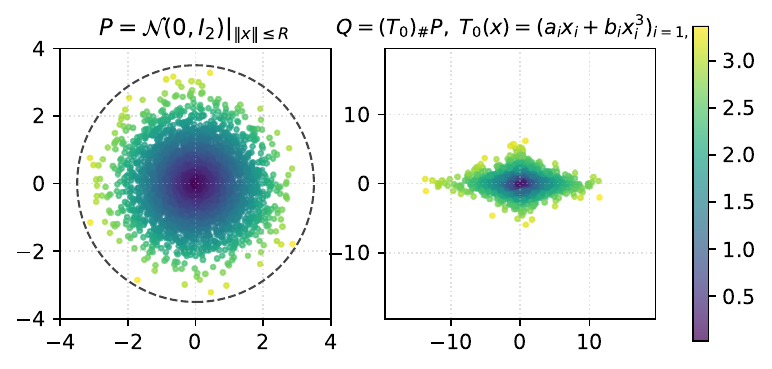}
    \caption{
    Source $P$ (left) and target $Q = (\mathbf{T}_0)_{\#}P$ (right) used in \cref{sec:rate}.
    The color of each point indicates the radius of data $\| \boldsymbol{x} \|.$
    }
    \label{fig:appen-PQ_viz}
\end{figure}

\begin{proposition}[Lipschitz OT map from $P$ to $Q$]\label{prop:appen-nonlinear-ot}
    Let $\phi$, $\mathbf{T}_0$, $P$, $Q$ be as in \cref{eq:appen-nonlinear-potential} with $a_{s} > 0$, $b_{s} \ge 0$, and $R<\infty$.
    Then, $\mathbf{T}_0$ is the unique OT map from $P$ to $Q,$ and is Lipschitz.
\end{proposition}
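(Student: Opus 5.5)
The plan is to combine the sufficiency part of Brenier's theorem (\cref{thm:brenier}) with an explicit computation of the Hessian of $\phi$. First, $\phi$ is convex on $\mathbb{R}^d$. Indeed,
\[
\nabla^2\phi(\boldsymbol{x}) = \operatorname{diag}\bigl(a_s + 3 b_s x_s^2\bigr)_{s=1}^d ,
\]
and each diagonal entry is at least $a_s > 0$. So $\phi$ is in fact strictly, even strongly, convex with modulus $\min_s a_s$. Next, $P = \mathcal{N}(0,I_d)|_R$ is absolutely continuous and compactly supported, hence has finite second moments. The map $\mathbf{T}_0$ is continuous, so $\mathbf{T}_0(\{\|\boldsymbol{x}\|\le R\})$ is compact. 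Therefore $Q = (\mathbf{T}_0)_\# P$ is also compactly supported and has finite second moments.

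For optimality, I will use the standard characterization behind \cref{thm:brenier}: a map of the form $\nabla\phi$ with $\phi$ convex that pushes $P$ to $Q$ is optimal, and it is the unique optimal map $P$-a.s. To verify optimality directly, I will show that the coupling $(\mathrm{id},\nabla\phi)_\# P$ attains the infimum in \cref{eq:kantorovich}. Take any $\Gamma \in \Pi(P,Q)$. Fenchel--Young gives $\phi(x) + \phi^*(y) \ge \langle x, y\rangle$, with equality when $y = \nabla\phi(x)$. Integrating against $\Gamma$ yields
\[
\int \langle x,y\rangle\, d\Gamma \le \int \phi\, dP + \int \phi^*\, dQ = \int \langle x, \nabla\phi(x)\rangle\, dP(x),
\]
where the equality uses $(\nabla\phi)_\# P = Q$. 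Since $\int \|x-y\|^2\, d\Gamma = \int\|x\|^2\, dP + \int\|y\|^2\, dQ - 2\int\langle x,y\rangle\, d\Gamma$, this shows that $\mathbf{T}_0$ minimizes the cost. Integrability of $\phi$ and $\phi^*$ is automatic because both supports are compact and both functions are finite there. Uniqueness then follows from the uniqueness clause of \cref{thm:brenier}, which applies because $P$ is absolutely continuous.

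For the Lipschitz property on the ball $B(0,R)$, which is convex, I will apply the mean value inequality:
\[
\|\mathbf{T}_0(x) - \mathbf{T}_0(x')\| \le \sup_{z\in B(0,R)} \|\nabla^2\phi(z)\|_{\mathrm{op}}\, \|x-x'\|.
\]
The Hessian is diagonal, so its operator norm is $\max_s(a_s + 3b_s z_s^2)$. Since $|z_s| \le R$, this is at most $\max_s(a_s + 3b_sR^2)$, which matches the constant $\approx 12.52$ quoted in \cref{sec:rate}.

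The only delicate step is justifying optimality without citing Brenier's theorem beyond its stated form. The Fenchel--Young argument above handles this cleanly, so I do not expect a real obstacle. The one thing to be careful about is noting that $\mathbf{T}_0$ is injective (by strict monotonicity), although this is not actually needed for the argument.
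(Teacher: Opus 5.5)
Your proposal is correct and follows the paper's proof step for step: the diagonal Hessian gives (strong) convexity, compact supports give finite second moments of $P$ and $Q$, $\nabla\phi$ is shown optimal, and the Hessian bound on the ball gives the Lipschitz constant $\max_s(a_s+3b_sR^2)$. The only difference is that the paper cites the Knott--Smith theorem (Theorem 2.12 of \cite{villani2003topics}) for both optimality and uniqueness, whereas you prove the optimality half directly via Fenchel--Young and take uniqueness from \cref{thm:brenier}; this is the same argument made explicit.
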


\begin{proof}
    Since $\phi$ is separable across coordinates, its Hessian is diagonal.
    That is, $ \nabla^2\phi(\boldsymbol{x}) = \operatorname{diag} \bigl(a_{s} + 3b_{s} x_{s}^2\bigr)_{s=1}^{d}, $
    which is strictly positive because $a_{s} > 0$ and $b_{s} \ge 0$.
    Hence $\phi$ is smooth and strictly convex, and lower-semicontinuous on $\mathbb{R}^d$.
    At the same time, $P$ admits a density (the truncated standard Gaussian density) supported on the compact ball $\{\|x\|\le R\}$ and hence has finite second moment.
    $\mathbf{T}_0$ is also smooth on this ball, since its image is contained in the bounded box $\{|y_{s}| \le a_{s} R + b_{s} R^3\}$, so $Q=(\mathbf{T}_0)_{\#}P$ also has finite second moment.
    
    By the Knott-Smith theorem (Theorem 2.12 of \cite{villani2003topics}), if $\phi:\mathbb{R}^d\to\mathbb{R}\cup\{+\infty\}$ is convex and lower-semicontinuous with $\nabla\phi$, and if both $P$ and $(\nabla\phi)_{\#}P$ have finite second moment, then $\nabla\phi$ is the unique OT map from $P$ to $(\nabla\phi)_{\#}P$ for the quadratic cost.
    The above two steps satisfy these conditions, so $\mathbf{T}_0=\nabla\phi$ is the optimal transport map from $P$ to $Q$.
    
    On the support $\{\|x\|\le R\},$ the diagonal Hessian satisfies $\nabla^2\phi(x)\preceq \operatorname{diag} \bigl(a_{s} + 3b_{s} R^2\bigr)_{s=1}^{d}$, so $\mathbf{T}_0$ is $L$-Lipschitz on this set with Lipschitz constant
    $ L = \max_{s\in[d]} \bigl(a_{s} + 3b_{s} R^2\bigr). $
    For $d=2$, our setting of $(a_1, a_2)=(1.5, 0.8)$, $(b_1, b_2)=(0.3, 0.1)$, $R=3.5$ gives $L\approx 12.52$.
    Hence, $\mathbf{T}_0$ is a Lipschitz OT map.
\end{proof}

\subsection{\texorpdfstring{Quality of maps: distribution matching and transport cost (\cref{sec:sim})}{Quality of maps: distribution matching and transport cost}}\label{sec:appen-simulation-full}

\paragraph{Detailed results and computation time}

\cref{fig:appen-simul} visualizes the 2D synthetic datasets used in \cref{sec:sim}, and we provide full results including standard deviations in \cref{tab:simuls_full}.

\begin{figure}[h!]
    \centering
    \includegraphics[width=0.5\linewidth]{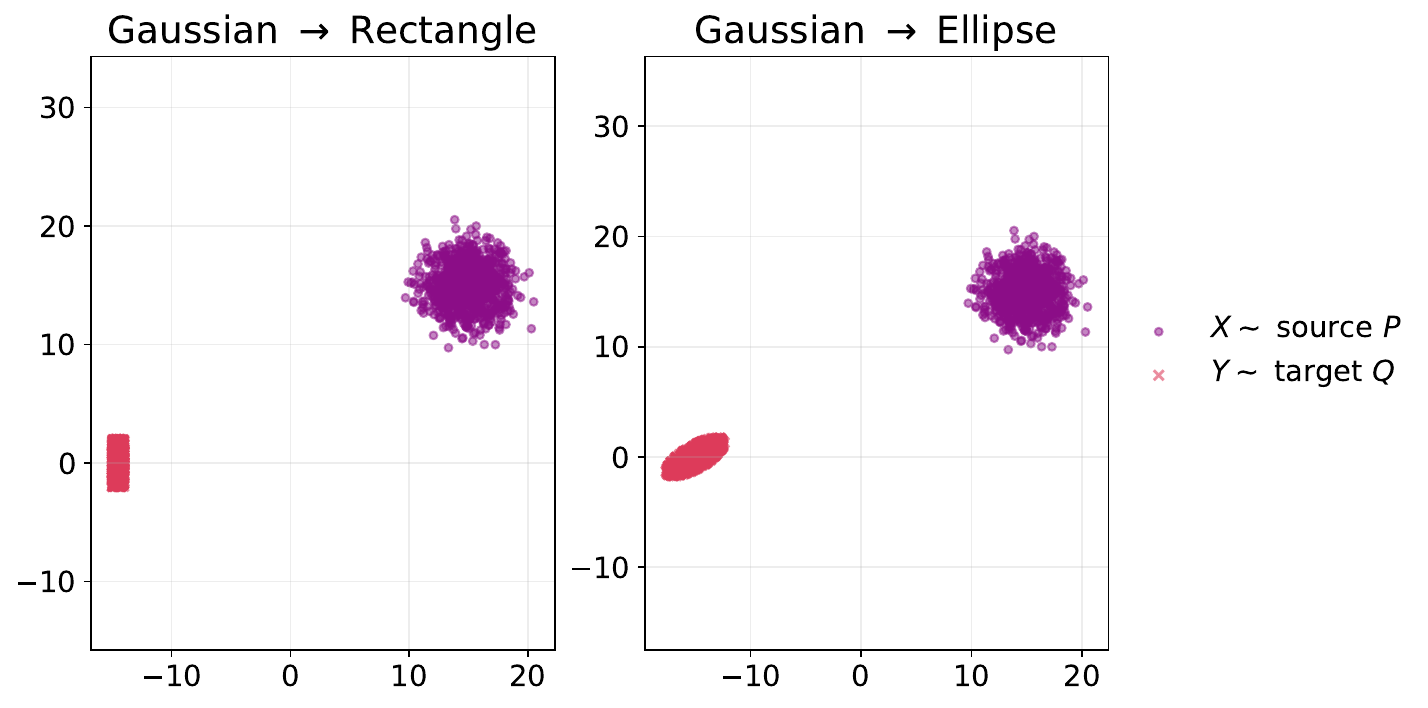}
    \caption{
    Source (purple) and target (red) data for the two synthetic cases used in \cref{sec:sim}.
    }
    \label{fig:appen-simul}
\end{figure}

\begin{table}[h!]
    \centering
    \small
    \caption{
    \textbf{Comparison of map quality: distribution matching and transport cost.}
    Results are reported as mean $\pm$ standard deviation across five random trials.
    Best values of \texttt{Wass} are \textbf{bolded}.
    }
    \label{tab:simuls_full}
        \begin{tabular}{llcc}
        \toprule
        Source $\to$ Target & Method & \texttt{Wass} $\downarrow$ & \texttt{TC} $\downarrow$ \\
        \midrule
        \multirow{6}{*}{Gaussian $\to$ Rectangle} & 1NN & $0.126\pm0.008$ & $1095.67\pm0.61$ \\
        & LSOT & \textbf{$0.111\pm0.003$} & $1095.69\pm0.14$  \\
        & ICNN & $0.211\pm0.005$ & $1097.51\pm0.09$ \\
        & OTP & $0.258\pm0.008$ & $1099.17\pm2.39$  \\
        & DIOTM & $0.123\pm0.007$ & $1098.21\pm2.35$  \\
        \rowcolor{RowHighlight}
        \cellcolor{white} & \textbf{BROT} & {\textbf{$0.110\pm0.003$}} & $1095.65\pm0.05$  \\
        \midrule
        \multirow{6}{*}{Gaussian $\to$ Ellipse} & 1NN & $0.207\pm0.075$ & $1122.30\pm4.60$  \\
        & LSOT & $0.172\pm0.036$ & $1124.35\pm0.59$  \\
        & ICNN & {$0.171\pm0.009$} & {$1122.54\pm1.28$} \\
        & OTP & $0.596\pm0.038$ & $1129.88\pm7.39$  \\
        & DIOTM & $0.228\pm0.020$ & $1123.99\pm5.45$  \\
        \rowcolor{RowHighlight}
        \cellcolor{white} & \textbf{BROT} & \textbf{$0.155\pm0.009$} & $1125.08\pm0.33$  \\
        \bottomrule
        \end{tabular}
\end{table}


\cref{tab:simuls_2} reports training and inference runtimes for every method on the datasets used in \cref{sec:sim}.
We calculate the mean elapsed time over $5$ random seeds, and normalize so that we denote BROT $= 100\%.$
1NN is up to $4.8\times$ slower than BROT at inference time, because for each test data point it runs a nearest neighbor search over the full training dataset.
LSOT is $1.5$-$2.7\times$ slower than BROT at training time: its first step learns the dual potentials by stochastic ascent over many iterations, whereas BROT computes the unregularized OT plan with a single discrete linear program.
The second step training cost is comparable for the two methods, as they use the same architectures.
The min-max learning baselines (OTP, ICNN, DIOTM) all train $8\text{-}50\times$ slower than BROT, and ICNN additionally diverges on the high-dimensional image representation space so the cost is significantly larger.

\begin{table}[h!]
    \centering
    \small
    \caption{
    \textbf{Comparison of computation time.}
    Relative training and inference computation times (BROT $= 100\%$).
    We report mean relative training and inference computation times (BROT $= 100\%$) over five random seeds, and synthetic cases are the average over the two datasets.
    }
    \label{tab:simuls_2}
    \begin{tabular}{lcccc}
        \toprule
        Method & \multicolumn{2}{c}{Synthetic cases} & \multicolumn{2}{c}{\textsc{AFHQv2} wild $\to$ cat} \\
        \cmidrule(lr){2-3}\cmidrule(lr){4-5}
        & Train (\%) & Inference (\%) & Train (\%) & Inference (\%) \\
        \midrule
        \midrule
        1NN \cite{manole2024pluginestimationsmoothoptimal}   & {20\%}  & {339\%} & {17\%}             & {486\%} \\
        LSOT \cite{seguy2018large}                           & {267\%} & {95\%}    & 154\%                     & {108\%} \\
        ICNN \cite{pmlr-v119-makkuva20a}  & {2129\%} & {87\%}             & {4985\%}     & {229\%} \\
        OTP \cite{choi2025overcoming}                        & {1078\%} & {105\%}            & 785\%                     & {286\%} \\
        DIOTM \cite{choi2025improving}                      & {1445\%} & {81\%}             & 2958\%                    & {205\%} \\
        \rowcolor{RowHighlight}
        \textbf{BROT}                            & 100\% & 100\%            & 100\%                     & {100\%} \\
        \bottomrule
    \end{tabular}
\end{table}


\subsection{\texorpdfstring{Application 1: single-cell perturbation prediction (\cref{sec:app_cell})}{Application 1: single-cell perturbation prediction}}\label{sec:appen-cell-full}

We provide the full results of the single-cell drug-perturbation prediction experiment in \cref{tab:appen-single-cell-full}, with mean and standard deviation across the $35$ drug perturbations.

\begin{table}[h!]
    \centering
    \small
    \caption{
    \textbf{Comparison of single-cell perturbation prediction performance.}
    The results are reported as mean $\pm$ standard deviation across the $35$ drug perturbations.
    Among the methods that satisfy $\texttt{MMD}\le 0.03$, the best value is \textbf{bolded} and the second-best is \underline{underlined} for each metric.
    }
    \label{tab:appen-single-cell-full}
    \vskip 0.1in
    \begin{tabular}{lcccc}
        \toprule
        Method & \texttt{MMD} $\downarrow$ & {$L_2$ difference $\downarrow$} & {$r$-std $\uparrow$} & {\texttt{TC} $\downarrow$} \\
        \midrule\midrule
        Identity & 0.028 \scriptsize{$\pm$ 0.029} & {1.213 \scriptsize{$\pm$ 0.899}} & {0.917 \scriptsize{$\pm$ 0.081}} & {0.000 \scriptsize{$\pm$ 0.000}} \\
        1NN \citep{manole2024pluginestimationsmoothoptimal} & \textbf{0.002 \scriptsize{$\pm$ 0.000}} & {\textbf{0.189} \scriptsize{$\pm$ 0.062}} & {\underline{0.990} \scriptsize{$\pm$ 0.009}} & {6.745 \scriptsize{$\pm$ 6.108}} \\
        LSOT \citep{seguy2018large} & 0.012 \scriptsize{$\pm$ 0.002} & {0.287 \scriptsize{$\pm$ 0.121}} & {0.982 \scriptsize{$\pm$ 0.013}} & {4.381 \scriptsize{$\pm$ 5.437}} \\
        ICNN \citep{pmlr-v119-makkuva20a} & \underline{{0.006} \scriptsize{{$\pm$ 0.001}}} & {{0.258} \scriptsize{{$\pm$ 0.096}}} & {{0.987} \scriptsize{{$\pm$ 0.008}}} & {{4.329} \scriptsize{{$\pm$ 5.751}}} \\
        OTP \citep{choi2025overcoming} & 0.035 \scriptsize{$\pm$ 0.029} & {1.435 \scriptsize{$\pm$ 0.852}} & {0.917 \scriptsize{$\pm$ 0.081}} & {0.514 \scriptsize{$\pm$ 0.141}} \\
        DIOTM \citep{choi2025improving} & \textbf{0.002} \scriptsize{$\pm$ 0.001} & {0.255 \scriptsize{$\pm$ 0.101}} & {\underline{0.990} \scriptsize{$\pm$ 0.010}} & {\underline{{3.993}} \scriptsize{$\pm$ 5.875}} \\
        \rowcolor{RowHighlight}
        \textbf{BROT} & {\textbf{{0.002}} \scriptsize{{$\pm$ 0.000}}} & {\underline{{0.194}} \scriptsize{{$\pm$ 0.076}}} & {\textbf{{0.993}} \scriptsize{{$\pm$ 0.005}}} & {\textbf{{3.928}} \scriptsize{{$\pm$ 5.356}}} \\
        \bottomrule
    \end{tabular}
    \vskip -0.1in
\end{table}


\subsection{\texorpdfstring{Application 2: unsupervised domain adaptation (\cref{sec:app_da})}{Application 2: unsupervised domain adaptation}}\label{sec:appen-da-full}

\cref{tab:appen-da-extended-std} reports the per-class mean accuracy in \cref{tab:da_extended}, with the standard deviation across the three random seeds. \textsc{USPS} contains $7{,}291$ source images, \textsc{MNIST} contains $60{,}000$ target training and $10{,}000$ test images, and \textsc{VisDA-17} contains $152{,}397$ synthetic source and $55{,}388$ real target images.

\begin{table}[h!]
    \centering
    \small
    \caption{
    \textbf{Comparison of unsupervised domain adaptation performance.}
    Per-class mean accuracy (\%) on the target-domain test data with standard deviations across three random seeds, reported as mean $\pm$ standard deviation.
    }
    \label{tab:appen-da-extended-std}
    \begin{tabular}{lcccc}
        \toprule
         & \multicolumn{2}{c}{\textsc{USPS} $\to$ \textsc{MNIST}} & \multicolumn{2}{c}{\textsc{VisDA-17} (synthetic $\to$ real)} \\
        \cmidrule(lr){2-3}\cmidrule(lr){4-5}
        Method & CLIP-ViT/B-32 & DINOv2-ViT/B-14 & CLIP-ViT/B-32 & DINOv2-ViT/B-14 \\
        \midrule
        \midrule
        1NN \cite{manole2024pluginestimationsmoothoptimal} & 80.34 \scriptsize{$\pm$ 0.34} & 75.25 \scriptsize{$\pm$ 0.77} & 74.17 \scriptsize{$\pm$ 0.08} & 74.39 \scriptsize{$\pm$ 0.01} \\
        LSOT \cite{seguy2018large} & 30.12 \scriptsize{$\pm$ 2.84} & 52.20 \scriptsize{$\pm$ 0.88} & \underline{76.87} \scriptsize{$\pm$ 0.63} & \underline{81.57} \scriptsize{$\pm$ 0.18} \\
        ICNN \cite{pmlr-v119-makkuva20a} & 75.76 \scriptsize{$\pm$ 4.63} & 54.70 \scriptsize{$\pm$ 21.08} & 74.12 \scriptsize{$\pm$ 1.07} & 77.77 \scriptsize{$\pm$ 0.41} \\
        OTP \cite{choi2025overcoming} & 49.98 \scriptsize{$\pm$ 4.79} & 57.36 \scriptsize{$\pm$ 5.18} & 76.15 \scriptsize{$\pm$ 0.42} & 64.51 \scriptsize{$\pm$ 0.37} \\
        DIOTM \cite{choi2025improving} & \underline{82.71} \scriptsize{$\pm$ 1.00} & \underline{77.46} \scriptsize{$\pm$ 0.51} & 76.45 \scriptsize{$\pm$ 0.55} & 27.05 \scriptsize{$\pm$ 35.31} \\
        \rowcolor{RowHighlight}
        \textbf{BROT} & \textbf{84.83} \scriptsize{$\pm$ 1.13} & \textbf{77.60} \scriptsize{$\pm$ 0.24} & \textbf{79.34} \scriptsize{$\pm$ 0.07} & \textbf{81.59} \scriptsize{$\pm$ 0.30} \\
        \bottomrule
    \end{tabular}
\end{table}

\paragraph{Design of classifier head}

The classifier head designs used (linear for \textsc{VisDA-17}, MLP for \textsc{USPS}$\to$\textsc{MNIST}) are based on the performance of the Identity (no transport used when training the classifier) baseline.
Specifically, on \textsc{VisDA-17}, linear classifiers trained on source representations already attain $60$-$68\%$ and stay within $5$\%p of MLP heads, indicating that the classes are largely linearly separable in both the two representation spaces, consistent with their linear-probe protocols.
In contrast, on \textsc{USPS}$\to$\textsc{MNIST}, linear classifiers attain only $\sim 5\%$ under either encoder, about $50$\%p below MLP heads, suggesting that handwritten digits require a non-linear head.

\paragraph{Comparison with end-to-end OT-based domain adaptation methods}

End-to-end OT-based domain adaptation methods such as DeepJDOT \cite{damodaran2018deepjdot}, JUMBOT \cite{fatras2021jumbot}, and ELOT \cite{yang2026elastic} are more tailored for domain adaptation than BROT.
They jointly train the representation encoder, the classifier head, and the transport plan.
In contrast, BROT estimates the OT map solely from the Euclidean distances between source and target data, and does not train the encoder.
To see whether BROT can nevertheless achieve competitive performance, we compare it against these three end-to-end methods.

\cref{tab:appen-da-end2end} summarizes the results.
BROT attains a per-class mean accuracy of $79.34$\% with CLIP-ViT/B-32 and $81.59$\% with DINOv2-ViT/B-14, both higher than the end-to-end methods (DeepJDOT $68.0$, JUMBOT $72.5$, ELOT $76.32$).
The results suggest that, when strong frozen visual encoders are available, BROT can be competitive even without encoder fine-tuning or a task-specific training objective.

For a more fair comparison, we also report the performance of BROT using the fixed representations obtained from pretrained ResNet-50.
BROT attains $70.55$\%, which exceeds DeepJDOT but falls below JUMBOT and ELOT, as expected since BROT does not jointly train the encoder.
This gap suggests that the joint training can modestly improve domain adaptation performance, however, the relatively small gap indicates that BROT remains competitive even under the same encoder.

\begin{table}[h!]
  \centering
  \small
  \caption{
  \textbf{Comparison of domain adaptation performance for BROT and end-to-end OT-based domain adaptation methods.}
  The performance is evaluated by the per-class mean accuracy (\%) on the target-domain test data, following \citep{peng2017visdavisualdomainadaptation}.
  The results of BROT are averaged over three random seeds, while baseline results are copied from the original papers.
  }
  \label{tab:appen-da-end2end}
  \begin{tabular}{lcccc}
    \toprule
    \multirow{2}{*}{Method} & \multicolumn{2}{c}{Train} & \multirow{2}{*}{Backbone} & \multirow{2}{*}{Accuracy} \\
    \cmidrule(lr){2-3}
    & Encoder & Head & & \\
    \midrule
    DeepJDOT \cite{damodaran2018deepjdot} & \cmark & \cmark & ResNet-50 & 68.00 \\
    JUMBOT   \cite{fatras2021jumbot}      & \cmark & \cmark & ResNet-50 & 72.50 \\
    ELOT     \cite{yang2026elastic}       & \cmark & \cmark & ResNet-50 & 76.32 \\
    \midrule
    \rowcolor{RowHighlight}
      & \xmark & \cmark & CLIP-ViT/B-32   & 79.34 \\
    \rowcolor{RowHighlight}
      & \xmark & \cmark & DINOv2-ViT/B-14 & 81.59 \\
    \rowcolor{RowHighlight}
    \multirow{-3}{*}{\textbf{BROT}}
      & \xmark & \cmark & ResNet-50       & 70.55 \\
    \bottomrule
  \end{tabular}
\end{table}

\subsection{Sensitivity analysis: DNN architecture and Jacobian penalty}\label{sec:appen-sensitivity}

The theoretically designed DNN architecture in \cref{thm:dnn_main} depends on the sample sizes $n$ and $m$, whereas our experiments in \cref{sec:exp_cdot} use a fixed MLP architecture with the Jacobian penalty.
The sample size-dependent architecture is mainly a theoretical tool for controlling approximation error, as done in DNN-based nonparametric regression literature \cite{YAROTSKY2017103,Schmidt-Hieber20201875,suzuki2018adaptivity,BELOMESTNY2023242,shen2024differentiableneuralnetworksrepu}, so \cref{thm:dnn_main} can be interpreted as an existence result rather than as a practical architectural prescription.
Furthermore, the Lipschitz constraint in \cref{thm:dnn_main} is practically enforced by a Jacobian penalty in our experiments, which encourages Lipschitz continuity on the training data.
We therefore assess the numerical sensitivity of BROT to architecture choice and the Jacobian penalty weight, on the synthetic datasets used in \cref{sec:rate}.

For $n \in \{500,1000,2000,3000,5000,8000,10000\},$ we compare (i) three architectures with the Jacobian penalty weight fixed to $10^{-2}$ and (ii) three Jacobian penalty weights in $\{10^{-4},10^{-3},10^{-2}\}$ with the fixed architecture used for the main results in \cref{thm:dnn_main}, under five random seeds.
As shown in \cref{fig:appen-arch-robustness}, the estimation errors are broadly similar across the three architectures and the penalty weights, although the largest penalty weight ($10^{-2}$) leads to a moderately larger error at large $n$.
Overall, BROT performs reasonably well across these choices, suggesting that the practical implementation of BROT (i.e., using a standard MLP with a moderate Jacobian penalty weight) can be easily done without extensive tuning.

\begin{figure}[h!]
    \centering
    \includegraphics[width=0.85\linewidth]{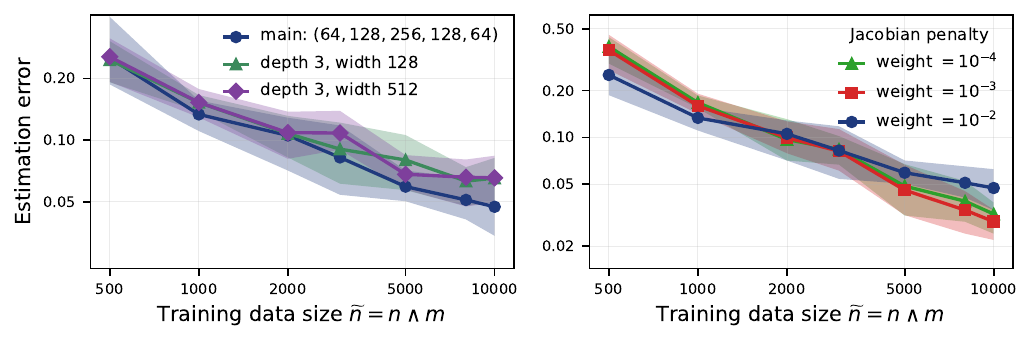}
    \caption{
    \textbf{Sensitivity of DNN architecture and Jacobian penalty to the statistical convergence to the ground-truth OT map.}
    (Left) Three MLP architectures (Jacobian penalty weight fixed to $10^{-2}$).
    (Right) Three Jacobian penalty weights (architecture fixed).
    }
    \label{fig:appen-arch-robustness}
\end{figure}

\section{Background on statistically optimal estimators}\label{sec:baselines_details}

\paragraph{Wavelet-based estimator}

\cite{hutter2020minimaxestimationsmoothoptimal} proposed an estimator for smooth OT maps based on a wavelet approximation of the potential function.
They considered the empirical dual objective
$$
    S_n(\phi) := \int \phi(x)  dP_n(x) + \int \phi^{*}(y)  dQ_n(y),
$$
where $\phi^{*}(y) := \sup_{x \in \Omega} \{\langle x,y\rangle - \phi(x)\}$ denotes the convex conjugate of $\phi$.
They then restricted $\phi$ to a finite-dimensional wavelet class $ \mathcal{F}^{\textup{W}}_J := \{ \phi_w(x) = \sum_{j=1}^{J} w_j \psi_j(x) : w = [w_{1}, \ldots, w_{J}]^{\top} \in \mathbb{R}^{J} \}, $
where $\psi_j, j \in [J]$ are suitable wavelet basis functions.
The wavelet-based estimator is then defined as
$ \widehat{\phi}^{\textup{W}} \in \argmin_{\phi \in \mathcal{F}^{\textup{W}}_J} S_n(\phi), \widehat{\mathbf T}^{\textup{W}}_{nm} := \nabla \widehat{\phi}^{\textup{W}}. $
As a result, $\widehat{\mathbf T}^{\textup{W}}_{nm}$ attains the minimax optimal convergence rate.
However, selecting the basis functions $\psi_{j}, j \in [J]$, as well as the number of basis functions $J$, and computing the coefficients $w_{j}, j \in [J]$, requires solving a high-dimensional constrained optimization problem whose complexity grows quickly with the dimension, which severely limits its applicability, as discussed in \cite{hutter2020minimaxestimationsmoothoptimal} and \cite{manole2024pluginestimationsmoothoptimal}.

\paragraph{Nearest neighbor estimator}

As a complementary line of work, \cite{manole2024pluginestimationsmoothoptimal} introduced an estimator for Lipschitz OT maps that also attains the minimax optimal rate.
Based on the barycentric map $\mathbf T_{\hat\Gamma}$, which is defined only on the observed source data $X_i, i \in [n],$ they extended it to all $x \in \Omega$ using Voronoi cells $V_{i}, i \in [n]$:
$$
\widehat{\mathbf T}^{1\textup{NN}}_{nm}(x)
= \sum_{i=1}^{n} \mathbf T_{\hat\Gamma}(X_i)   \mathbb{I} \left( x \in V_i \right),
$$
where
$ V_i:=\big\{x\in\Omega:\ \|x-X_i\|\le \|x-X_j\|,\ \forall j\neq i\big\}. $
As a result, $\widehat{\mathbf T}^{1\textup{NN}}_{nm}$ achieves the optimal convergence rate when $\mathbf T_0$ is Lipschitz.
On the downside, inference at a new point $x$ requires a nearest neighbor search among the $n$ training data, so both memory and latency scale with $n$, which can be prohibitive for large-scale applications.

\paragraph{Estimators in general classes of functions}

More recently, \cite{10.1214/24-AOS2482} proposed a general estimation framework for rich classes of functions of convex potentials.
Similar to the wavelet-based estimator, they considered $ \widehat{\phi}_{\mathcal{F}} \in \argmin_{\phi\in \mathcal{F}} S_{n}(\phi)$ and define $ \widehat{\mathbf{T}}_{\mathcal{F}} := \nabla\widehat{\phi}_{\mathcal{F}}, $ where $\mathcal{F}$ is a given candidate class of potentials.
They also derived oracle inequalities showing that the risk $\mathbb{E}\Vert \widehat{\mathbf{T}}_{\mathcal{F}} - \mathbf{T}_0 \Vert_{L^2(P)}^{2}$ decomposes into an approximation term and a complexity term governed by the metric entropy of $\mathcal{F}$, and they showed that, for a suitably chosen class $\mathcal{F}$ such as ReQU DNNs, the resulting estimator attains the minimax optimal rate.

From a computational viewpoint, however, this plug-in strategy remains challenging.
Evaluating the dual objective $S_n(\phi)$ requires computing the conjugate $\phi^{*}(y) = \sup_{x} \{\langle x,y\rangle - \phi(x)\}$ at many target points $y.$
When $\phi$ is parameterized by a neural network, each conjugate evaluation amounts to solving a high-dimensional inner optimization problem over $x$, which is nonconvex in general and as hard as global optimization.
Input convex neural networks (ICNNs) \cite{pmlr-v119-makkuva20a} offer a partial algorithmic remedy: by designing the architecture so that $\phi$ is convex, the inner problem defining $\phi^{*}(y)$ becomes a convex optimization and can be approximately solved by standard first-order methods.
However, despite the practical implementation, the statistical optimality of ICNN-based OT estimators remains unclear, since the approximation capabilities of ICNNs (i.e., their ability to approximate smooth convex functions) are not yet well-investigated, which is a promising topic.


\section{Broader impacts}\label{sec:appen-broader}

This work is a methodological contribution to optimal transport map estimation and does not propose a deployed system or release a generative model intended for direct societal use.
The estimator can in principle be applied within downstream pipelines such as domain adaptation and biological perturbation prediction, where the broader impact would be inherited from the surrounding application rather than from the OT estimator itself.
We therefore do not foresee specific negative societal effects beyond those already associated with the application areas in which OT methods are deployed.


\end{document}